	\providecommand\BibTeX{{%
			\normalfont B\kern-0.5em{\scshape i\kern-0.25em b}\kern-0.8em\TeX}}}
\newcommand{\cmark}{\ding{51}}%
\newcommand{\xmark}{\ding{55}}%
\newtheorem{remark}{Remark}
\newcommand{\algname}{\textsf{INTERACT }}
\newcommand{\algnamens}{\textsf{INTERACT}}
\newcommand{\algtname}{\textsf{SVR-INTERACT }}
\newcommand{\algtnamens}{\textsf{SVR-INTERACT}}
\newcommand{\bx}{\bar{\x}}
\newcommand{\by}{\bar{\y}}
\renewcommand{\d}{\mathbf{d}}
\newcommand{\bd}{\bar{\d}}
\newcommand{\Eb}{\mathbb{E}}
\newcommand{\I}{\mathbf{I}}
\newcommand{\Lc}{\mathcal{L}}
\newcommand{\M}{\mathbf{M}}
\newcommand{\Mt}{\widetilde{\mathbf{M}}}
\newcommand{\Nc}{\mathcal{N}}
\newcommand{\p}{\mathbf{p}}
\newcommand{\bp}{\bar{\p}}
\newcommand{\Rb}{\mathbb{R}}
\renewcommand{\u}{\mathbf{u}}
\renewcommand{\v}{\mathbf{v}}
\newcommand{\bu}{\bar{\mathbf{u}}}
\newcommand{\W}{\mathbf{W}}
\newcommand{\x}{\mathbf{x}}
\newcommand{\y}{\mathbf{y}}
\newcommand{\1}{\mathbf{1}}
\newcommand{\ot}{\1\otimes}
\newtheorem{thm}{Theorem}
\newtheorem{cor}[thm]{Corollary}
\newtheorem{lem}{Lemma}
\newtheorem{defn}{Definition}
\newtheorem{assum}{Assumption}
\begin{document}


\title[INTERACT: Low Sample and Communication Complexities in Decentralized Bilevel Learning over Networks]{INTERACT: Achieving Low Sample and Communication Complexities in Decentralized Bilevel Learning over Networks}

%
%
%
%
%
%

\author{Zhuqing Liu$^1$, Xin Zhang$^{2}$, Prashant Khanduri$^{1,3}$, Songtao Lu$^{4}$, and Jia Liu$^{1}$
}
\affiliation{
	\institution{$^1$Department of Electrical and Computer Engineering, The Ohio State University}
	\institution{$^2$Department of Statistics, Iowa State University}
	\institution{$^3$Department of Electrical and Computer Engineering, University of Minnesota}
	\institution{$^4$IBM Research AI, IBM Thomas J. Watson Research Center}				
	\country{ }
}

\renewcommand{\shortauthors}{Z. Liu, X. Zhang, P. Khanduri, S. Lu and J. Liu}

\begin{abstract}
In recent years, decentralized bilevel optimization problems have received increasing attention in the networking and machine learning communities thanks to their versatility in modeling decentralized learning problems over peer-to-peer networks (e.g., multi-agent meta-learning, multi-agent reinforcement learning, personalized training, and Byzantine-resilient learning).
However, for decentralized bilevel optimization over peer-to-peer networks with limited computation and communication capabilities, how to achieve low sample and communication complexities are two fundamental challenges that remain under-explored so far.
In this paper, we make the first attempt to investigate the class of decentralized bilevel optimization problems with nonconvex and strongly-convex structure corresponding to the outer and inner subproblems, respectively.
Our main contributions in this paper are two-fold:
i) We first propose a deterministic algorithm called \algname (\ul{in}ner-gradien\ul{t}-d\ul{e}scent-oute\ul{r}-tr\ul{ac}ked-gradien\ul{t}) that requires the sample complexity of $\mathcal{O}(n \epsilon^{-1})$ and communication complexity of $\mathcal{O}(\epsilon^{-1})$ to solve the bilevel optimization problem, where $n$ and $\epsilon > 0$ are the number of samples at each agent and the desired stationarity gap, respectively.
ii) To relax the need for full gradient evaluations in each iteration, we propose a stochastic variance-reduced version of \algname (\algtnamens), which improves the sample complexity to $\mathcal{O}(\sqrt{n} \epsilon^{-1})$ while achieving the same communication complexity as the deterministic algorithm. 
To our knowledge, this work is the first that achieves both low sample and communication complexities for solving decentralized bilevel optimization problems over networks.
Our numerical experiments also corroborate our theoretical findings.

\end{abstract}

\maketitle

\section{Introduction}\label{Section: introduction}

In recent years, fueled by the rise of machine learning and artificial intelligence in edge networks, decentralized bilevel optimization problems have received increasing attention in the networking and machine learning communities.
This is due to the versatility of decentralized bilevel optimization in supporting many decentralized learning paradigms over peer-to-peer networks, such as the {\em multi-agent versions} of meta learning \cite{rajeswaran2019meta,liu2021boml,rajeswaran2019meta}, hyperparameter optimization problem\cite{mackay2019self, okuno2021lp}, area under curve (AUC) problems~\cite{liu2019stochastic,qi2021stochastic}, and reinforcement learning\cite{hong2020two, zhang2020bi}.
%
To date, however, there remain many challenges and open problems in decentralized bilevel learning over peer-to-peer networks.
Two of the most fundamental challenges in decentralized bilevel optimization are how to achieve low sample and communication complexities.
The need for decentralized bilevel optimization with low sample- and communication-complexities is particularly compelling in peer-to-peer edge networks, where the nodes (mobile devices, sensors, UAVS, etc.) are typically limited in their computation and communication capabilities.

Mathematically, decentralized bilevel optimization problems share the same structure as their single-agent counterpart in that such problems contain an outer objective function that is in turn dependent on the optimal parameter values of an inner objective.
Interestingly, the single-agent version of bilevel optimization is a class of challenging optimization problems in its own right due to the inherent non-convexity in these problems, and thus having received a significant amount of attention recently.
To date, many algorithmic ideas have been proposed for single-agent bilevel optimization problems, such as double-loop iterative methods~\cite{ghadimi2018approximation,ghadimi2020single1,ji2021bilevel}, single-level approach for reformulated bilevel problems~\cite{kunapuli2008classification,moore2010bilevel}, and simultaneous upper-lower-level updates~\cite{khanduri2021MSTSA, chen2021single,khanduri2021near,guo2021randomized} (see Section~\ref{Section: related} for more detailed discussions).

Compared to conventional single-agent bilevel optimization problems, designing effective and efficient algorithms for solving decentralized bilevel stochastic optimization problems remains under-explored. 
This is in large part due to the fact that using techniques designed for single-agent bilevel optimization faces several technical challenges when utilized for decentralized bilevel optimization. 
In fact, most of aforementioned algorithmic ideas for single-agent bilevel optimization are not applicable in decentralized multi-agent version of bilevel optimization.
%
%
%
One of the fundamental reasons is that, instead of having only one optimization task in the inner subproblem, there are {\em multiple} inner tasks that need to be handled in the decentralized multi-agent setting, which renders techniques for single-agent bilevel optimization infeasible and often implies the challenges of efficiency and scalability.
%
%
%
%

To address the efficiency and scalability for solving bilevel optimization problem with multiple inner subproblems, a natural idea is to leverage the decentralized network-consensus optimization approach~\cite{zhang2021gt,yau2022docom}, where multiple agents distributively and collaboratively solve a global learning task.
However, in developing network-consensus approaches for decentralized bilevel optimization, two fundamental challenges arise:
First, 
the tightly coupled inner-outer mathematical structure, together with the decentralized nature and the non-convexity of the outer problem, makes the design and theoretical analysis of the algorithms far more challenging compared to solving conventional single-level minimization problems. 
%
Second, due to the potential low-speed network connections and the large variability of stochastic gradient-type information in geographically-dispersed edge networks, network-consensus approaches for decentralized bilevel optimization are very sensitive to communication and sample complexities. 
These challenges motivate us to design effective and efficient network-consensus-based algorithms for decentralized bilevel learning, which strike a good balance between sample  and communication complexities.

The major contribution of this paper is that we propose a series of new algorithmic techniques to overcome the aforementioned challenges and achieve both low sample and communication complexities for decentralized bilevel optimization. Our main technical contributions are summarized below:
\begin{list}{\labelitemi}{\leftmargin=1em \itemindent=-0.09em \itemsep=.2em}
	\item 
	We propose a decentralized bilevel optimization algorithm called \algname (\ul{in}ner-gradien\ul{t}-d\ul{e}scent-oute\ul{r}-tr\ul{ac}ked-gradien\ul{t}). We show that, \algname enjoys a convergence rate of $\mathcal{O}(1/T)$ to achieve an $\epsilon$-stationary point, where $T$ is the maximum number of iterations. 
	As a consequence, \algname achieves $[\mathcal{O}(n\epsilon^{-1})$, $\mathcal{O}(\epsilon^{-1})]$ sample-communication complexity per agent, where $n$ is the total data size at each agent and $\epsilon$ is the desired accuracy.
	To our knowledge, \algname is the first algorithm for decentralized bilevel optimization with such low sample and communication complexities (please see Table~\ref{tab: Comparison}).

	\item To lower the sample complexity of \algname, we propose another enhanced version called \algtname using \underline{s}tochastic \underline{v}aiance \underline{r}eduction techniques, which further reduces the sample complexity of \algname while retaining the same communication complexity order.
	Specifically, we show that \algtname retains the same $\mathcal{O}(\epsilon^{-1})$ communication complexity as of \algname while achieving lower sample complexity of $\mathcal{O}(\sqrt{n}\epsilon^{-1})$.

	\item To examine the performance of algorithm \algname and \algtname and verify our theoretical results, we conduct experiments on meta-learning tasks with the MNIST~\cite{lecun2010mnist} and CIFAR-10~\cite{krizhevsky2009learning}. Our results show that the proposed \algname and \algtname algorithms outperform other baseline algorithms. Also, we numerically show that \algtname enjoys low sample and communication complexities.

\end{list}
The rest of the paper is organized as follows.
In Section~\ref{Section: related}, we review related literature.
%
In Section~\ref{Section: preliminary}, we provide the preliminaries of the decentralized bilevel optimization problems.
In Section~\ref{INTERACT} and Section~\ref{SVR-INTERACT}, we propose two algorithms, namely \algname and \algtname. Also, the convergence rate, communication complexity, and sample complexity of these two algorithms are provided.
%
%
%
Section~\ref{Section: experiment} provides numerical results to verify our theoretical findings, and Section \ref{Section: conclusion} concludes this paper.

\section{Related Work}\label{Section: related}

\begin{table}[t!]
	\caption{Comparisons among algorithms for bilevel optimization problems, where $\epsilon$ denotes the $\epsilon$-stationary point defined in (\ref{stationary point}) and $n$ is the size of dataset at each agent.  Fos BSA, we denote sample complexities as (Outer, Inner) as to achieve the $\epsilon$-staionary point the algorithm requires different number of outer and inner samples. Note that the rest of the algorithms require the same number of outer and inner samples to achieve $\epsilon$-stationary point. 
	}
	\label{tab}
	\begin{center}
		{\scriptsize
			\begin{tabular}{c c c  c c}
				\toprule
		{\bf	Algorithms} & {	\bf Sample  Complex.} & 	{\bf Decentralized} &	 {\bf Multi LL tasks} \\
			\midrule
						BSA \cite{ghadimi2018approximation} &  $(\mathcal{O}(\epsilon^{-2}), \mathcal{O}(\epsilon^{-3}))$ & \xmark& \xmark \\
			\midrule
				TTSA \cite{hong2020two} &  $\mathcal{O}(\epsilon^{-5/2})$ & \xmark& \xmark \\
		\midrule
				stocBiO \cite{ji2021bilevel} &  $\mathcal{O}(\epsilon^{-2})$ & \xmark& \xmark \\
		\midrule
				MSTSA \cite{khanduri2021MSTSA} &  $\mathcal{O}(\epsilon^{-2})$ & \xmark & \xmark \\
			\midrule
					STABLE \cite{chen2021single} &  $\mathcal{O}(\epsilon^{-2})$ & \xmark & \xmark \\
	\midrule
					ALSET \cite{chen2021tighter} &  $\mathcal{O}(\epsilon^{-2})$ & \xmark & \xmark \\
	\midrule
				SUSTAIN \cite{khanduri2021near}  &  {$\mathcal{O}(\epsilon^{-1.5})$} & \xmark& \xmark  \\
		\midrule
				VRBO \cite{yang2021provably}  &  {$\mathcal{O}(\epsilon^{-1.5})$} & \xmark& \xmark  \\
	\midrule
						RSVRB \cite{guo2021randomized}  &  {$\mathcal{O}(\epsilon^{-1.5})$} &  \xmark& \cmark \\
								\midrule
						ITD-BiO/AID-BiO \cite{ji2021bilevel} &  $\mathcal{O}(n \epsilon^{-1})$ & \xmark& \xmark \\
			\midrule
				\cellcolor{blue!8}	{\textbf{\algname (Ours) }} & 	\cellcolor{blue!8} {$\mathcal{O}({n}\epsilon^{-1})$} &	\cellcolor{blue!8} \cmark& 	\cellcolor{blue!8}\cmark  \\
		\midrule
			\cellcolor{blue!8}	{\textbf{\algtname (Ours) }} &	\cellcolor{blue!8} {$\mathcal{O}({\sqrt{n}}K\epsilon^{-1}+n)$} & 	\cellcolor{blue!8}\cmark& 	\cellcolor{blue!8} \cmark \\
				\bottomrule
			\end{tabular}
		}
	\end{center}
	\vspace{-.15in}
	\label{tab: Comparison}
\end{table}

\textbf{1) Bilevel Optimization Approaches:}
Classical approaches to solve bilevel optimization problems include: i) reformulating the bilevel problem to a single-level problem and replacing the inner-level problem by its KKT conditions \cite{shi2005extended}, or ii) utilizing penalty based approaches \cite{mehra2021penalty}.  
%
%
%
Motivated by various machine learning applications, gradient-based techniques have recently become the most popular strategies for solving bilevel optimization problems.
A variety of explicit gradient-based methods have been investigated (e.g., AID-based methods \cite{rajeswaran2019meta, ji2021bilevel} and ITD-based methods \cite{pedregosa2016hyperparameter, ji2021bilevel}). 
Due to large memory requirements and high sample complexities, these methods face challenges when implemented for machine learning applications, especially if the training data sizes are large. 
%
%

To overcome this challenge, a stochastic gradient descent (SGD) based double-loop algorithm called BSA \cite{ghadimi2018approximation} was recently proposed for bilevel optimization. BSA provided the first finite-time convergence guarantees for solving non-convex-strongly-convex bilevel optimization problem. 
The convergence performance of BSA was improved by stocBiO \cite{ji2021bilevel}, however at the cost of evaluating very large batch-size gradients in each iteration for both outer and inner subproblems (see Table \ref{tab: Comparison} for a detailed comparison). 
Recently, motivated by sequential games, many single-loop algorithms have been proposed for solving the bilevel problems, where the outer and inner level problem's iterates are updated simultaneously~\cite{hong2020two, khanduri2021MSTSA, chen2021single,  khanduri2021near, yang2021provably,guo2021randomized}. 
Initially, it was shown in \cite{hong2020two} that TTSA, a vanilla SGD-based algorithm that updates inner and outer level problem's iterates simultaneously, requires $\mathcal{O}(\epsilon^{-5/2})$ samples of both inner and outer-level functions to achieve $\epsilon$-stationarity. 
Later, this rate was improved by MSTSA \cite{khanduri2021MSTSA} and STABLE \cite{chen2021single} to $\mathcal{O}(\epsilon^{-2})$ by utilizing momentum-based gradient estimators for the outer objective. 
Recently, \cite{chen2021tighter} improved the analysis of TTSA and showed that vanilla SGD-based updates for outer- and inner subproblems can in fact achieve $\mathcal{O}(\epsilon^{-2})$ rate. 
%
This performance was further improved by SUSTAIN \cite{khanduri2021near} and RSVRB \cite{guo2021randomized} by utilizing momentum-based gradient estimators for both outer- and inner-level updates. 
It was shown that SUSTAIN and RSVRB match the best complexity bounds of $\mathcal{O}(\epsilon^{-1.5})$ as the optimal SGD algorithm for solving single-level optimization problems.
%
Another approach to accelerate traditional SGD based methods is the family of ``variance-reduced'' (VR) methods, which are referred to as VRBO~\cite{yang2021provably}. 
VRBO adopts recursive gradient estimators to achieve the best known sample complexity of $\mathcal{O}(\epsilon^{-1.5})$.

Orthogonal to the aforementioned approaches, which focus developing bilevel optimization algorithms only for a single agent (centralized), in this paper, we focus on a multi-agent (decentralized) setting to solve a bilevel optimization problem over a network of agents. 
To this end, we propose two novel decentralized bilevel algorithms named \algname and \algtname using gradient descent and variance reduction techniques. 
We show that \algname achieves the sample complexity of $\mathcal{O}(n \epsilon^{-1})$, while \algtname achieves the sample complexity of $\mathcal{O}(\sqrt{n}\epsilon^{-1})$. 
%
%
%
%
%
We summarize and compare the complexity results of the above state-of-the-art algorithms in Table \ref{tab}.

\smallskip
\textbf{2) Decentralized Learning over Networks:}
From a mathematical point of view, conducting decentralized learning over a network amounts to a group of agents collectively solving an optimization problem. 
Most of the existing literature of decentralized learning are focused on the standard loss minimization formulation\cite{zhang2021gt,yau2022docom}, i.e., $\min_{\x \in \mathbb{R}^{d}}f(\x)$, where $f(\cdot)$ is the objective loss function and $\x$ denotes the global model parameters to be learned, and $d$ is the model dimension.
In the literature,  a wide range of machine learning applications can be modeled by the standard decentralized loss minimization formulation (e.g., robotic network  \cite{kober2013reinforcement,polydoros2017survey}, network resource allocation \cite{jiang2018consensus,rhee2012effect}, power networks \cite{callaway2010achieving,glavic2017reinforcement}).
Some recent works, \cite{mateos2015distributed,liu2019decentralized2,liu2020decentralized,zhang2021taming} studied decentralized min-max optimization problems, i.e., $\min_{\x \in  \mathbb{R}^{d_1}} \max_{\y \in  \mathbb{R}^{d_2}} f(\x,\y)$, which are a special case (with same outer and inner level objective) of bilevel optimization problems.
%
Unfortunately, studies on solving general decentralized bilevel optimization problems are still limited. 
Existing works in \cite{baky2009fuzzy,toksari2015interactive} applied the fuzzy goal programming method for linearization of functions based on Jacobian matrix for solving multiobjective bi-level problem. 
However, none of these work tackles the decentralized bilevel optimization problem via the network consensus approach. 

%

\section{System Model and Problem Formulation}\label{Section: preliminary}

In this section, we will describe the system model and the problem formulation first.
Then, we will provide several application examples to further motivate the decentralized nonconvex-strongly-convex bilevel optimization problems studied in this paper.

\smallskip
{\bf 1) Network Consensus Formulation:}
We represent the underlying peer-to-peer communication network with $m$ agents by a graph $\mathcal{G}=(\mathcal{N}, \mathcal{L})$, where $\mathcal{N}$ and $\mathcal{L}$ are the set of agents and edges, respectively, with $|\mathcal{N}|=m$. %
Each agent can share information with its neighbors.
For agent $i \in [m]$, we denote its set of neighbors by $\Nc_i $, i.e., $\Nc_i \triangleq \{j \in \Nc,: (i,j)\in \Lc\}$.
The decentralized bilevel optimization (DBO) problem can be written as:
\begin{align} \label{problem}
	\min _{{\x_i} \in \mathbb{R}^{d_1} } \ell({\x})&=	 \frac{1}{m}\sum_{i=1}^m \ell_i({\x}_i)=\frac{1}{m}\sum_{i=1}^m f_i \left({\x}_i, {\y}_i^{*}({\x_i})\right) \notag\\
	&=\frac{1}{mn}\sum_{i=1}^m\sum_{j=1}^nf_{i} \left({\x}_i, {\y}_i^{*}({\x_i});\xi_{ij}\right), \notag\\
	\text{s.t.} ~~ {\y}_i^{*}({\x_i}) &= \arg\min_{\y_i} g_i({\x}_i, {\y}_i )= \arg\min_{\y_i}  \frac{1}{n}\sum_{j=1}^ng_{i}({\x}_i, {\y}_i ;\xi_{ij}), \notag\\
	\x_i&=\x_j, ~~ \text{if }~~ i, j \in \mathcal{L},
\end{align}
%
%
%
where $\x_i$ and $\y_i$ are the local copies of the outer and inner-level variables at agent $i \in [m]$. 
Each agent has access to a local dataset of size $n$.
The local loss is defined as: $ f_i ({\x}_i, {\y}_i^{*}({\x_i})) \coloneqq \frac{1}{n} \sum_{j = 1}^n f_i({\x}_i, {\y}_i ;\xi_{ij})$.
The equality constraint in \eqref{problem} ensures a ``consensus" among the outer level variables of each individual agents. 
%
Our goal is to solve Problem~\eqref{problem} in a {\em decentralized} manner, where $\ell(x)$ is non-convex while the inner-level funtions $g_i({\x},{\y})$ are strongly-convex with respect to $\y$ for all $i \in [m]$. In the rest of this paper, we will refer to \eqref{problem} as a decentralized  nonconvex-strong-convex bilevel problem. 

Next, we define the notion of $\epsilon$-stationary point. 
We define $\{\x_i,\y_i, \forall i \in [m]\}$ as an $\epsilon$-stationary point if it satisfies:
\begin{align}  \label{stationary point}
	&\underbrace{\frac{1}{m} \sum\nolimits_{i=1}^{m}\left\|\mathbf{x}_{i}-\overline{\mathbf{x}}\right\|^{2}}_{\substack{\mathrm{Consensus}~ \mathrm{Error}}} + \underbrace{\|{\y}^* \!-\! {{{\y}}}\|^2 }_{\substack{\mathrm{Lower-Level} \\ \mathrm{Error}}}+  \underbrace{\| \nabla \ell(\bar\x ) \|^2 }_ {\substack{\mathrm{Stationarity} \\ \mathrm{Error} }} \le \epsilon,
\end{align}
where $\bar{\x} \triangleq \frac{1}{m} \sum_{i=1}^{m} \x_{i}$, ${\y} \triangleq [\y_1^\top,...\y_m^\top]^\top$, and ${\y}^\ast \triangleq [{\y_1^\ast}^\top,...{\y_m^\ast}^\top]^\top$. 
Next, we define the notions of sample and communication complexities \cite{sun2020improving} of a decentralized algorithm to achieve an $\epsilon$-stationary point defined in \eqref{stationary point}. 
\begin{defn}[Sample Complexity]
	The sample complexity is defined as the total number of incremental first-order oracle (IFO) calls required per node for an algorithm to reach an $\epsilon$-stationary point. Note that we define one IFO call as the evaluation of the (stochastic) gradient of upper and lower level problems at node $i \in [m]$.
\end{defn}

\begin{defn}[Communication Complexity]
	We define a communication round as the sharing and receiving of local parameters by each node from all its neighboring nodes. Then, the communication complexity is defined as the total rounds of communications required by an algorithm to achieve an $\epsilon$-stationary point.
\end{defn}

{\bf 2) Motivating Application Examples:} 
Problem~\eqref{problem} can be applied to a number of interesting real-world decentralized machine learning problems.
Here, we provide two examples to further motivate its practical relevance:
\begin{list}{\labelitemi}{\leftmargin=1em \itemindent=-0.09em \itemsep=.2em}
	\item {\em Multi-agent meta-learning~\cite{rajeswaran2019meta}:} 
	Meta-learning (or learning to learn) is a powerful tool for quickly learning new tasks by using the prior experience from related tasks.
	Consider a meta-learning task with $m$ lower level problems.
	There are $m$ agents who collectively solve this meta-learning problem over a network. This problem can be formulated as:
\begin{align}
	&\min _{\mathbf{x} \in \mathbb{R}^{d_1}} \sum_{i=1}^{m} f_{i}\left(\mathbf{x},\mathbf{y}_{i}^{*}(\mathbf{x})\right) \nonumber\\
	&\mathbf{y}_{i}^{*}(\mathbf{x}) \in \arg \min _{\mathbf{y}_{i}} g_{i}\left(\mathbf{x},\mathbf{y}_{i}\right), i=1, \ldots, m,
	\label{Eq: MetaLearning}
\end{align}
where agent $i \in [m]$ has a local dataset with $n$ samples, and $f_{i}$ corresponds to the loss function for the task of agent $i$. 
Parameter $\x \in \mathbb{R}^{d_1}$ is the model parameters shared among all agents, and $\y_{i} \in \mathbb{R}^{d_2}$ denotes task-specific parameters solved by each agent. 


	\item {\em Multi-agent AUPRC optimization~\cite{guo2021randomized}:} Another application of \eqref{problem} is the multi-agent optimization of area under precision-recall curve (AUPRC) \cite{qi2021stochastic}. 
	Multi-agent AUPRC optimization problem can be formulated as a bilevel optimization problem with many lower-level problems as:
	$$
	\begin{aligned}
		&\min _{\mathbf{x} \in \mathbb{R}^{d_1}} \sum_{i=1}^{m} f_{i}\left(\mathbf{y}_{i}^{*}(\mathbf{x})\right) \\
		&\mathbf{y}_{i}^{*}(\mathbf{x})= \arg \min _{\mathbf{y}_{i}} -\y_i^\top g_{i}\left(\mathbf{x}\right) +\frac{1}{2} \| \y_i \|^2, i=1, \ldots, m,
	\end{aligned}
	$$
where each $i \in [n]$ corresponds to a binary classification task and $\x$ denotes the model parameter. 
\end{list}

With the system model and the problem formulation in \eqref{problem}, we are in a position to present our algorithm design in the next section.

\section{The \algname Algorithm }\label{INTERACT}
In this section, we present our \algname algorithm for solving the decentralized bilevel optimization problems and provide its theoretical convergence guarantees.  

\subsection{Algorithm Description}
To solve Problem~\eqref{problem}, we adopt the network consensus approach \cite{nedic2009distributed}. 
In each iteration, every agent shares and receives information from its neighboring nodes. Each agent aggregates the information received from its neighbours through a consensus weight matrix $\M \in \Rb^{m\times m}$. We denote by $[\M]_{ij}$ the element in the $i$-th row and the $j$-th column in $\M$.
We assume that $\M$ satisfies the following:

\begin{list}{\labelitemi}{\leftmargin=2em \itemindent=-0.09em \itemsep=.2em}
\item[(a)] {\em Doubly stochastic:} $\sum_{i=1}^{m} [\mathbf{M}]_{ij}=\sum_{j=1}^{m} [\mathbf{M}]_{ij}=1$;

\item[(b)] {\em Symmetric:} $[\mathbf{M}]_{ij} = [\M]_{ji}$, $\forall i,j \in \mathcal{N}$;

\item[(c)] {\em Network-Defined Sparsity:} $[\M]_{ij} > 0$ if $(i,j)\in \mathcal{L};$ otherwise $[\mathbf{M}]_{ij}=0$, $\forall i,j \in \mathcal{N}$.
\end{list}
The above properties imply that the eigenvalues of $\M$ are real and lie in the interval $(-1,1]$. We sort the eigen values of $\M$ as: $-1 < \lambda_m(\M) \leq \cdots \leq \lambda_2(\M) < \lambda_1(\M) = 1$. We denote the second-largest eigenvalue in magnitude of $\M$ as $\lambda \triangleq \max\{|\lambda_2(\M)|,|\lambda_m(\M)|\}$. 
We note that $\lambda$ plays an important role in the step-size selection for our proposed algorithms (cf. Theorems \ref{thm1} and \ref{thm2}). 
%
Next, using the implicit function theorem, we evaluate the hyper-gradient of $l_i(\x_i) \coloneq f_i ({\x}_i, {\y}_i^{*}({\x_i}))$ at $\x_{i,t}$ and $\y_i^{*}(\x_{i,t})$ as:
\begin{multline}\label{eqs2}
	\nabla \ell_i	(\x_{i,t}) =  \nabla_{\x} f_{i}(\x_{i,t},\y_{i}^*(\x_{i,t})) - \nabla_{\x\y}^2 g_{i}(\x_{i,t},\y_{i}^*(\x_{i,t})) \times \\
	[ \nabla_{\y\y}^2 g_{i}(\x_{i,t},\y_{i}^*(\x_{i,t})) ]^{-1}   \nabla_{\y} f_{i}(\x_{i,t},\y_{i}^*(\x_{i,t})).
\end{multline}
Note that computing the local gradient $\nabla l_i	(\x_{i,t})$, $i \in [m]$ requires knowledge of the lower level problem's optimal solution, $\y_i^*(\x_i)$. However, obtaining $\y_i^*(\x_i)$ in closed form is usually not feasible. Therefore, we replace $\y_i^*(\x_i)$ by an approximate solution $\y_{i,t}$, thereby, defining the following approximate gradient estimate \cite{ghadimi2018approximation}:
\begin{multline}\label{eqs3}
	\bar\nabla f_{i}(\x_{i,t},\y_{i,t})= \nabla_{\x} f_{i}(\x_{i,t},\y_{i,t}) - \nabla_{\x\y}^2 g_{i}(\x_{i,t},\y_{i,t}) \times \\
	[ \nabla_{\y\y}^2 g_{i}(\x_{i,t},\y_{i,t}) ]^{-1}   \nabla_{\y} f_{i}(\x_{i,t},\y_{i,t}).
\end{multline}

The proposed \algname algorithm for each agent $i \in [m]$ is illustrated in Algorithm~\ref{Algorithm_GD}. Note that, to achieve state-of-the-art convergence guarantees, \algname relies on consensus updates along with gradient tracking for solving Problem~\eqref{problem} as follows:
\begin{list}{\labelitemi}{\leftmargin=1em \itemindent=-0.09em \itemsep=.2em}
	\item {\em Step~1 (Consensus update with gradient descent):} The local parameters of each agent $i \in [m]$ are updated using \eqref{consensus_x_GD}. Note that the consensus is established by only sharing the information of the outer model parameters $\x_i$, while the inner-model parameters ${\y}_i$ are updated only locally and the local gradient is computed by just evaluating the local gradient of the lower level objective: 
		\begin{align}\label{consensus_x_GD}
	\x_{i,t} &=\sum_{j \in \mathcal{N}_{i}} [\M]_{ij} \x_{j,t-1} - \alpha\u_{i,t-1},
\end{align}
	\begin{align}\label{consensus_y_GD}
	\y_{i,t} &= \y_{i,t-1} - \beta 	\v_{i,t-1},
\end{align}	

where $\alpha$ and $\beta$ are the step-sizes for updating ${\x}$- and ${\y}$-variables, respectively, and $\v_{i,t}$ is the local gradient result, $\u_{i,t}$ is an auxiliary vector for gradient tracking purposes and will be defined shortly.

\item {\em Step~2 (Local gradient estimate):} 
Each agent $i \in [m]$ evaluates its (full) local gradient $ 	\p_{i}(\x_{i,t},\y_{i,t}) $ and $ 	\d_{i}(\x_{i,t},\y_{i,t}) $ using as:
			\begin{align}\label{Local_gradient_x_GD}
		\p_{i}(\x_{i,t}&,\y_{i,t}) =  \bar\nabla f_{i}(\x_{i,t},\y_{i,t}),
	\end{align}
			\begin{align}\label{Local_gradient_y_GD}
		\v_{i,t}=	\d_{i}(\x_{i,t},\y_{i,t}) &=  \nabla_{\y} g_{i}(\x_{i,t},\y_{i,t}).
			\end{align}
\item {\em Step~3 (Local gradient tracking):} Each agent $i \in [m]$ updates the global gradient estimate $\u_{i,t}$ by averaging over its neighboring global gradient estimates:
\begin{align}\label{gradient_tracking_x_GD}
	\u_{i,t} = \sum_{j \in \mathcal{N}_{i}} [\M]_{ij} \u_{j,t-1}  + \p_{i}(\x_{i,t},\y_{i,t})- \p_{i}(\x_{i,t-1},\y_{i,t-1}).
\end{align}
\end{list}
Note that estimating $\v_{i,t} $ does not require gradient tracking as the inner problem can be solved locally at each agent (cf. Problem~\eqref{problem}).

\begin{algorithm}[t!]
	\caption{The \algname Algorithm.}\label{Algorithm_GD}.
	\begin{algorithmic} 
		\State Set parameter pair $\forall i \in [m], (\x_{i,0},\y_{i,0}) = (\x^0,\y^0)$.
		\State At each agent $i \in [m]$, compute the local gradients:
		{{\begin{align}
					\u_{i,0} =&  \bar\nabla f_{i}(\x_{i,0},\y_{i,0}); ~~
					\v_{i,0} =  \nabla_{\y} g_{i}(\x_{i,0},\y_{i,0});\notag
		\end{align}}}
		\For{$t = 1, \cdots, T$}
		\State Update local parameters using (\ref{consensus_x_GD}) and (\ref{consensus_y_GD});
		\State Compute local gradients using (\ref{Local_gradient_x_GD})and (\ref{Local_gradient_y_GD});%
		\State Track the gradients using (\ref{gradient_tracking_x_GD});
		\EndFor
	\end{algorithmic}
\end{algorithm}

\subsection{Convergence Results of \algname} 
In this subsection, we focus on the theoretical guarantees associated with \algname (Algorithm \ref{Algorithm_GD}). 
Providing convergence guarantees of \algname for solving the decentralized bilevel optimization problem in \eqref{problem} presents two major challenges: 
{\em a) Inner-loop error:} Recall that, in constructing the true gradient estimate \eqref{eqs2}, we utilize a crude approximation $\y_{i,t} \neq \y_{i,t}^*(\x_{i,t}) $. 
This approximation induces an error in the estimated gradient of (cf. \eqref{eqs2}) that presents a major challenge in the analysis of \algname; 
{\em b) Decentralized topology:} Since \algname applies decentralized consensus in outer-loop iterations, the analysis needs to capture the effect of the errors induced by the decentralized topology. 
These two issues make the convergence analysis of \algname difficult. 
In spite of these challenges, we will show that \algname provides state-of-the-art convergence guarantees for solving Problem~\eqref{problem}. 

\smallskip
{\bf 1) Convergence metric:} 
We propose the following {\em new} metric for decentralized bilevel problems, which is the key in determining all convergence results in the following sections: 
\begin{align}\label{Eq: metric1}
	\mathfrak{M}_t \triangleq& \Eb[\left\| \nabla \ell(\bar\x_t ) \right\|^{2} +
	\left\|{\x}_{t} - \1 \otimes \bar{{\x}}_{t}\right\|^{2}+\|{\y}_t^* -  {{{\y}}}_t\|^2],
\end{align}
where  ${\y}_t^*$ is defined in \eqref{stationary point}.
Note that the first term in~\eqref{Eq: metric1} quantifies the convergence of the $\bar\x_t$ to a stationary point of the global objective. 
The  second term in~\eqref{Eq: metric1} measures the consensus error among local copies of the outer variable, while the third term in~\eqref{Eq: metric1} quantifies the (aggregated) error in the inner problem's iterates across all agents. 
Thus, if the iterates generated by an algorithm that achieves $\mathfrak{M}_t \rightarrow 0$, it implies that the algorithm achieves three goals simultaneously: 1) consensus of outer variables, 2) stationary point of Problem~\eqref{problem}, and 3) solution to the inner problem.  

\smallskip
{\bf 2) Technical Assumptions:} Next, before presenting the main convergence result of  \algname, we first state the technical assumptions on the outer and inner objective functions in Problem~\eqref{problem}:  
\begin{assum} \label{assum1}
The inner function $g$ satisfies:
\begin{list}{\labelitemi}{\leftmargin=2.5em \itemindent=-0.09em \itemsep=.2em}
\item[a)] For any $i \in [m]$, and ${\x}_i\in \mathbb{R}^{d_1}$, $g_i(\x_i, \y_i)$ is $\mu_{g}$ -strongly convex with respect to the variable $ \y_i$, i.e., $\mu_{g} I \preceq \nabla_{y}^{2} g({\x}_i, \y_i)$.
	
\item[b)] For any $i \in [m],{\x}_i\in \mathbb{R}^{d_1}, {\y}_i\in \mathbb{R}^{d_2}$, $\nabla g_i(\x_i, \y_i) $ is $L_{g}$-Lipschitz continuous with $L_g>0$.
	
\item[c)]  For any $i \in [m],
{\x}_i\in \mathbb{R}^{d_1}, {\y}_i\in \mathbb{R}^{d_2} \text {, we have }\left\|\nabla_{x y}^{2} g_i(\x_i, \y_i)\right\|^{2} \leq C_{g_{x y}}$ for some $C_{g_{x y}}>0$.

\item[d)]  For any $i \in[m],
{\x}_i\in \mathbb{R}^{d_1}, \y_i \in \mathbb{R}^{d_2},$ we have $\nabla_{x, y}^{2} g_i(\x_i, \y_i)$ and $\nabla_{y y}^{2} g_i(\x_i, \y_i)$ are Lipschitz continuous with constants $L_{g_{x y}}>0$ and $L_{g_{y y}}>0$, respectively.
\end{list}
\end{assum}

\begin{assum} \label{assum2}
The outer function $f$ and $\ell(\x)$ satisfy:
\begin{list}{\labelitemi}{\leftmargin=2.5em \itemindent=-0.09em \itemsep=.2em}
\item[a)] For any $i \in [m], {\x}_i\in \mathbb{R}^{d_1}$, $\nabla_{\x} f_i(\x_i, \y_i), \nabla_{\y} f_i(\x_i, \y_i) $ are Lipschitz continuous with constant $L_{f_x}\geq 0, L_{f_y}\geq 0$.
	
\item[b)] For $i \in [m],{\x}_i\in \mathbb{R}^{d_1}, \y_i \in \mathbb{R}^{d_2}$, we have $\left\|\nabla_{y} f(\x_i, \y_i)\right\| \leq C_{f_{y}}$ for some $C_{f_{y}}\!\geq\! 0$.
	
\item[c)] There exists a finite lower bound $\ell^* = \inf_{\x} \ell(\x) > -\infty$, where $ {\x} \in \mathbb{R}^{d_1}$.
\end{list}
\end{assum}

Assumptions~\ref{assum1} and \ref{assum2} are standard in the literature of bilevel optimization (e.g., \cite{khanduri2021near}\cite{ghadimi2018approximation}). Many problems of practical interest in meta-learning \cite{rajeswaran2019meta}\cite{liu2021boml} and optimization of area under precision-recall curve(AUPRC) \cite{qi2021stochastic}  can be shown to satisfy these assumptions. 

\smallskip
{\bf 3) Supporting Lemmas:} 
Next, we present several lemmas that characterizes the Lipschitz properties of the hypergradient in (\ref{eqs2}), the approximate gradient in (\ref{eqs3}), and the optimal solution of the inner problem under Assumptions~\ref{assum1} and \ref{assum2}.
These lemmas will be useful in proving our main convergence result (proofs directly follow from \cite{ghadimi2018approximation} due to structural similarities). 

\begin{lem} \label{lem:main}
Under Assumptions \ref{assum1}--\ref{assum2}, we have
\begin{align*}
& \|\bar{\nabla} f_i(\x, \y)-\nabla \ell_i(\x)\|^2 \leq L_f\left\|\y^{*}(\x)-\y\right\|^2,\\ 
& \left\|\y_i^{*}\left(\x_{1}\right)-\y_i^{*}\left(\x_{2}\right)\right\| ^2\leq L_{y}\left\|\x_{1}-\x_{2}\right\|^2, \\
& \left\|\nabla \ell_i\left(\x_{1}\right)-\nabla \ell_i \left(\x_{2}\right)\right\|^2 \leq L_{\ell}\left\|\x_{1}-\x_{2}\right\|^2,
\end{align*}
for all $i \in [m]$ and $ \x, \x_{1}, \x_{2} \in \mathbb{R}^{d_1}$ and $\y \in \mathbb{R}^{d_2}$. The above Lipschitz constants are defined as:
	$	L_f\!=\! \Big( L_{f_{x}}\!+\frac{L_{f_{y}} C_{g_{x y}}}{\mu_{g}}+ C_{f_{y}}\Big(\frac{L_{g_{x y}}}{\mu_{g}}\!+\!\frac{L_{g_{y y}} C_{g_{x y}}}{\mu_{g}^{2}}\Big)\Big)^2$,  $
		L_{\ell}=\Big({L_f+\frac{L_f C_{g_{x y}}}{\mu_{g}}} \Big)^2, L_y= \Big(\frac{C_{g_{xy}}}{\mu_{g}}\Big) ^2.$
\end{lem}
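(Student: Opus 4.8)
\textbf{Proof proposal for Lemma~\ref{lem:main}.}

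The plan is to establish the three inequalities by directly unwinding the definitions \eqref{eqs2} and \eqref{eqs3}, using the strong convexity and Lipschitz assumptions in Assumptions~\ref{assum1}--\ref{assum2}, and then squaring the resulting norm bounds (since all three displayed inequalities are stated in squared form, with constants that are themselves squares of the natural Lipschitz constants). I would first prove the Lipschitz continuity of the inner solution map $\y_i^*(\cdot)$. Differentiating the optimality condition $\nabla_\y g_i(\x,\y_i^*(\x)) = 0$ via the implicit function theorem gives $\nabla \y_i^*(\x) = -[\nabla_{\y\y}^2 g_i(\x,\y_i^*(\x))]^{-1}\nabla_{\y\x}^2 g_i(\x,\y_i^*(\x))$; bounding the operator norm of the inverse Hessian by $1/\mu_g$ (strong convexity, Assumption~\ref{assum1}a) and the cross-Hessian by $\sqrt{C_{g_{xy}}}$ (Assumption~\ref{assum1}c) yields $\|\nabla \y_i^*(\x)\| \le \sqrt{C_{g_{xy}}}/\mu_g$, hence the Lipschitz bound with constant $\sqrt{L_y} = \sqrt{C_{g_{xy}}}/\mu_g$; squaring gives the stated $L_y = (C_{g_{xy}}/\mu_g)^2$.

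Next I would bound $\|\bar\nabla f_i(\x,\y) - \nabla\ell_i(\x)\|$, where $\nabla\ell_i(\x) = \nabla\ell_i(\x) = \bar\nabla f_i(\x,\y_i^*(\x))$ by comparing \eqref{eqs3} evaluated at $\y$ against the same expression evaluated at $\y_i^*(\x)$. This is a telescoping/triangle-inequality argument across the three building blocks: the difference in $\nabla_\x f_i$ terms is controlled by $L_{f_x}$; the difference in the product $\nabla_{\x\y}^2 g_i \cdot [\nabla_{\y\y}^2 g_i]^{-1}\cdot\nabla_\y f_i$ is handled by adding and subtracting intermediate terms and using: $\|\nabla_{\x\y}^2 g_i\|\le\sqrt{C_{g_{xy}}}$, $\|[\nabla_{\y\y}^2 g_i]^{-1}\|\le 1/\mu_g$, the Lipschitz continuity of $\nabla_{\x\y}^2 g_i$ and $\nabla_{\y\y}^2 g_i$ with constants $L_{g_{xy}}, L_{g_{yy}}$ (Assumption~\ref{assum1}d), the bound $\|\nabla_\y f_i\|\le C_{f_y}$ (Assumption~\ref{assum2}b), and $L_{f_y}$-Lipschitzness of $\nabla_\y f_i$ (Assumption~\ref{assum2}a); the standard resolvent identity $\|A^{-1}-B^{-1}\| \le \|A^{-1}\|\|B^{-1}\|\|A-B\|$ converts the Hessian-inverse difference into a $\|\y - \y_i^*(\x)\|$ term. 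Collecting coefficients produces exactly the claimed $\sqrt{L_f} = L_{f_x} + L_{f_y}C_{g_{xy}}/\mu_g + C_{f_y}(L_{g_{xy}}/\mu_g + L_{g_{yy}}C_{g_{xy}}/\mu_g^2)$, and squaring gives $L_f$.

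Finally, the Lipschitz continuity of $\nabla\ell_i$ follows by writing $\nabla\ell_i(\x_1) - \nabla\ell_i(\x_2) = \bar\nabla f_i(\x_1,\y_i^*(\x_1)) - \bar\nabla f_i(\x_2,\y_i^*(\x_2))$, splitting into an $\x$-variation term and a $\y^*$-variation term, applying the first two parts of the lemma (with the $\y^*$-variation bounded via $\|\y_i^*(\x_1)-\y_i^*(\x_2)\|\le (\sqrt{C_{g_{xy}}}/\mu_g)\|\x_1-\x_2\|$), and once more bounding the Lipschitz constant of $\bar\nabla f_i$ in its first argument — which is itself of order $L_f(1 + C_{g_{xy}}/\mu_g)$ — to obtain $\sqrt{L_\ell} = L_f + L_f C_{g_{xy}}/\mu_g$. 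I expect the main obstacle to be the careful bookkeeping in the second step: the triangle-inequality decomposition of the Hessian-product difference has several cross terms, and one must be disciplined about which quantity is held fixed in each intermediate term so that the Lipschitz constants attach to the correct factors and the final coefficient matches the stated $L_f$ exactly. Since the paper notes these proofs follow from \cite{ghadimi2018approximation} by structural similarity, I would largely mirror that argument, the only genuinely new (but routine) ingredient being that the bounds are stated per-agent for $i\in[m]$, which requires nothing beyond applying the single-agent estimates uniformly.
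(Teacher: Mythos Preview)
Your proposal is correct and matches the paper's approach: the paper does not give its own proof of Lemma~\ref{lem:main} but simply states that the proofs ``directly follow from \cite{ghadimi2018approximation} due to structural similarities,'' and the implicit-function-theorem bound on $\nabla\y_i^*$, the triangle-inequality/telescoping decomposition of $\bar\nabla f_i(\x,\y)-\bar\nabla f_i(\x,\y_i^*(\x))$ using the resolvent identity, and the chain-rule composition for $\nabla\ell_i$ that you outline are exactly that argument. Your closing remark that the only per-agent novelty is applying the single-agent estimates uniformly over $i\in[m]$ is also precisely the point.
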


	\begin{lem} \label{lem:deterministic} 
	Under Assumption \ref{assum1}-\ref{assum2}, we have
	\begin{align}
	\left\|{\nabla} f_i\left(\x_{1}, \y_1 \right)-{\nabla} f_i\left(\x_{2}, \y_2\right)\right\|^{2}\leq L_{K}^{2} [\left\| \ \x_{1}- \ \x_{2}\right\|^{2} \notag\\
		+\left\| \y_{1}- \y_{2}\right\|^{2}], \forall  \ \x_{1},  \ \x_{2} \in \mathbb{R}^{d_1},\y_{1},  \y_{2} \in \mathbb{R}^{d_2}  .
	\end{align}
	In the above expressions, $\exists  L_{K}\geq L_{K_d}$ and $L_{K_d}$ is defined as:
	\begin{align}
		L_{K_d}^{2}:=&2 L_{f_{x}}^{2}+6C_{g y}^{2} \frac{1}{\mu_g^2} L_{f_{y}}^{2} +6 C_{f_{y}}^{2}\frac{1}{\mu_g^2}  L_{g_{x y}}^{2} +6 C_{g_{x y}}^{2} C_{f_{y}}^{2} L_{g_{y y}}^{2}\frac{1}{\mu_g^4}.
	\end{align}
	\end{lem}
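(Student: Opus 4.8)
The plan is to prove Lemma~\ref{lem:deterministic} by expanding the definition of $\bar\nabla f_i$ in \eqref{eqs3} into its four constituent pieces and bounding the discrepancy of each piece separately, then combining via a triangle/Young's inequality. Concretely, write
\begin{align*}
\bar\nabla f_i(\x_1,\y_1) - \bar\nabla f_i(\x_2,\y_2)
&= \underbrace{\bigl(\nabla_\x f_i(\x_1,\y_1) - \nabla_\x f_i(\x_2,\y_2)\bigr)}_{T_1} \\
&\quad - \underbrace{\bigl(\nabla^2_{\x\y} g_i(\x_1,\y_1)\,H_1^{-1}\nabla_\y f_i(\x_1,\y_1) - \nabla^2_{\x\y} g_i(\x_2,\y_2)\,H_2^{-1}\nabla_\y f_i(\x_2,\y_2)\bigr)}_{T_2},
\end{align*}
where $H_k \triangleq \nabla^2_{\y\y} g_i(\x_k,\y_k)$. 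For $T_1$, Assumption~\ref{assum2}(a) gives $\|T_1\|^2 \le 2L_{f_x}^2(\|\x_1-\x_2\|^2 + \|\y_1-\y_2\|^2)$ after squaring. For $T_2$, I would insert and subtract hybrid terms so that only one factor changes at a time, i.e.\ telescope through $\nabla^2_{\x\y}g_i(\x_2,\y_2)H_1^{-1}\nabla_\y f_i(\x_1,\y_1)$ and $\nabla^2_{\x\y}g_i(\x_2,\y_2)H_2^{-1}\nabla_\y f_i(\x_1,\y_1)$, producing three summands controlled respectively by: the Lipschitzness of $\nabla^2_{\x\y}g_i$ (Assumption~\ref{assum1}(d), constant $L_{g_{xy}}$) times $\|H_1^{-1}\|\le 1/\mu_g$ (Assumption~\ref{assum1}(a)) times $\|\nabla_\y f_i\|\le C_{f_y}$ (Assumption~\ref{assum2}(b)); the bound $\|\nabla^2_{\x\y}g_i\|\le C_{g_{xy}}$ (Assumption~\ref{assum1}(c)) times $\|H_1^{-1}-H_2^{-1}\| \le \|H_1^{-1}\|\|H_1-H_2\|\|H_2^{-1}\| \le L_{g_{yy}}\|(\x_1,\y_1)-(\x_2,\y_2)\|/\mu_g^2$ (Assumption~\ref{assum1}(a),(d)) times $C_{f_y}$; and $C_{g_{xy}}$ times $1/\mu_g$ times the Lipschitzness of $\nabla_\y f_i$ (Assumption~\ref{assum2}(a), constant $L_{f_y}$).

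Applying the elementary inequality $(\sum_{k=1}^{3} a_k)^2 \le 3\sum_{k=1}^{3} a_k^2$ to these three summands, and $\|(\x_1,\y_1)-(\x_2,\y_2)\|^2 = \|\x_1-\x_2\|^2 + \|\y_1-\y_2\|^2$, yields
\[
\|T_2\|^2 \le \Bigl(6 C_{f_y}^2 L_{g_{xy}}^2\tfrac{1}{\mu_g^2} + 6 C_{g_{xy}}^2 C_{f_y}^2 L_{g_{yy}}^2\tfrac{1}{\mu_g^4} + 6 C_{g_{xy}}^2 L_{f_y}^2\tfrac{1}{\mu_g^2}\Bigr)\bigl(\|\x_1-\x_2\|^2 + \|\y_1-\y_2\|^2\bigr),
\]
and adding the $T_1$ bound reproduces exactly the claimed constant $L_{K_d}^2 = 2L_{f_x}^2 + 6C_{gy}^2\frac{1}{\mu_g^2}L_{f_y}^2 + 6C_{f_y}^2\frac{1}{\mu_g^2}L_{g_{xy}}^2 + 6C_{g_{xy}}^2 C_{f_y}^2 L_{g_{yy}}^2\frac{1}{\mu_g^4}$ (here $C_{gy}$ is the paper's shorthand for $C_{g_{xy}}$), with the existence of $L_K \ge L_{K_d}$ being immediate since any larger constant also works.

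The routine part is the bookkeeping of the three telescoped terms in $T_2$; the one step that needs care is the bound on $\|H_1^{-1} - H_2^{-1}\|$, which requires both the uniform lower bound $\mu_g I \preceq H_k$ (so each $H_k$ is invertible with $\|H_k^{-1}\|\le 1/\mu_g$) and the Lipschitz continuity of $\nabla^2_{\y\y}g_i$ to control $\|H_1-H_2\|$ — this is where all four parts of Assumption~\ref{assum1} get used simultaneously. I expect no genuine obstacle here since, as the authors note, the argument parallels \cite{ghadimi2018approximation}; the only subtlety is tracking the combined displacement $\|\x_1-\x_2\|^2+\|\y_1-\y_2\|^2$ consistently through every term so the final constant matches $L_{K_d}^2$ exactly rather than up to an absolute factor.
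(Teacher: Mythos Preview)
Your proposal is correct and follows essentially the same argument as the paper: split $\bar\nabla f_i$ into the $\nabla_\x f_i$ piece and the triple-product piece via $(a-b)^2\le 2a^2+2b^2$, telescope the product into three summands by varying one factor at a time, bound each using the relevant Lipschitz/boundedness assumption, and handle the inverse-Hessian difference via $H_1^{-1}-H_2^{-1}=H_1^{-1}(H_2-H_1)H_2^{-1}$ together with $\|H_k^{-1}\|\le 1/\mu_g$. The only cosmetic imprecision is that the factors $2$ and $6$ you display for $\|T_1\|^2$ and $\|T_2\|^2$ already absorb the outer factor $2$ from $\|T_1-T_2\|^2\le 2\|T_1\|^2+2\|T_2\|^2$, so you should state that outer Young's inequality explicitly rather than folding it into the individual bounds.
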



\smallskip
{\bf 4) Convergence Results of \algname:}
We state the main convergence rate result of \algname in the following theorem:
\begin{thm}\label{thm1}
	(Convergence of \algname) Under Assumptions~\ref{assum1}-\ref{assum2}, if the step-sizes satisfy $  \beta\leq\min\{  \frac{3(\mu_g+L_g)}{ \mu_g L_g},\frac{1}{\mu_g+L_g} \}$,
	$\alpha \leq \min\{ \frac{1}{4L_{\ell}}$,
	
	\hspace{-0.14in}$ \frac{1}{4  L_K} \sqrt{\frac{1-\lambda}{2m}},  \frac{1}{m (1-\lambda)}, \frac{(1-\lambda)^2}{32L_K^2 } , \frac{  m(1-\lambda)}{4L_{\ell}}, \frac{9r^2m(1-\lambda) }{32L_y^2(1+1/r)L_f^2} 
	, \frac{(1-r) (1+r) r (1-\lambda)^2}{32 L_y^2  (\mu_g+L_g) L_K^2 \beta }$,
	
		\hspace{-0.14in}$
	 \frac{1-\lambda}{4 L_K} ,     1
	 \}, r=\frac{1}{3}\beta \frac{\mu_g L_g}{\mu_g+L_g }$,	
	then $\{ \x_t,\y_t \}$ generated by \algname satisfy  
	\begin{align}
		\frac{1}{T+1}\sum_{t=0}^{T} \mathfrak{M}_t &\leq \frac{\mathfrak{B}_0- \ell^*}{(T+1)\min\{\frac{1-\lambda}{4},\frac{3  r^2(1-\lambda)}{32(1+r)L_y^2 },\frac{\alpha}{2}\}} \notag = \mathcal{O}\bigg(\frac{1}{T} \bigg),
	\end{align}
	where $ \mathfrak{B}_t=	\ell(\bx_{t}) + \|\y_{t} - \y_{t}^*\|^2  +	\|\x_{t}-\ot\bx_{t}\|^2 +
	\alpha 	\|\u_{t}-\ot\bu_{t}\|^2  $.
\end{thm}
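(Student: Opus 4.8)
The plan is to exhibit a one-step decrease of the potential (Lyapunov) function $\mathfrak{B}_t = \ell(\bx_{t}) + \|\y_{t} - \y_{t}^*\|^2 + \|\x_{t}-\ot\bx_{t}\|^2 + \alpha\|\u_{t}-\ot\bu_{t}\|^2$ and then telescope it. Its four summands track the suboptimality of the averaged outer iterate, the lower-level error, the consensus error of the outer variables, and the gradient-tracking error; the last does not appear in the target metric $\mathfrak{M}_t$ but is indispensable for closing the loop, since the recursions for the other three feed on it while it contracts under the mixing matrix. The first ingredient is the \emph{outer descent} bound: averaging \eqref{gradient_tracking_x_GD} and using double stochasticity of $\M$ gives $\bx_{t+1}=\bx_t-\alpha\bu_t$ with $\bu_t=\frac1m\sum_i\p_i(\x_{i,t},\y_{i,t})$, so the $L_\ell$-smoothness of $\ell$ (Lemma~\ref{lem:main}) yields $\ell(\bx_{t+1})\le \ell(\bx_t)-\frac{\alpha}{2}\|\nabla\ell(\bx_t)\|^2+\frac{\alpha}{2}\|\bu_t-\nabla\ell(\bx_t)\|^2-(\frac1{2\alpha}-\frac{L_\ell}{2})\|\bx_{t+1}-\bx_t\|^2$; the mismatch $\|\bu_t-\nabla\ell(\bx_t)\|^2$ then splits, via Jensen's inequality and Lemma~\ref{lem:main} (the $L_f$-bound $\|\bar\nabla f_i(\x,\y)-\nabla\ell_i(\x)\|^2\le L_f\|\y^*(\x)-\y\|^2$ together with the Lipschitz bounds of the maps in $\x$), into a constant multiple of the consensus error plus a constant multiple of the inner error.

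Next I would establish three auxiliary recursions. (i) \emph{Consensus:} from $\x_{t+1}=\M\x_t-\alpha\u_t$ one gets $\|\x_{t+1}-\ot\bx_{t+1}\|\le\lambda\|\x_t-\ot\bx_t\|+\alpha\|\u_t-\ot\bu_t\|$, and squaring with Young's inequality gives $\|\x_{t+1}-\ot\bx_{t+1}\|^2\le\frac{1+\lambda^2}{2}\|\x_t-\ot\bx_t\|^2+\mathcal{O}\big(\tfrac{\alpha^2}{1-\lambda}\big)\|\u_t-\ot\bu_t\|^2$. (ii) \emph{Gradient tracking:} the same argument on $\u_{t+1}=\M\u_t+\p(\x_{t+1},\y_{t+1})-\p(\x_t,\y_t)$ yields a $\frac{1+\lambda^2}{2}$-contraction of $\|\u_t-\ot\bu_t\|^2$ plus $\mathcal{O}\big(\tfrac1{1-\lambda}\big)\|\p(\x_{t+1},\y_{t+1})-\p(\x_t,\y_t)\|^2$, and by Lemma~\ref{lem:deterministic} the last term is at most $L_K^2(\|\x_{t+1}-\x_t\|^2+\|\y_{t+1}-\y_t\|^2)$, where $\|\x_{t+1}-\x_t\|^2$ is further bounded by $\mathcal{O}(\alpha^2)(\|\u_t-\ot\bu_t\|^2+m\|\bu_t\|^2)$ plus a consensus term and $\|\y_{t+1}-\y_t\|^2=\beta^2\|\v_t\|^2=\mathcal{O}(\beta^2)\|\y_t-\y_t^*\|^2$ up to lower-order terms (as $\nabla_\y g_i$ vanishes at $\y_i^*$). (iii) \emph{Lower level:} each update $\y_{i,t+1}=\y_{i,t}-\beta\nabla_\y g_i(\x_{i,t},\y_{i,t})$ is one gradient-descent step on the $\mu_g$-strongly-convex, $L_g$-smooth function $g_i(\x_{i,t},\cdot)$, so $\|\y_{i,t+1}-\y_i^*(\x_{i,t})\|^2\le(1-\Theta(r))\|\y_{i,t}-\y_i^*(\x_{i,t})\|^2$; replacing $\y_i^*(\x_{i,t})$ by $\y_i^*(\x_{i,t+1})$ through the $L_y$-Lipschitzness of Lemma~\ref{lem:main} and summing over $i$ produces $\|\y_{t+1}-\y_{t+1}^*\|^2\le(1-r)\|\y_t-\y_t^*\|^2+\mathcal{O}(\tfrac1r)\|\x_{t+1}-\x_t\|^2$ with $r=\tfrac13\beta\tfrac{\mu_gL_g}{\mu_g+L_g}$ as in the statement.

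Then I would add the four inequalities and use the step-size restrictions of the theorem: $\beta\le\min\{\tfrac{3(\mu_g+L_g)}{\mu_gL_g},\tfrac1{\mu_g+L_g}\}$ secures the inner contraction factor $1-r$, and each upper bound on $\alpha$ is calibrated so that one particular cross term of the assembled sum — consensus versus tracking, tracking versus the $\x$-increment, $\x$-increment versus lower level, lower level versus consensus, and the $\|\bx_{t+1}-\bx_t\|^2$ term versus $\tfrac1{2\alpha}$ — becomes nonpositive. This collapses the sum to $\mathfrak{B}_{t+1}\le\mathfrak{B}_t-\min\{\tfrac{1-\lambda}{4},\tfrac{3r^2(1-\lambda)}{32(1+r)L_y^2},\tfrac{\alpha}{2}\}\,\mathfrak{M}_t$, and telescoping over $t=0,\dots,T$ together with $\mathfrak{B}_{T+1}\ge\ell^*$ (Assumption~\ref{assum2}(c) plus nonnegativity of the other three terms) and division by $T+1$ gives the asserted $\mathcal{O}(1/T)$ rate with leading constant $\mathfrak{B}_0-\ell^*$.

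\textbf{The main obstacle} is this last assembly step. The four error quantities are cyclically coupled — the inner error drives the $\x$-increment, which drives the tracking error, which (through $\|\p(\x_{t+1},\y_{t+1})-\p(\x_t,\y_t)\|^2$) drives both the consensus error and, circularly, the inner error again — so one must commit in advance to the weights appearing in $\mathfrak{B}_t$ (unit weights on three terms, weight $\alpha$ on the tracking term) and then verify that the long list of conditions on $\alpha$ \emph{simultaneously} renders every off-diagonal coefficient nonpositive, without tightening one condition in a way that reopens another. A secondary subtlety is that the lower-level dynamics are driven by the \emph{local} copies $\x_{i,t}$ rather than by $\bx_t$, so the consensus error has to be carried through the inner analysis rather than eliminated early.
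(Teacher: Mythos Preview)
Your proposal is correct and follows essentially the same Lyapunov-descent-plus-telescoping route as the paper: an outer descent lemma for $\ell(\bx_t)$, contraction recursions for the consensus and gradient-tracking errors driven by the spectral gap $1-\lambda$, a one-step inner contraction via strong convexity followed by the $L_y$-Lipschitz shift from $\y_i^*(\x_{i,t})$ to $\y_i^*(\x_{i,t+1})$, and a weighted combination whose cross-terms are killed by the stated step-size bounds. The only cosmetic difference is that the paper keeps $\|\v_t\|^2$ as a standalone term (with its own coefficient $C_4$ forced $\le 0$ using the coercivity slack from the inner step) rather than immediately bounding it by $L_g^2\|\y_t-\y_t^*\|^2$ as you do, and in the detailed proof it actually puts the weight $\tfrac{1-\lambda}{32(1+1/r)L_y^2}$ (rather than $1$) on $\|\y_t-\y_t^*\|^2$ in the working potential; neither changes the structure of the argument.
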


\begin{remark}{\em
Note from the statement of Theorem \ref{thm1} that \algname requires constant step-sizes $\alpha$ and $\beta$ that depend on the network topology, Lipschitz constants, and the number of agents.
In particular, different network topology leads to different network consensus matrix ${\M}$.
Recall that $\lambda < 1$ is the second-largest eigenvalue in magnitude of ${\M}$.
For a dense network, $\lambda$ is close to $0$, which allows larger step-size $\alpha$, and then leads to faster convergence.
}
\end{remark}

Theorem~\ref{thm1} immediately implies the following sample and communication complexity results of \algnamens: 
\begin{cor}\label{Cor: INTERACT}
	Under the conditions of Theorem~\ref{thm1}, to achieve an $\epsilon$-stationary solution, \algname requires: 1) Communication complexity of $\mathcal{O}(\epsilon^{-1})$ , and 2) Sample complexity of $\mathcal{O}(n\epsilon^{-1}))$ . \qed
\end{cor}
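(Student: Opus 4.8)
\textbf{Proof proposal for Corollary~\ref{Cor: INTERACT}.}

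The plan is to convert the $\mathcal{O}(1/T)$ ergodic rate from Theorem~\ref{thm1} into per-agent resource counts by identifying how many iterations $T$ are needed to drive the averaged metric $\frac{1}{T+1}\sum_{t=0}^{T}\mathfrak{M}_t$ below $\epsilon$, and then bookkeeping the communication and sample cost of a single iteration of Algorithm~\ref{Algorithm_GD}. First I would observe that, since the step-sizes $\alpha,\beta$ in Theorem~\ref{thm1} are fixed constants depending only on the Lipschitz parameters, $\mu_g,L_g$, the spectral gap $1-\lambda$, and $m$ (all independent of $T$ and $\epsilon$), the bound reads $\frac{1}{T+1}\sum_{t=0}^{T}\mathfrak{M}_t \le \frac{C}{T+1}$ for an absolute constant $C = (\mathfrak{B}_0-\ell^*)/\min\{\tfrac{1-\lambda}{4},\tfrac{3r^2(1-\lambda)}{32(1+r)L_y^2},\tfrac{\alpha}{2}\}$. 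Setting the right-hand side equal to $\epsilon$ gives $T+1 = \lceil C/\epsilon\rceil = \mathcal{O}(\epsilon^{-1})$; by a standard averaging (pigeonhole) argument there exists an index $t^\star \in \{0,\dots,T\}$ with $\mathfrak{M}_{t^\star}\le \epsilon$, and since $\mathfrak{M}_t$ dominates the three terms in the $\epsilon$-stationarity criterion \eqref{stationary point} (the consensus error $\frac{1}{m}\sum_i\|\x_i-\bar\x\|^2 = \frac{1}{m}\|\x_t-\1\otimes\bar\x_t\|^2$, the lower-level error $\|\y^*-\y\|^2$, and $\|\nabla\ell(\bar\x)\|^2$), that iterate is an $\epsilon$-stationary point in the sense of \eqref{stationary point}.

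Next I would count the per-iteration cost. Each iteration performs exactly one consensus exchange of the outer variables $\x_{i,t-1}$ in \eqref{consensus_x_GD} and one exchange of the gradient-tracking vectors $\u_{i,t-1}$ in \eqref{gradient_tracking_x_GD} with the neighbor set $\Nc_i$; by the Definition of communication round (sharing and receiving local parameters from all neighbors), this is $\mathcal{O}(1)$ communication rounds per iteration. Hence the total communication complexity is $T\cdot\mathcal{O}(1) = \mathcal{O}(\epsilon^{-1})$. For the sample cost, note that \algname is deterministic: in \eqref{Local_gradient_x_GD}--\eqref{Local_gradient_y_GD} each agent evaluates the \emph{full} local gradients $\bar\nabla f_i(\x_{i,t},\y_{i,t})$ and $\nabla_\y g_i(\x_{i,t},\y_{i,t})$, which by the finite-sum structure in \eqref{problem} requires touching all $n$ local samples, i.e.\ $\mathcal{O}(n)$ IFO calls per iteration per agent (evaluating $\bar\nabla f_i$ uses $\nabla_\x f_i$, $\nabla_\y f_i$, $\nabla^2_{\x\y}g_i$, and $\nabla^2_{\y\y}g_i$ in \eqref{eqs3}, each a full-batch pass, plus a constant-cost Hessian-inverse-vector solve which does not change the $\mathcal{O}(n)$ order). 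Multiplying by $T = \mathcal{O}(\epsilon^{-1})$ iterations yields sample complexity $\mathcal{O}(n\epsilon^{-1})$.

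The argument is essentially bookkeeping once Theorem~\ref{thm1} is in hand, so there is no genuine obstacle; the one point requiring a little care is making sure the implicit constants hidden in $\mathcal{O}(\cdot)$ are indeed independent of $\epsilon$ — this follows because every step-size threshold in Theorem~\ref{thm1} and every constant in $\mathfrak{B}_0$ is built only from the problem's Lipschitz/strong-convexity constants, $m$, and $\lambda$, none of which scale with $\epsilon$. A secondary subtlety is that Theorem~\ref{thm1} controls the \emph{average} $\frac1{T+1}\sum_t \mathfrak{M}_t$ rather than $\mathfrak{M}_T$ itself; I would resolve this with the output being the best iterate $\arg\min_t \mathfrak{M}_t$ (or a uniformly random iterate in expectation), which is standard for nonconvex ergodic-rate results and is consistent with the $\epsilon$-stationarity notion defined in \eqref{stationary point}. \qed
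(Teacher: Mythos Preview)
Your proposal is correct and follows precisely the approach the paper intends: the paper states that Theorem~\ref{thm1} ``immediately implies'' the corollary and marks it with a \qed, leaving the bookkeeping implicit, and you have simply spelled out that bookkeeping (invert the $\mathcal{O}(1/T)$ bound to get $T=\mathcal{O}(\epsilon^{-1})$, then multiply by the $\mathcal{O}(1)$ communication rounds and $\mathcal{O}(n)$ full-gradient IFO calls per iteration). Your additional remarks on the pigeonhole/best-iterate output and on the constants being $\epsilon$-independent are accurate and make the argument self-contained.
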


\subsection{Proof Sketches of the Convergence Results}
Due to space limitation, we provide a proof sketch of Theorem~\ref{thm1}.
The proof details are provided in the Appendix.
In this section, we organize the proof into several key steps. Our first step is to show
the descent property of our algorithms, which is stated as follows:

\smallskip
{\em Step 1) Descending Inequality for the Outer Function:} 
Under Assumptions~\ref{assum1}-\ref{assum2}, the following inequality holds for Algorithm~\ref{Algorithm_GD}:
\begin{align}\label{descending}
	\ell({\bx}_{t+1}) - \ell(\bx_{t}) \le &-\frac{\alpha}{2} \|\nabla \ell(\bx_t)\|^2 - (\frac{\alpha}{2} - \frac{L_{\ell}\alpha^2}{2})\|\bu_{t}\|^2  \notag\\
	&	\!\!\!\! + \frac{\alpha}{m}\sum_{i=1}^{m} L_{\ell} \| \bx_{t} - \x_{i,t}\|^2 +  \frac{2\alpha}{m} \sum_{i=1}^{m} L_f^2 \| \y_{i,t}^* - \y_{i,t} \|^2\notag\\
& + 2\alpha \bigg\| \frac{1}{m} \sum_{i=1}^{m} \bar\nabla f_i (\x_{i,t} ,\y_{i,t} ) - \bar\u_{ t} \bigg\|^2 ,
\end{align}
where $\y_{i,t}^* = \arg\min_{\y_{i,t}}  g_i(\x_{i,t},\y_{i,t})$.

Eqs.~(\ref{descending}) focuses on the descending upper bound for adjacent iterations of outer-level model parameters.
Unlike traditional decentralized minimization optimization problems, decentralized bilevel optimization problems tackle a composition of an inner problem and an outer problem.
This tightly coupled inner-outer mathematical structure in the bilevel optimization problem, together with the decentralized nature and the non-convexity of the outer problem, imposes significant challenges in the theoretical algorithm analysis.
As a result, Eq.~(\ref{descending}) contains the consensus error of the outer-level model parameters $\| \bx_{t} - \x_{i,t}\|^2$ and the convergence metric of inner model parameter $\| \y_{i,t}^* - \y_{i,t} \|^2$.

\smallskip
{\em  Step 2) Error Bound on $\y^*(\x)$:} 
Under Assumptions~\ref{assum1} and \ref{assum2}, the following inequality holds for Algorithm \ref{Algorithm_GD}:
\begin{align}
&	\|\y_{i, t+1} - \y_{i, t+1}^*\|^2  
	\le(1+r)^2 (1-2\beta \frac{\mu_g L_g}{\mu_g+L_g }) \|\y_{i, t} - \y_{i, t}^*\|^2 \notag\\&+(\delta -1) (1+r)^2 (2\beta \frac{1}{\mu_g+L_g } -\beta^2)\| \v_{i,t} \|^2+(1+\frac{1}{r}) L_y^2 \|\x_{i, t+1}-\x_{i, t}\|^2
	\notag\\&+[(\frac{1}{\delta}-1 ) (1+r)^2 (2\beta \frac{1}{\mu_g+L_g } -\beta^2)+(1+r)(1+1/r) \beta^2] \notag\\&\cdot\|  \nabla_{\y} g_{i}(\x_{i,t},\y_{i,t}) -\v_{i,t} \|^2.
\end{align}
In Step 2, we continue evaluating the fourth element in Eq.~(\ref{descending}). 
From the result of Step 2, we can see that $\|\y_{i, t} - \y_{i, t}^*\|^2$ shrinks if $2(1+r) (1-2\beta \frac{\mu_g L_g}{\mu_g+L_g })\leq 1$.

\smallskip
{\em Step 3) Iterates Contraction:}
The following contraction properties of the iterates hold:
\begin{align}
	\|\x_t-\ot\bx_t&\|^2 \le  (1+c_1)\lambda^2\|\x_{t-1} -\ot\bx_{t-1} \|^2  \notag\\&+ \Big(1+\frac{1}{c_1} \Big) \alpha^2\|\u_{t-1}-\ot \bu_{t-1}\|^2, \\
	\|\u_t-\ot\bu_t&\|^2 \le 
	(1+ c_2)\lambda^2\|\u_{t-1} - \ot \bu_{t-1}\|^2  \notag\\&+  (1 + \frac{1}{ c_2})\|\p_t-\p_{t-1} \|^2,
\end{align}
where $c_1$ and $c_2$ are some positive constants.

Step 3 is important in analyzing the convergence performance of the our proposed \algname.
The key step is to define $\Mt = \M \otimes \I_m$. 
Then, we have $\|\Mt\x_{t} -\ot\bx_{t} \|^2 = \|\Mt(\x_{t} -\ot\bx_{t}) \|^2 \le \lambda^2\|\x_{t} -\ot\bx_{t}\|^2$.
This is because $\x_{t} -\ot\x_{t}$ is orthogonal to $\1$, which is the eigenvector corresponding to the largest eigenvalue of $\Mt,$ and $\lambda = \max\{|\lambda_2|,|\lambda_m|\}.$

\smallskip
{\em Step 4) Potential Function:}
With results from Steps~1-3, we have:
\begin{align}\label{step4}
	&\ell(\bx_{T+1}) - \ell(\bx_{0}) +\frac{1-\lambda}{32(1+1/r)L_y^2 } \big[\|\by_{T+1} - \y_{T+1}^*\|^2  - \|\y_0^* - \by_0\|^2\big] \notag\\
&	\le-\frac{\alpha}{2} \sum_{t=0}^{T} \|\nabla \ell(\bx_t)\|^2 - (\frac{\alpha}{2} - \frac{L_{\ell}\alpha^2}{2}- 4(1+\frac{1}{r}) L_y^2\alpha^2m \frac{1-\lambda}{32(1+1/r)L_y^2 } )\notag\\&\cdot \sum_{t=0}^{T} \|\bu_{t}\|^2  + (\frac{\alpha L_{\ell}  }{m}  +(1+\frac{1}{r}) L_y^2 \frac{1-\lambda}{4(1+1/r)L_y^2 } )\sum_{t=0}^{T} \| \x_{t} - \ot\bx_{t}  \|^2 \notag\\
	&+  [\frac{2\alpha}{m}  L_f^2  +   \frac{1-\lambda}{32(1+1/r)L_y^2 } 3r -  \frac{1-\lambda}{32(1+1/r)L_y^2 }(1+3r)( 2\beta \frac{\mu_g L_g}{\mu_g+L_g }) ]\notag\\&\cdot \sum_{t=0}^{T}\|\y_{t} - \y_{t}^*\|^2+ 2\alpha \sum_{t=0}^{T} \| \frac{1}{m} \sum_{i=1}^{m} \bar\nabla f_i (\x_{i,t} ,\y_{i,t} ) - \bar\u_{ t} \|^2 \notag\\
	&+ \frac{1-\lambda}{32(1+1/r)L_y^2 }(\delta-1)(1+r)^2 (2\beta \frac{1}{\mu_g+L_g } -\beta^2)\sum_{t=0}^{T}\| \v_{t} \|^2\notag\\&+ 4(1+\frac{1}{r}) L_y^2\alpha^2 \frac{1-\lambda}{32(1+1/r)L_y^2 } (\sum_{t=0}^{T}\|\u_{t}- \ot\bu_{t}\|^2 ) \notag\\
	&+ \frac{1-\lambda}{32(1+1/r)L_y^2 }\cdot [\frac{(1+r)^2\beta^2}{r} +(\frac{1}{\delta}-1 ) (1+r)^2 (2\beta \frac{1}{\mu_g+L_g } -\beta^2)] \notag\\
	&\cdot \sum_{i=1}^m \sum_{t=0}^{T} \| \v_{i,t} - \nabla_{\y} g_{i}(\x_{i,t},\y_{i,t})\|^2.
\end{align}
Note that the coefficients of $ \|\bu_{t}\|^2 $ and $\| \v_{t} \|^2$ can be made negative when step-sizes are chosen properly and $ \| \frac{1}{m} \sum_{i=1}^{m} \bar\nabla f_i (\x_{i,t} ,\y_{i,t} ) - \bar\u_{ t} \|^2 $ can be decomposed later, we define our potential function as:
\begin{align}
 \mathfrak{B}_t=&	\ell(\bx_{t}) \!+\!  \frac{(1-\lambda)\|\y_{t} - \y_{t}^*\|^2}{32(1+1/r)L_y^2 }  \! +\! 	\|\x_{t}\! -\! \ot\bx_{t}\|^2\!  +\! \alpha \|\u_{t}-\ot\bu_{t}\|^2.  \notag
\end{align}

\smallskip
{\em Step 5) Main proof of Theorem \ref{thm1}:}
Note that we use the full gradient estimator in INTERACT, thus we have $ \| \frac{1}{m} \sum_{i=1}^{m} \bar\nabla f_i (\x_{i,t} ,\y_{i,t} ) - \bar\u_{ t} \|^2 =0$.    
Also, we have
\begin{align}
	\|\p_t-\p_{t-1}\|^2 
	{\le}
	L_K^2 m (\|\x_{t}- \x_{t-1} \|^2+\beta^2 \|\v_{t-1} \|^2),
\end{align}
where  $\p_t=\left[\p_1(\x_{1,t},\y_{1,t})^{\top}, \cdots, \p_m(\x_{m,t},\y_{m,t})^{\top}\right]^{\top}$.

Combing the above results and choosing $c_1=c_2=\frac{1}{\lambda}-1$ yield:
\begin{align}
	&\ell(\bx_{T+1}) - \ell(\bx_{0}) + \frac{1-\lambda}{32(1+1/r)L_y^2 } \big[\|\by_{T+1} - \y_{T+1}^*\|^2  - \|\y_0^* - \by_0\|^2\big] \notag\\&+	[\|\x_{T+1}-\ot\bx_{T+1}\|^2-	\|\x_{0}-\ot\bx_{0}\|^2 ]\notag\\&+
	\alpha [	\|\u_{T+1}-\ot\bu_{T+1}\|^2  -	\|\u_{0}-\ot\bu_{0}\|^2 ]
	\notag\\
	&\le -\frac{\alpha}{2} \sum_{t=0}^{T} \|\nabla \ell(\bx_t)\|^2 + C_1\sum_{t=0}^{T} \|\bu_{t}\|^2  
	+C_2\sum_{t=0}^{T} \| \x_{t} - \ot\bx_{t}  \|^2 \notag\\&+  C_3 \sum_{t=0}^{T}\|\y_{t} - \y_{t}^*\|^2 +C_4\sum_{t=0}^{T}\| \v_{t} \|^2+ C_5\sum_{t=0}^{T}\|\u_{t}- \ot\bu_{t}\|^2,
\end{align}
where the constants are relegated to our Appendix.
With appropriately chosen parameters to ensure $C_1,C_4,C_5 \leq 0, C_2\leq-\frac{1-\lambda}{4},  C_3\leq -\frac{3  r^2(1-\lambda)}{32(1+r)L_y^2 }$, we have the following convergence results:
\begin{align}
	\frac{1}{T+1}\sum_{t=0}^{T} \mathfrak{M}_t \leq \frac{\mathfrak{B}_0- \ell^*}{(T+1)\min\{\frac{1-\lambda}{4}, \frac{3  r^2(1-\lambda)}{32(1+r)L_y^2 },\frac{\alpha}{2}\}}.
\end{align}
This completes the proof of Theorem~\ref{thm1}.

\section{ The \algtname Algorithm}\label{SVR-INTERACT}
Note that, in the \algname algorithm (Algorithm \ref{Algorithm_GD}), each agent needs to evaluate the full local gradient in each iteration, which might not be feasible for many large-scale problems. 
This motivates us to develop a stochastic version of \algname that circumvents the need to compute full local gradients in each iteration. 
To this end, we  leverage the variance reduce methods to design \algtnamens, a stochastic version of \algname that achieves low sample complexity for solving decentralized bilevel optimization problem.
In this section, we first present the \algtname method, and then provide its convergence guarantees.

In particular, we further define an (approximate) stochastic gradient estimate of the local gradient in \eqref{eqs3}. We define the stochastic gradient of $\bar\nabla f_{i}(\x_{i,t},\y_{i,t})$ by  $\bar\nabla f_{i}(x_{i,t}, y_{i,t}; \bar\xi_i)$ as
\begin{align}\label{eqs4}
	\bar\nabla f_{i}&(\x_{i,t},\y_{i,t}; \bar\xi_i)=   \nabla_{\x} f_{i}(\x_{i,t},\y_{i,t};\xi_i^0) -\frac{K}{L_g } \nabla_{\x\y}^2 g_{i}(\x_{i,t},\y_{i,t};\zeta_i^0)\notag\\&\cdot \prod_{j=1}^{\mathrm{k}(K)}( I - \frac{\nabla_{\y\y}^2 g_{i}(\x_{i,t},\y_{i,t};\zeta_i^j) }{L_g})  \nabla_{\y} f_{i}(\x_{i,t},\y_{i,t};\xi_i^0),
\end{align}
where $\mathrm{k}(K) \sim \mathcal{U}\{0, \ldots, K-1\}$ denotes a random variable uniformly chosen from $\{0, \ldots, K-1\}$. 
Note that the stochastic gradient estimator collects $K+2\in \mathbb{N}$ independent samples $\bar{\xi}_i:=\left\{\xi_i', \xi_i^{0}, \ldots, \xi_i^{K}, \mathrm{k}(K)\right\}$, where $\xi_i',\xi_i^j \sim \mu, \zeta_i^{j} \sim \pi, j=0, \ldots, K$ denote the samples of the outer and inner objectives. 
Next, we define the convergence metric that we will utilize for solving the decentralized bilevel learning problem.

\subsection{Algorithm Description} 
In \algtnamens, we use the same network consensus approach as in \algnamens. 
%

The proposed \algtname algorithm is illustrated in Algorithm~\ref{Algorithm_VR}. 
In each iteration, every agent $i \in [m]$ estimates its full gradients every $q$ steps. 
For other iterations, when $mod(t,q)\neq0$, \algtname estimates local gradients $\p_{i}(\x_{i,t},\y_{i,t}) $ and $\d_{i}(\x_{i,t},\y_{i,t}) $ using the following gradient estimators:
	    	\begin{align}\label{Local_gradient_x_VR}
	\p_t \!= \!	\p_{t-1}\!+\!	\frac{1}{| \mathcal{S}|}\sum_{i=1}^{\mathcal{S}} \big[\bar\nabla f_{i}&(\x_{i,t},\y_{i,t}; \bar\xi_i )\!-\!\bar\nabla f_{i}(\x_{i,t-1},\y_{i,t-1};  \bar\xi_i )\big];
\end{align}
\vspace{-0.1in}
		\begin{align}\label{Local_gradient_y_VR}
	\d_t \!= \!	\d_{t-1}\!+	\!\frac{1}{ |\mathcal{S}|}\sum_{i=1}^{\mathcal{S}} \big[	\nabla g_{i}(\x_{i,t},\y_{i,t}; \bar{\xi_i'} )\!-\!\nabla g_{i}(\x_{i,t-1},\y_{i,t-1}; \bar{\xi_i'})\big],
\end{align}

where we define:
\begin{align}
	\p_t \triangleq \left[\p_1(\x_{1,t},\y_{1,t})^{\top}, \cdots, \p_m(\x_{m,t},\y_{m,t})^{\top}\right]^{\top};\notag\\ \d_t \triangleq \big[\d_1(\x_{1,t},\y_{1,t})^{\top}, \cdots, \d_m(\x_{m,t},\y_{m,t})^{\top}\big]^{\top}.
\end{align}

Note that in contrast to \algnamens, \algtname utilizes a variance-reduced gradient estimator. Moreover, \algtname makes use of gradient tracking and consensus updates similar to the \algname algorithm. 
Next, we will present the convergence rate results of \algtnamens. 
\begin{algorithm}[t!]
	\caption{The \algtname Algorithm.}\label{Algorithm_VR}.
	\begin{algorithmic} 
		\State Set parameter pair $\forall i \in [m], (\x_{i,0},\y_{i,0}) = (\x^0,\y^0)$.
		\State At each agent $i \in [m]$, compute the local gradients:
		{{\begin{align}
					\u_{i,0} =&  \bar\nabla f_{i}(\x_{i,0},\y_{i,0}); 
					\v_{i,0} =  \nabla_{\y} g_{i}(\x_{i,0},\y_{i,0});
		\end{align}}}
		\For{$t = 1, \cdots, T$}
	\State Update local parameters using (\ref{consensus_x_GD}) and (\ref{consensus_y_GD});
	 \If{$mod(t,q)=0$}
	\State Compute local gradients using (\ref{Local_gradient_x_GD}) and (\ref{Local_gradient_y_GD});
	\Else
		\State Compute local gradients using (\ref{Local_gradient_x_VR}) and (\ref{Local_gradient_y_VR});
       	   \EndIf
       	   	\State Track the gradient using (\ref{gradient_tracking_x_GD});
		\EndFor
	\end{algorithmic}
\end{algorithm}

\subsection{Convergence Results of \algtname}
\smallskip
For \algtnamens, we need an extra assumption on the outer and inner objective functions in Problem~\eqref{problem} due to the stochastic gradient estimator we use.  

\begin{assum}[Stochastic Functions] \label{assum3} Assumptions 1 and 2 hold for $f_i(\x_i, \y_i ; \xi)$ and $g_i(x_i, y_i ; \xi)$, for all $\xi \in$ $\operatorname{supp}\left(\pi\right)$, where $\operatorname{supp}(\pi)$ represents the support of $\pi$.
\end{assum}

\begin{lem} \label{lem:stochastic_gradients}
	Under Assumptions \ref{assum1}--\ref{assum2}, $\forall i\in [m], (\x, \y) \in \mathbb{R}^{d_1} \times \mathbb{R}^{d_2}$, the stochatic gradient estimator in (\ref{eqs4}) satisfies:
	\begin{align*}
		\left\|\bar{\nabla} f_i(\x, \y)-\mathbb{E}_{\bar{\xi}_i}[\bar{\nabla} f(\x, \y ; \bar{\xi}_i)]\right\| \leq \frac{C_{g_{x y}} C_{f_{y}}}{\mu_{g}}\left(1-\frac{\mu_{g}}{L_{g}}\right)^{K}.
	\end{align*}
\end{lem}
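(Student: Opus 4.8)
\textbf{Proof proposal for Lemma \ref{lem:stochastic_gradients}.}

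The plan is to exploit the classical Neumann-series identity for the Hessian inverse and bound the bias introduced by truncating the series at a random index. Recall that under Assumption \ref{assum1}(a)--(b) the inner Hessian satisfies $\mu_g I \preceq \nabla_{\y\y}^2 g_i(\x,\y) \preceq L_g I$, so that $0 \preceq I - \tfrac{1}{L_g}\nabla_{\y\y}^2 g_i(\x,\y) \preceq (1 - \tfrac{\mu_g}{L_g}) I$, and therefore
\begin{align*}
	[\nabla_{\y\y}^2 g_i(\x,\y)]^{-1} = \frac{1}{L_g}\sum_{k=0}^{\infty}\Bigl(I - \frac{\nabla_{\y\y}^2 g_i(\x,\y)}{L_g}\Bigr)^{k}.
\end{align*}
First I would take the expectation of the stochastic estimator \eqref{eqs4} over $\bar\xi_i$. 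Using independence of the samples $\xi_i^0,\zeta_i^0,\zeta_i^1,\dots,\zeta_i^K$ and $\mathrm{k}(K)$, the conditional expectation of the product $\prod_{j=1}^{\mathrm{k}(K)}(I - \nabla_{\y\y}^2 g_i(\cdot;\zeta_i^j)/L_g)$ given $\mathrm{k}(K)=k$ factorizes into $\bigl(I - \nabla_{\y\y}^2 g_i(\x,\y)/L_g\bigr)^{k}$ (each factor is an unbiased estimate of the true matrix $I - \nabla_{\y\y}^2 g_i/L_g$, and they are independent). Averaging over $k \sim \mathcal{U}\{0,\dots,K-1\}$ then gives
\begin{align*}
	\mathbb{E}_{\bar\xi_i}[\bar\nabla f_i(\x,\y;\bar\xi_i)] = \nabla_{\x} f_i(\x,\y) - \nabla_{\x\y}^2 g_i(\x,\y)\cdot\frac{1}{L_g}\sum_{k=0}^{K-1}\Bigl(I - \frac{\nabla_{\y\y}^2 g_i(\x,\y)}{L_g}\Bigr)^{k}\nabla_{\y} f_i(\x,\y).
\end{align*}

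Next I would subtract this from the exact hypergradient surrogate $\bar\nabla f_i(\x,\y)$ in \eqref{eqs3}. The $\nabla_{\x} f_i$ terms cancel, and the difference of the two (infinite versus truncated) Neumann series is exactly the tail $\frac{1}{L_g}\sum_{k=K}^{\infty}(I - \nabla_{\y\y}^2 g_i/L_g)^{k}$. Hence
\begin{align*}
	\bar\nabla f_i(\x,\y) - \mathbb{E}_{\bar\xi_i}[\bar\nabla f_i(\x,\y;\bar\xi_i)] = \nabla_{\x\y}^2 g_i(\x,\y)\cdot\frac{1}{L_g}\sum_{k=K}^{\infty}\Bigl(I - \frac{\nabla_{\y\y}^2 g_i(\x,\y)}{L_g}\Bigr)^{k}\nabla_{\y} f_i(\x,\y).
\end{align*}
Taking norms, applying submultiplicativity, using $\|\nabla_{\x\y}^2 g_i\| \le C_{g_{xy}}$ from Assumption \ref{assum1}(c), $\|\nabla_{\y} f_i\| \le C_{f_y}$ from Assumption \ref{assum2}(b), and the operator-norm bound $\|I - \nabla_{\y\y}^2 g_i/L_g\| \le 1 - \mu_g/L_g$, the geometric tail sums to $\frac{1}{L_g}\cdot\frac{(1-\mu_g/L_g)^{K}}{1-(1-\mu_g/L_g)} = \frac{1}{\mu_g}(1-\mu_g/L_g)^{K}$. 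This yields the claimed bound $\|\bar\nabla f_i(\x,\y) - \mathbb{E}_{\bar\xi_i}[\bar\nabla f_i(\x,\y;\bar\xi_i)]\| \le \frac{C_{g_{xy}} C_{f_y}}{\mu_g}(1-\mu_g/L_g)^{K}$.

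I expect the main obstacle to be the bookkeeping in the unbiasedness argument for the matrix product: one must be careful that the $\zeta_i^j$ are drawn independently and that the expectation of a product of independent unbiased matrix estimators equals the product of their expectations (which holds precisely because of independence, even though matrices do not commute in general, since each factor is replaced by its mean before multiplying). Establishing this cleanly — ideally by conditioning on $\mathrm{k}(K)$ first and then peeling off factors one at a time — is the only delicate point; the remaining steps are the standard Neumann-series truncation estimate.
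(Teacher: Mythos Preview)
Your proposal is correct and is precisely the standard Neumann-series truncation argument used for this estimator. The paper does not include its own proof of this lemma---it is stated as a known fact borrowed from prior work (the estimator and this bias bound originate in \cite{ghadimi2018approximation} and are reused in \cite{hong2020two,khanduri2021near})---and the argument in those references is exactly the one you outline: compute $\mathbb{E}_{\bar\xi_i}[\bar\nabla f_i(\x,\y;\bar\xi_i)]$ by conditioning on $\mathrm{k}(K)$ and using independence of the $\zeta_i^j$ to factorize the matrix product, identify the result as the $K$-term truncated Neumann series for $[\nabla_{\y\y}^2 g_i]^{-1}$, and bound the geometric tail using $\|I-\nabla_{\y\y}^2 g_i/L_g\|\le 1-\mu_g/L_g$ together with $\|\nabla_{\x\y}^2 g_i\|\le C_{g_{xy}}$ and $\|\nabla_{\y} f_i\|\le C_{f_y}$.
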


	Besides, we have an additional lemma to characterizes the bias of the stochastic estimator shown in eqs.(\ref{eqs4}), and it also holds for the deterministic estimator shown in eqs.(\ref{eqs3}).
	\begin{lem} \label{lem:stocahstic} 
		Under Assumption \ref{assum1}-\ref{assum3}, we have
		\begin{multline}
			\mathbb{E}_{\bar{\xi}} \left\|{\nabla} f_i\left(\x_{1}, \y_1 ; \bar{\xi}\right)-{\nabla} f_i\left(\x_{2}, \y_2 ; \bar{\xi}\right)\right\|^{2}\leq L_{K}^{2} [\left\| \ \x_{1}- \ \x_{2}\right\|^{2} \\
			+\left\| \y_{1}- \y_{2}\right\|^{2}], \forall  \ \x_{1},  \ \x_{2} \in \mathbb{R}^{d_1},\y_{1},  \y_{2} \in \mathbb{R}^{d_2}  .
		\end{multline}
		In the above expressions, $\exists  L_{K}\geq L_{K_s}$ and $L_{K_s}$ is defined as:
		\begin{align}
			L_{K_s}^{2}=2 L_{f_{x}}^{2}++6 C_{g_{x y}}^{2} L_{f_{y}}^{2}\left(\frac{K}{2 \mu_{g} L_{g}-\mu_{g}^{2}}\right)+6 C_{f_{y}}^{2} L_{g_{x y}}^{2}\left(\frac{K}{2 \mu_{g} L_{g}-\mu_{g}^{2}}\right) \notag\\+ 6 C_{g_{x y}}^{2} C_{f_{y}}^{2} \frac{K^4}{L_{g}^{2}}   \frac{1}{L_g^2}L_{g_{yy}}^{2}.\notag
		\end{align}
	\end{lem}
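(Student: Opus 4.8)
The plan is to follow the proof of the deterministic counterpart, Lemma~\ref{lem:deterministic}, exploiting the fact that the stochastic estimator in~\eqref{eqs4} has the same three-block algebraic form as the hypergradient, with the exact inverse $[\nabla_{\y\y}^2 g_i]^{-1}$ replaced by the scaled truncated-Neumann operator $\frac{K}{L_g}\prod_{j=1}^{\mathrm{k}(K)}\big(I-\frac{1}{L_g}\nabla_{\y\y}^2 g_i(\cdot;\zeta_i^j)\big)$. First I would fix a realization of $\bar{\xi}_i=\{\xi_i',\xi_i^0,\ldots,\xi_i^K,\mathrm{k}(K)\}$, abbreviate $Q(\x,\y):=\prod_{j=1}^{\mathrm{k}(K)}\big(I-\frac{1}{L_g}\nabla_{\y\y}^2 g_i(\x,\y;\zeta_i^j)\big)$ (the empty product being $I$ when $\mathrm{k}(K)=0$), and write $\bar{\nabla} f_i(\x_1,\y_1;\bar{\xi}_i)-\bar{\nabla} f_i(\x_2,\y_2;\bar{\xi}_i)=T_1+T_2$, where $T_1=\nabla_{\x} f_i(\x_1,\y_1;\xi_i^0)-\nabla_{\x} f_i(\x_2,\y_2;\xi_i^0)$ and $T_2=-\frac{K}{L_g}\big[\nabla_{\x\y}^2 g_i(\x_1,\y_1;\zeta_i^0)\,Q(\x_1,\y_1)\,\nabla_{\y} f_i(\x_1,\y_1;\xi_i^0)-\nabla_{\x\y}^2 g_i(\x_2,\y_2;\zeta_i^0)\,Q(\x_2,\y_2)\,\nabla_{\y} f_i(\x_2,\y_2;\xi_i^0)\big]$. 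I would then split $T_2=T_{2a}+T_{2b}+T_{2c}$ by the standard add-and-subtract, so that $T_{2a}$ isolates the change in $\nabla_{\x\y}^2 g_i$ (keeping $Q(\x_1,\y_1)$ and $\nabla_{\y}f_i(\x_1,\y_1)$), $T_{2b}$ isolates the change $Q(\x_1,\y_1)-Q(\x_2,\y_2)$, and $T_{2c}$ isolates the change in $\nabla_{\y}f_i$.

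Next I would bound each term pointwise in $\bar{\xi}_i$ using Assumption~\ref{assum3}: every sampled map $\nabla_{\x}f_i(\cdot;\xi),\nabla_{\y}f_i(\cdot;\xi),\nabla_{\x\y}^2 g_i(\cdot;\zeta),\nabla_{\y\y}^2 g_i(\cdot;\zeta)$ is Lipschitz with the corresponding constant $L_{f_x},L_{f_y},L_{g_{xy}},L_{g_{yy}}$, and $\|\nabla_{\y}f_i\|\le C_{f_y}$, $\|\nabla_{\x\y}^2 g_i\|\le C_{g_{xy}}$; moreover $\mu_g I\preceq\nabla_{\y\y}^2 g_i(\cdot;\zeta)\preceq L_g I$ gives $\big\|I-\frac{1}{L_g}\nabla_{\y\y}^2 g_i(\cdot;\zeta)\big\|\le 1-\frac{\mu_g}{L_g}$, hence $\|Q(\cdot)\|\le(1-\mu_g/L_g)^{\mathrm{k}(K)}\le 1$. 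This yields $\|T_1\|^2\le L_{f_x}^2(\|\x_1-\x_2\|^2+\|\y_1-\y_2\|^2)$, and for $T_{2a},T_{2c}$ bounds of the shape $\frac{K^2}{L_g^2}(1-\mu_g/L_g)^{2\mathrm{k}(K)}(\mathrm{const})^2(\|\x_1-\x_2\|^2+\|\y_1-\y_2\|^2)$ with $(\mathrm{const})^2$ equal to $L_{g_{xy}}^2 C_{f_y}^2$ and $C_{g_{xy}}^2 L_{f_y}^2$, respectively. For $T_{2b}$ I would invoke the product-telescoping identity $\prod_k A_k-\prod_k B_k=\sum_k\big(\prod_{j<k}A_j\big)(A_k-B_k)\big(\prod_{j>k}B_j\big)$ with $A_j,B_j=I-\frac{1}{L_g}\nabla_{\y\y}^2 g_i(\cdot;\zeta_i^j)$; since $\|A_k-B_k\|\le\frac{L_{g_{yy}}}{L_g}\sqrt{\|\x_1-\x_2\|^2+\|\y_1-\y_2\|^2}$, the prefactors have norm at most $1$, and there are at most $\mathrm{k}(K)\le K$ summands, we get $\|Q(\x_1,\y_1)-Q(\x_2,\y_2)\|\le\frac{K L_{g_{yy}}}{L_g}\sqrt{\|\x_1-\x_2\|^2+\|\y_1-\y_2\|^2}$, hence $\|T_{2b}\|^2\le\frac{K^4 C_{g_{xy}}^2 C_{f_y}^2 L_{g_{yy}}^2}{L_g^4}(\|\x_1-\x_2\|^2+\|\y_1-\y_2\|^2)$.

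Finally I would assemble the pieces via $\|T_1+T_2\|^2\le 2\|T_1\|^2+6(\|T_{2a}\|^2+\|T_{2b}\|^2+\|T_{2c}\|^2)$ and take $\mathbb{E}_{\bar{\xi}_i}$. The only surviving randomness is $\mathrm{k}(K)$, entering $T_{2a},T_{2c}$ through $(1-\mu_g/L_g)^{2\mathrm{k}(K)}$; since $\mathrm{k}(K)\sim\mathcal{U}\{0,\ldots,K-1\}$, $\mathbb{E}\big[(1-\mu_g/L_g)^{2\mathrm{k}(K)}\big]=\frac{1}{K}\sum_{j=0}^{K-1}(1-\mu_g/L_g)^{2j}\le\frac{1}{K}\cdot\frac{L_g^2}{2\mu_g L_g-\mu_g^2}$, which turns the $\frac{K^2}{L_g^2}$ prefactor into $\frac{K}{2\mu_g L_g-\mu_g^2}$ and reproduces exactly the first three blocks of $L_{K_s}^2$, while $T_{2b}$ contributes the last block $6 C_{g_{xy}}^2 C_{f_y}^2\frac{K^4}{L_g^4}L_{g_{yy}}^2$. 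The claim for the deterministic estimator~\eqref{eqs3} follows by the identical decomposition with $Q$ replaced by $[\nabla_{\y\y}^2 g_i]^{-1}$, using $\|[\nabla_{\y\y}^2 g_i]^{-1}\|\le 1/\mu_g$ and $\|[\nabla_{\y\y}^2 g_i(\x_1,\y_1)]^{-1}-[\nabla_{\y\y}^2 g_i(\x_2,\y_2)]^{-1}\|\le\frac{L_{g_{yy}}}{\mu_g^2}\sqrt{\|\x_1-\x_2\|^2+\|\y_1-\y_2\|^2}$, which recovers $L_{K_d}^2$ of Lemma~\ref{lem:deterministic}.

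I expect the $T_{2b}$ term to be the main obstacle: one must invoke the product telescoping cleanly while keeping the sampled indices $\zeta_i^0,\ldots,\zeta_i^{\mathrm{k}(K)}$ identical on both sides of the difference (so that the factor-wise differences only feel the change in $(\x,\y)$), and simultaneously track $\mathrm{k}(K)$ in its dual role as the length of the matrix product and as the exponent governing the geometric decay of the partial products. Keeping the $K^2$ versus $K$ powers straight across the two families of terms — and recognizing that the $K^2/L_g^2$ prefactor collapses to $K/(2\mu_g L_g-\mu_g^2)$ only after the expectation over $\mathrm{k}(K)$ — is the delicate bookkeeping.
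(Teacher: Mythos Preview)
Your proposal is correct and follows essentially the same decomposition and bounding strategy as the paper: the same $2\|T_1\|^2+6(\|T_{2a}\|^2+\|T_{2b}\|^2+\|T_{2c}\|^2)$ split, the same expectation $\mathbb{E}_{\mathrm{k}(K)}[(1-\mu_g/L_g)^{2\mathrm{k}(K)}]\le\frac{L_g^2}{K(2\mu_g L_g-\mu_g^2)}$ for $T_{2a},T_{2c}$ (the paper imports this as Lemma~A.1 of~\cite{khanduri2021near}), and a product-telescoping bound for $T_{2b}$. The only cosmetic difference is that the paper retains the factor $(1-\mu_g/L_g)^{2(\mathrm{k}(K)-1)}$ in the $T_{2b}$ term and then crudely bounds $\max_{\mathrm{k}(K)}\mathrm{k}(K)^2(1-\mu_g/L_g)^{2(\mathrm{k}(K)-1)}\le K^2$, whereas you bound the partial products by $1$ and $\mathrm{k}(K)\le K$ from the outset; both routes land on the identical $K^4/L_g^4$ coefficient.
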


To establish the convergence of \algtnamens, we use the same convergence metric as that of \algname as shown in (\ref{Eq: metric1}).

\smallskip
{\bf 1) Main Theorem of \algtnamens:} We state the main convergence rate result of \algtname in the following theorem:

\begin{thm}	(Convergence of \algtnamens)
	\label{thm2}
	Under Assumptions \ref{assum1}-\ref{assum3}, if the step-sizes satisfy 
	$  \beta\leq \min\{ \frac{(1-\lambda)\mu_g L_g}{768 L_K^2 (\mu_g+L_g)} , \frac{(1-\lambda) (\mu_g+L_g)}{4096 L_K^2 } $,
	
	$\frac{3 (\mu_g+L_g) }{\mu_gL_g},  \frac{ L_y^2 \mu_g L_g}{24 L_K^2(\mu_g+L_g) }, \frac{(1-\lambda)(\mu_g+L_g)}{512L_K^2} ,  \frac{1}{2(\mu_g+L_g)},  \frac{16}{(1-\lambda)(\mu_g+L_g)} \} $,
	
 $ \alpha\leq\{  \frac{1}{8L_{\ell}},    \frac{r}{16m L_y^2(r+1)},\frac{1}{8L_K\sqrt{m}} ,\frac{1}{m(1-\lambda)}  , \frac{(1-\lambda)^2}{128L_K^2 } , \frac{  (1-\lambda)}{4}(\frac{m}{L_{\ell}+16L_K^2m})$,
 
$ \frac{1}{16 L_K} \sqrt{\frac{1-\lambda}{m}}  ,  \frac{288 r (1+r) m L_y^2 }{(1-\lambda)L_f^2},  {\frac{r (1+r) (1-\lambda) }{256L_y^2 (\mu_g+L_g) L_K^2 \beta }}, \frac{r(1+r)(1-\lambda)^2}{512L_y^2 (\mu_g+L_g)(L_K^2\beta)} $,

$  \frac{\sqrt{ 1-\lambda}}{8L_K},   \frac{(1-\lambda)^2}{4}, \frac{32L_y^2 }{16(1+\frac{1}{r})L_y^2},\sqrt{\frac{1-\lambda}{64L_K^2} }, \frac{32L_y^2}{(1-\lambda)}  \},   r= \frac{1}{3}\beta \frac{\mu_g L_g}{\mu_g+L_g }$,
	then the iterates $\{\x_t,\y_t\}$ generated by \algtname satisfy
\begin{align}
\frac{1}{T+1}\sum_{t=0}^{T} \mathfrak{M}_t &\leq  \frac{\mathfrak{B}_0- \ell^*}{(T+1)\min\{\frac{1-\lambda}{4}, \frac{3  r^2(1-\lambda)}{32(1+r)L_y^2 },\frac{\alpha}{2}\}} + C_{\mathrm{bias}} \nonumber\\
&= \mathcal{O}(\frac{1}{T+1})+ C_{\mathrm{bias}},
\end{align}
where $ \mathfrak{B}_t=	\ell(\bx_{t}) + \|\by_{t} - \y_{t}^*\|^2  +	\|\x_{t}-\ot\bx_{t}\|^2 +
\alpha 	\|\u_{t}-\ot\bu_{t}\|^2, C_{\mathrm{bias}} \triangleq \frac{2\alpha  ( \frac{C_{g_{x y}} C_{f_{y}}}{\mu_{g}}\left(1-\frac{\mu_{g}}{L_{g}}\right)^{K})^2}{\min\{\frac{1-\lambda}{4}, \frac{3  r^2(1-\lambda)}{32(1+r)L_y^2 },\frac{\alpha}{2}\}}$.
\end{thm}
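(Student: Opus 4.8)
The plan is to follow the same five-step template used for Theorem~\ref{thm1}, augmenting each step to absorb (i) the \emph{bias} of the stochastic hypergradient estimator \eqref{eqs4} quantified in Lemma~\ref{lem:stochastic_gradients}, and (ii) the \emph{variance} introduced by the recursive (SARAH-type) estimators \eqref{Local_gradient_x_VR}--\eqref{Local_gradient_y_VR}. First I would reproduce Step~1, the descending inequality for $\ell(\bx_t)$; structurally it is unchanged, except that the cross term $2\alpha\|\frac{1}{m}\sum_{i=1}^m \bar\nabla f_i(\x_{i,t},\y_{i,t}) - \bar\u_t\|^2$ no longer vanishes. Using the gradient-tracking recursion \eqref{gradient_tracking_x_GD} together with the initialization $\u_{i,0}=\bar\nabla f_i(\x_{i,0},\y_{i,0})$ and double stochasticity of $\M$, one has $\bar\u_t=\bar\p_t$, so this term equals $2\alpha\|\frac1m\sum_i\bar\nabla f_i(\x_{i,t},\y_{i,t})-\bar\p_t\|^2$, i.e., the squared estimation error of the recursive estimator. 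I would split it with Young's inequality into a \emph{bias} piece, bounded in expectation by $\big(\tfrac{C_{g_{xy}}C_{f_y}}{\mu_g}(1-\tfrac{\mu_g}{L_g})^{K}\big)^2$ via Lemma~\ref{lem:stochastic_gradients} — this is the seed of $C_{\mathrm{bias}}$ — and a \emph{deviation} piece $\mathbb{E}\|\bar\p_t-\mathbb{E}_t[\bar\p_t]\|^2$, to be controlled by the new variance-accumulation step below.

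Second, I would redo Step~2 (error bound on $\y^{*}(\x)$) and Step~3 (contraction of $\|\x_t-\ot\bx_t\|^2$ and $\|\u_t-\ot\bu_t\|^2$ via $\Mt=\M\otimes\I$) exactly as in the deterministic proof, now stated in expectation; the only substantive change is that the bound on $\|\p_t-\p_{t-1}\|^2$ is obtained from the mean-square Lipschitz property of the \emph{stochastic} estimator in Lemma~\ref{lem:stocahstic} (with $L_K\ge L_{K_s}$) rather than Lemma~\ref{lem:deterministic}, yielding $\mathbb{E}\|\p_t-\p_{t-1}\|^2\le L_K^2 m\big(\mathbb{E}\|\x_t-\x_{t-1}\|^2+\beta^2\mathbb{E}\|\v_{t-1}\|^2\big)$ on refresh steps plus a remainder of order $1/|\mathcal{S}|$ on the intervening steps.

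Third — and this is the core new ingredient — I would prove a variance-accumulation lemma. Since $\p_t$ and $\d_t$ are reset to the full local gradients every $q$ iterations and otherwise evolve recursively, the estimation error accumulates only within an epoch. Telescoping the SARAH recursion over the $q$ steps since the last refresh and invoking the per-increment second-moment bound $\mathbb{E}\|\bar\nabla f_i(\x_{i,t},\y_{i,t};\bar\xi_i)-\bar\nabla f_i(\x_{i,t-1},\y_{i,t-1};\bar\xi_i)\|^2\le L_K^2(\|\x_{i,t}-\x_{i,t-1}\|^2+\|\y_{i,t}-\y_{i,t-1}\|^2)$ from Lemma~\ref{lem:stocahstic} (averaged over the minibatch $\mathcal{S}$), I would show that, summed over $t=0,\dots,T$, the aggregate deviation is bounded by $\tfrac{q}{|\mathcal{S}|}L_K^2\sum_t\big(\mathbb{E}\|\x_{t+1}-\x_t\|^2+\beta^2\mathbb{E}\|\v_t\|^2\big)$ up to absolute constants, and analogously for $\d_t$. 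Choosing $q$ and $|\mathcal{S}|$ of order $\sqrt n$ makes $q/|\mathcal{S}|=\mathcal{O}(1)$, so these deviation terms are of exactly the types already present in Steps~1--3 (multiples of $\alpha^2\|\bu_t\|^2$, $\beta^2\|\v_t\|^2$, $\|\u_t-\ot\bu_t\|^2$, and $\|\x_t-\ot\bx_t\|^2$ after using $\|\x_{t+1}-\x_t\|^2\lesssim \alpha^2\|\bu_t\|^2+\|\x_t-\ot\bx_t\|^2$) and can be folded into them.

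Finally I would assemble the potential function $\mathfrak{B}_t=\ell(\bx_t)+\frac{1-\lambda}{32(1+1/r)L_y^2}\|\by_t-\y_t^*\|^2+\|\x_t-\ot\bx_t\|^2+\alpha\|\u_t-\ot\bu_t\|^2$ as in Step~4, collect all coefficients of $\mathbb{E}\|\bu_t\|^2$, $\mathbb{E}\|\v_t\|^2$, $\mathbb{E}\|\u_t-\ot\bu_t\|^2$ and the deviation terms, and verify that the long list of step-size constraints on $\alpha,\beta$ in the theorem statement — together with $r=\tfrac13\beta\tfrac{\mu_g L_g}{\mu_g+L_g}$ and the choices $c_1=c_2=\tfrac1\lambda-1$ — forces every one of these coefficients to be nonpositive while keeping the coefficients of $\|\nabla\ell(\bx_t)\|^2$, $\|\x_t-\ot\bx_t\|^2$, $\|\y_t-\y_t^*\|^2$ at $-\tfrac{\alpha}{2}$, $-\tfrac{1-\lambda}{4}$, $-\tfrac{3r^2(1-\lambda)}{32(1+r)L_y^2}$, respectively. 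Telescoping from $t=0$ to $T$, using $\mathfrak{B}_{T+1}\ge\ell^*$, and dividing by $T+1$ then gives $\frac{1}{T+1}\sum_t\mathfrak{M}_t\le\frac{\mathfrak{B}_0-\ell^*}{(T+1)\min\{\frac{1-\lambda}{4},\frac{3r^2(1-\lambda)}{32(1+r)L_y^2},\frac{\alpha}{2}\}}+C_{\mathrm{bias}}$, where the additive constant is precisely the bias residue $\frac{2\alpha(\frac{C_{g_{xy}}C_{f_y}}{\mu_g}(1-\mu_g/L_g)^{K})^2}{\min\{\cdot\}}$ that survives the averaging. I expect the main obstacle to be exactly the bookkeeping in the variance-accumulation lemma and in Step~4: one must pick the Young's-inequality split constants and the epoch/batch parameters so that the extra deviation and bias contributions are \emph{uniformly} dominated across all $q$ residue classes of $t\bmod q$ and simultaneously absorbed into the already-negative coefficients, which is what pins down the $\mathcal{O}(\sqrt n)$ scaling of $q$ and $|\mathcal{S}|$ and hence the claimed $\mathcal{O}(\sqrt n\,\epsilon^{-1})$ sample complexity.
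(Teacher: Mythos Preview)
Your proposal is correct and follows essentially the same approach as the paper: the paper reuses Steps~1--4 verbatim, handles the stochastic term by the same bias/deviation split (Lemma~\ref{lem:stochastic_gradients} for the bias, SARAH telescoping within each length-$q$ epoch with $|\mathcal{S}|=q$ for the deviation), bounds $\|\p_t-\p_{t-1}\|^2$ by the Lipschitz-type Lemma~\ref{lem:stocahstic} in two cases (refresh vs.\ in-epoch), then assembles the same potential $\mathfrak{B}_t$ with $c_1=c_2=\tfrac{1}{\lambda}-1$ and checks the coefficients $C_1',\dots,C_5'$ one by one against the step-size list. One minor slip: your bound $\mathbb{E}\|\p_t-\p_{t-1}\|^2\le L_K^2 m(\cdots)$ carries a spurious factor of $m$, since $\|\x_t-\x_{t-1}\|^2$ is already the stacked-vector norm summing over all agents; the paper's version is $\le 4L_K^2(\|\x_t-\x_{t-1}\|^2+\beta^2\|\v_{t-1}\|^2)$.
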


\begin{remark} {\em
	From the statement of Theorem \ref{thm2}, it can be seen that the step-sizes $\alpha$ and $\beta$ for \algtname depend on the network topology, the Lipschitz constants, and the number of agents.
	Additionally, the convergence performance of \algtname is affected by the constant $C_{bias}$, which is the bias term affected by the stochastic gradient estimator.  
}
\end{remark}

Theorem~\ref{thm2} immediately implies the following sample and communication complexity results of \algtnamens: 
\begin{cor}\label{Cor: SVR_INTERACT} 
Under the conditions of Theorem~\ref{thm2}, to achieve an $\epsilon$-stationary solution (cf. definition in Eq.~\eqref{stationary point}), with the number of samples for the gradient estimator in \eqref{eqs4} chosen as $K=\mathcal{O}(\log(1/\epsilon))$, \algtname requires: 1) communication complexity: $\mathcal{O}(\epsilon^{-1})$; and 2) sample complexity: $\mathcal{O}(\sqrt{n}\epsilon^{-1}))$.
\end{cor}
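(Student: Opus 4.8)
The plan is to run the same five-step scheme used for Theorem~\ref{thm1}, but to insert at each stage the two extra sources of error specific to \algtnamens (which also requires the additional Assumption~\ref{assum3}): (i) the deterministic \emph{bias} of the truncated Neumann-series Hessian-inverse estimator in \eqref{eqs4}, and (ii) the \emph{variance} of the recursive (SARAH/SPIDER-type) estimators \eqref{Local_gradient_x_VR}--\eqref{Local_gradient_y_VR}. The bias will end up as the residual constant $C_{\mathrm{bias}}$, while the variance must be shown to be summable over iterations and absorbable into the already-negative coefficients enforced by the step-size conditions.

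First I would take conditional expectations in the outer-function descent inequality of Step~1. Since $\bar\u_t$ is now assembled from stochastic gradients, the term $2\alpha\,\Eb\big\|\tfrac1m\sum_i\bar\nabla f_i(\x_{i,t},\y_{i,t})-\bar\u_t\big\|^2$ no longer vanishes as it did in \algnamens. I would split it into a squared-bias piece, bounded via Lemma~\ref{lem:stochastic_gradients} by $\big(\tfrac{C_{g_{xy}}C_{f_y}}{\mu_g}(1-\mu_g/L_g)^{K}\big)^2$ --- exactly the quantity entering $C_{\mathrm{bias}}$ --- and a variance piece carried to the epoch analysis. Steps~2 and~3 (the contraction of $\|\y_{i,t}-\y_{i,t}^*\|^2$ and the consensus/gradient-tracking recursions for $\|\x_t-\ot\bx_t\|^2$ and $\|\u_t-\ot\bu_t\|^2$) transfer almost verbatim, since they only use the update rules \eqref{consensus_x_GD}--\eqref{gradient_tracking_x_GD} and the spectral-gap inequality $\|\Mt(\z-\ot\bar\z)\|\le\lambda\|\z-\ot\bar\z\|$; the single change is that $\|\p_t-\p_{t-1}\|^2$ is now bounded through Lemma~\ref{lem:stocahstic} (in expectation) rather than Lemma~\ref{lem:deterministic}, so $L_{K_s}$ replaces $L_{K_d}$ inside the constants.

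The genuinely new ingredient is an epoch-wise variance lemma: for an iteration $t$ with $\mathrm{mod}(t,q)\neq 0$, the recursive structure gives $\Eb\|\p_t-\Eb_{\bar\xi}[\,\cdot\,]\|^2$ (and likewise for $\d_t$) bounded by a telescoping sum, over the steps $s$ of the current length-$q$ epoch, of $\Eb\|\x_s-\x_{s-1}\|^2+\Eb\|\y_s-\y_{s-1}\|^2$, scaled by $1/|\mathcal S|$, using the $L_K$-Lipschitz bound of Lemma~\ref{lem:stocahstic} and the fact that a fresh full gradient is recomputed every $q$ steps so the error resets. Substituting $\|\x_s-\x_{s-1}\|^2\le 2\alpha^2\|\u_{s-1}\|^2+\cdots$ and $\|\y_s-\y_{s-1}\|^2=\beta^2\|\v_{s-1}\|^2$ and summing over epochs converts this accumulated variance into multiples of $\sum_t\|\bu_t\|^2$, $\sum_t\|\v_t\|^2$ and $\sum_t\|\u_t-\ot\bu_t\|^2$, i.e. exactly the slots already present in \eqref{step4}; taking $q$ and $|\mathcal S|$ of order $\sqrt n$ (so that the accumulated variance carries a factor $q/|\mathcal S|=\mathcal O(1)$) makes these dominated by the negative coefficients granted by the step-size bounds. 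I then use the same potential $\mathfrak{B}_t=\ell(\bx_t)+\tfrac{1-\lambda}{32(1+1/r)L_y^2}\|\y_t-\y_t^*\|^2+\|\x_t-\ot\bx_t\|^2+\alpha\|\u_t-\ot\bu_t\|^2$, telescope over $t=0,\dots,T$, and choose parameters so that the analogous constants satisfy $C_1,C_4,C_5\le 0$, $C_2\le-\tfrac{1-\lambda}{4}$, $C_3\le-\tfrac{3r^2(1-\lambda)}{32(1+r)L_y^2}$ as in Step~5, leaving only the persistent bias: $\tfrac{1}{T+1}\sum_t\mathfrak{M}_t\le\frac{\mathfrak{B}_0-\ell^*}{(T+1)\min\{\cdots\}}+C_{\mathrm{bias}}$.

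For Corollary~\ref{Cor: SVR_INTERACT}, taking $K=\mathcal O(\log(1/\epsilon))$ forces $(1-\mu_g/L_g)^{2K}$, hence $C_{\mathrm{bias}}$, to be $\mathcal O(\epsilon)$, while making the $\mathcal O(1/(T+1))$ term $\mathcal O(\epsilon)$ needs $T=\mathcal O(\epsilon^{-1})$, i.e. $\mathcal O(\epsilon^{-1})$ communication rounds; the per-iteration sample cost is $\mathcal O(\sqrt n\,K)$ on the stochastic steps, an $\mathcal O(n)$ full gradient on the $\mathcal O(T/q)=\mathcal O(\epsilon^{-1}/\sqrt n)$ checkpoint steps, plus an $\mathcal O(n)$ initialization, for a total of $\mathcal O(\sqrt n\,\epsilon^{-1})$ IFO calls up to logarithmic factors. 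I expect the main obstacle to be precisely this epoch-wise variance bookkeeping under gradient tracking: the tracking variable $\u_t$ couples the estimation errors of all agents across consecutive iterations, so one must verify that the noise it propagates stays summable and can be charged against the negative $\|\bu_t\|^2$ and $\|\u_t-\ot\bu_t\|^2$ coefficients without eroding the $\mathcal O(1/T)$ rate --- which is what forces the simultaneous tuning of $q$, $|\mathcal S|$, $\alpha$, $\beta$ and the spectral gap $1-\lambda$ in the theorem statement.
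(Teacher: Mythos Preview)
Your proposal is correct and follows essentially the same route as the paper. In particular, the paper's proof of Theorem~\ref{thm2} performs exactly the bias--variance split you describe (Lemma~\ref{lem:stochastic_gradients} for the bias, an epoch-wise telescoping bound with $|\mathcal S|=q$ for the variance of $\p_t$ and $\d_t$), reuses Steps~1--4 verbatim, substitutes the resulting bounds into \eqref{step4} to obtain constants $C_1',\dots,C_5'$ that are forced non-positive (or $\le -\tfrac{1-\lambda}{4}$, $\le -\tfrac{3r^2(1-\lambda)}{32(1+r)L_y^2}$) by the stated step-size conditions, and then derives the corollary by choosing $K=\mathcal O(\log(1/\epsilon))$ and $q=|\mathcal S|=\Theta(\sqrt n)$; your sample-count arithmetic in the final paragraph matches the $\mathcal O(\sqrt n\,K\epsilon^{-1}+n)$ entry in Table~\ref{tab: Comparison}.
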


\begin{remark} {\em
Corollary \ref{Cor: SVR_INTERACT} indicates that \algtname has the same communication complexity as that of \algnamens, but improves the sample complexity of \algnamens.
}
\end{remark}

\subsection{Proof Sketch of the Convergence Results}
The proof of Theorem~\ref{thm2} follows the same structure as that of Theorem~\ref{thm1}, with Step~1)--Step 4) being identical.
Thus, we omit the first four steps for brevity and only focus on the last step in this paper.
In Algortihm \ref{Algorithm_VR}, we can show the following relations:
$\| \frac{1}{m} \sum_{i=1}^{m} \bar\nabla f_i (\x_{i,t} ,\y_{i,t} ) - \bar\u_{ t} \|^2=0$ every $q$ iterations. Let $n_{t}$ denotes the largest positive integer that satisfies $\left(n_{t}-1\right) q \leq t$.
With $t \in\left(\left(n_{t}-1\right) q, n_{t} q-1\right] \cap \mathbb{Z}$, we have
$
\Big\| \frac{1}{m} \sum_{i=1}^{m} \bar\nabla f_i (\x_{i,t} ,\y_{i,t} ) - \bar\u_{ t} \Big\|^2 
\leq  \frac{1}{m} \sum_{i=1}^{m} \|\bar\nabla f_i (\x_{i,t} ,\y_{i,t} ) - \p_i(\x_{i,t},\y_{i,t} ) \|^2.
$
We note that the next step is one of the most crucial steps in our proofs. 
The bias term has to be eliminated from the $ \|\bar\nabla f_i (\x_{i,t} ,\y_{i,t} ) - \p_i(\x_{i,t},\y_{i,t} ) \|^2$ in advance; otherwise, we will not be able to use the mean variance theorem later.
From the algorithm update of \algtnamens, we have
\begin{align} 
	& \mathbb{E}_t \|\bar\nabla f_i (\x_{i,t} ,\y_{i,t} ) - \p_i(\x_{i,t},\y_{i,t} ) \|^2
	\notag\\
	\leq& \mathbb{E}_t \| \p_i(\x_{i,t},\y_{i,t} )-\mathbb{E}_{\bar\xi_i}[ \p_i(\x_{i,t},\y_{i,t} )] \|^2 + \bigg(\frac{C_{g_{x y}} C_{f_{y}}}{\mu_{g}}\left(1-\frac{\mu_{g}}{L_{g}}\right)^{K} \bigg)^2.\notag
\end{align}
Next, telescoping over $t$ from ($(n_t-1)q+1$ to $t$ and noting $ |\mathcal{S}|=q$ implies that 
\begin{align}  \mathbb{E}_t \| \p_{i}(\x_{i,(n_t-1)q},\y_{i,(n_t-1)q}) -\mathbb{E}_{\bar\xi_i}[ \p_i(\x_{i,(n_t-1)q},\y_{i,(n_t-1)q} )] \|^2=0,\notag
\end{align}
Then, we have
\begin{align*} 
&\sum_{t=0}^T \| \frac{1}{m} \sum_{i=1}^{m} \bar\nabla f_i (\x_{i,t} ,\y_{i,t} ) - \bar\u_{ t} \|^2 \leq \!  \bigg[ \frac{C_{g_{x y}} C_{f_{y}}}{\mu_{g}}\left(1 \!-\! \frac{\mu_{g}}{L_{g}}\right)^{K} \bigg]^2 (T+1) \\
&+L_K^2\sum_{t=0}^T \mathbb{E}_t(\| \x_{t}-\x_{t-1} \|^2+\| \y_{t}-\y_{t-1} \|^2) \!.\notag
\end{align*}
Combing the above inequality and results in Steps~1)--4) and choosing $c_1=c_2=\frac{1}{\lambda}-1$, we have:

\begin{align}
&\ell(\bx_{T+1}) \!-\! \ell(\bx_{0}) \!+ \! \frac{1-\lambda}{32(1+1/r)L_y^2 } \big[\|\by_{T+1} \!-\! \y_{T+1}^*\|^2  - \|\y_0^* - \by_0\|^2\big] \notag\\&+	[\|\x_{T+1}\!-\!\ot\bx_{T+1}\|^2\!-\!	\|\x_{0}\!-\!\ot\bx_{0}\|^2 ] \!+\!\alpha [	\|\u_{T+1}\!-\!\ot\bu_{T+1}\|^2  \notag\\
&-	\|\u_{0}-\ot\bu_{0}\|^2 ] \le -\frac{\alpha}{2} \sum_{t=0}^{T} \|\nabla \ell(\bx_t)\|^2 + C_1'\sum_{t=0}^{T} \|\bu_{t}\|^2 \notag\\
&+C_2'\sum_{t=0}^{T} \| \x_{t} - \ot\bx_{t}  \|^2  + C_3' \sum_{t=0}^{T}\|\y_{t} - \y_{t}^*\|^2 +C_4'\sum_{t=0}^{T}\| \v_{t} \|^2 \notag\\
&+ C_5'\sum_{t=0}^{T}\|\u_{t} \!-\! \ot\bu_{t}\|^2 \!+\! 2\alpha \bigg[ \frac{C_{g_{x y}} C_{f_{y}}}{\mu_{g}}\left(1 \!-\! \frac{\mu_{g}}{L_{g}}\right)^{K} \bigg]^2 (T\!+\!1),
\end{align}

where the definitions of the constants can be found in our Appendix.
With appropriately chosen parameters to ensure $C_1',C_4',C_5' \leq 0, C_2'\leq-\frac{1-\lambda}{4},  C_3'\leq - \beta \frac{\mu_g L_g}{\mu_g+L_g }$, we have the following convergence results:
\begin{align}
\frac{1}{T+1}\sum_{t=0}^{T} \mathfrak{M}_t \leq \frac{\mathfrak{B}_0- \ell^*}{(T+1)\min\{\frac{1-\lambda}{4}, \frac{3  r^2(1-\lambda)}{32(1+r)L_y^2 },\frac{\alpha}{2}\}} + C_{\mathrm{bias}},
\end{align}
where $C_{\mathrm{bias}} \triangleq \frac{2\alpha  ( \frac{C_{g_{x y}} C_{f_{y}}}{\mu_{g}}\left(1-\frac{\mu_{g}}{L_{g}}\right)^{K})^2}{\min\{\frac{1-\lambda}{4}, \frac{3  r^2(1-\lambda)}{32(1+r)L_y^2 },\frac{\alpha}{2}\}}$.
This completes the proof.

\section{Experimental evaluation}\label{Section: experiment}
\begin{figure}[t!]
	\centering
	\subfigure[A five-agent network.]{
		\includegraphics[width=0.13\textwidth]{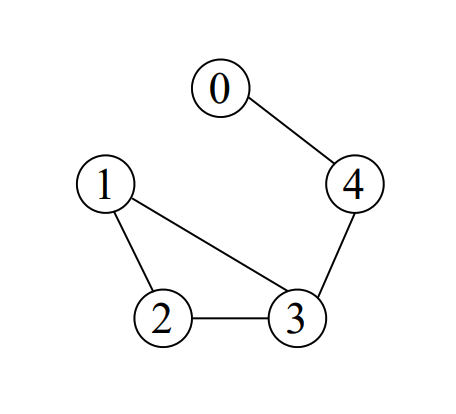}
		\label{Fig:5agent}
	}
	\subfigure[A 10-agent network.]{
		\includegraphics[width=0.13\textwidth]{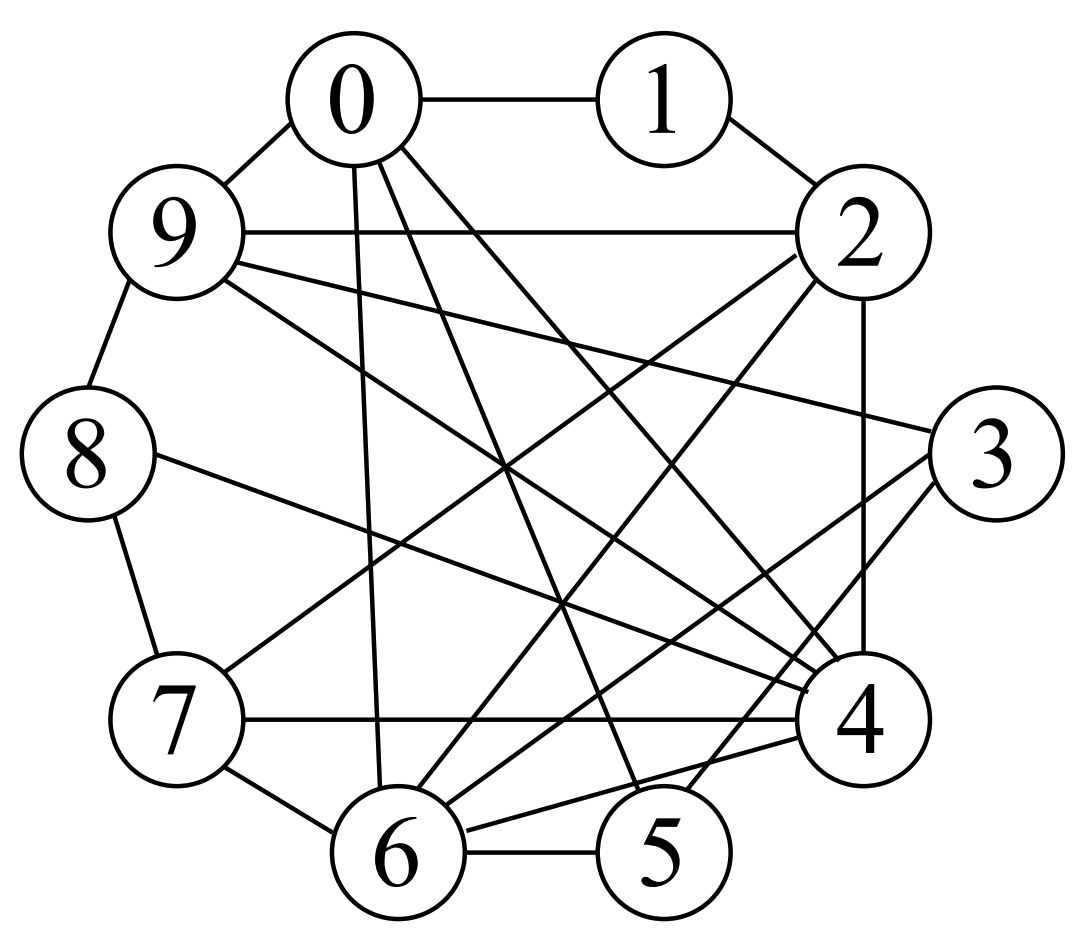}
		\label{Fig:10agent}
	}
\vspace{-0.15in}
	\caption{Different network topology.}
	\label{Fig:p_c_topology}
	\vspace{-0.3in}
\end{figure}

\begin{figure*}[t!]
	\centering
	\subfigure[The MNIST dataset.]{
		\includegraphics[width=0.215\textwidth]{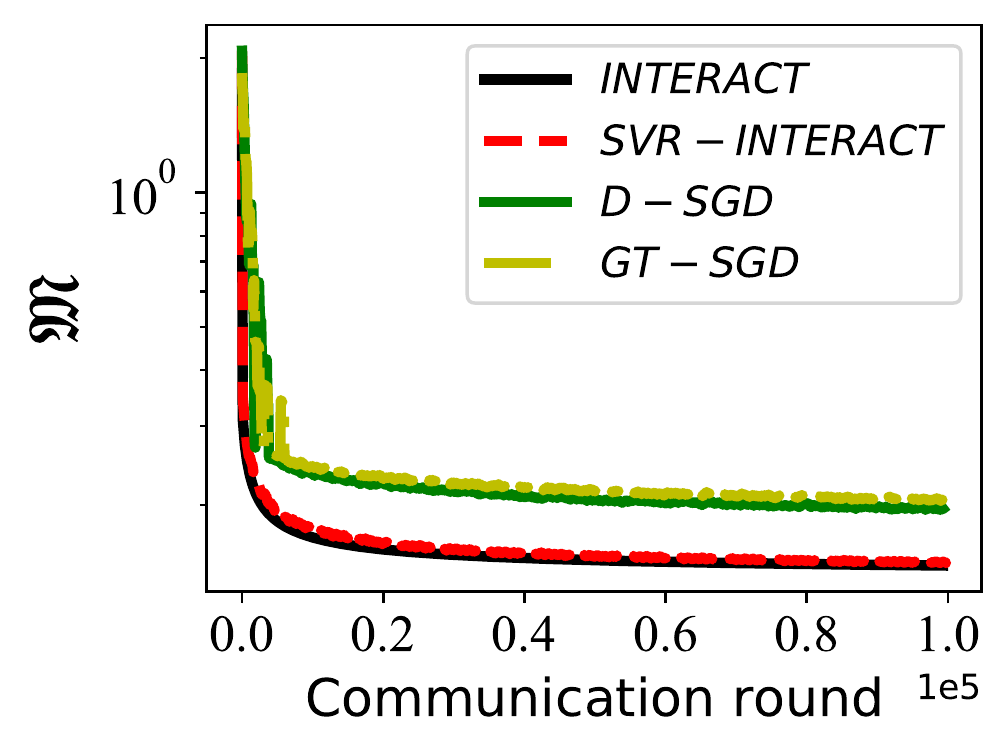}
		\includegraphics[width=0.215\textwidth]{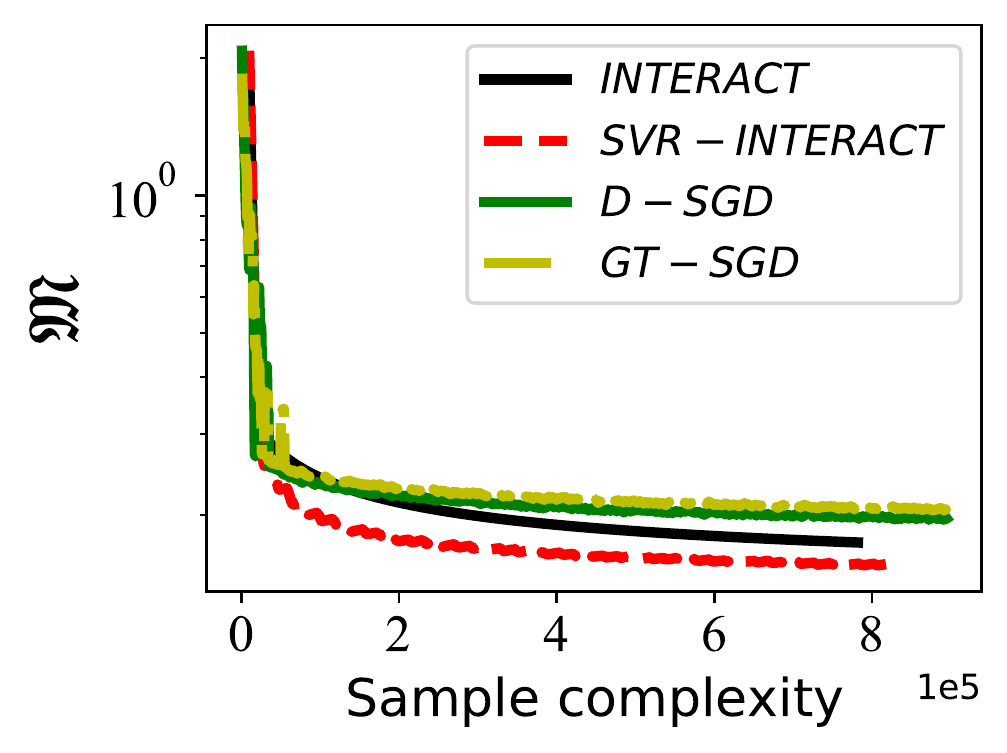}
	}
	\subfigure[The CIFAR-10 dataset.]{
		\includegraphics[width=0.22\textwidth]{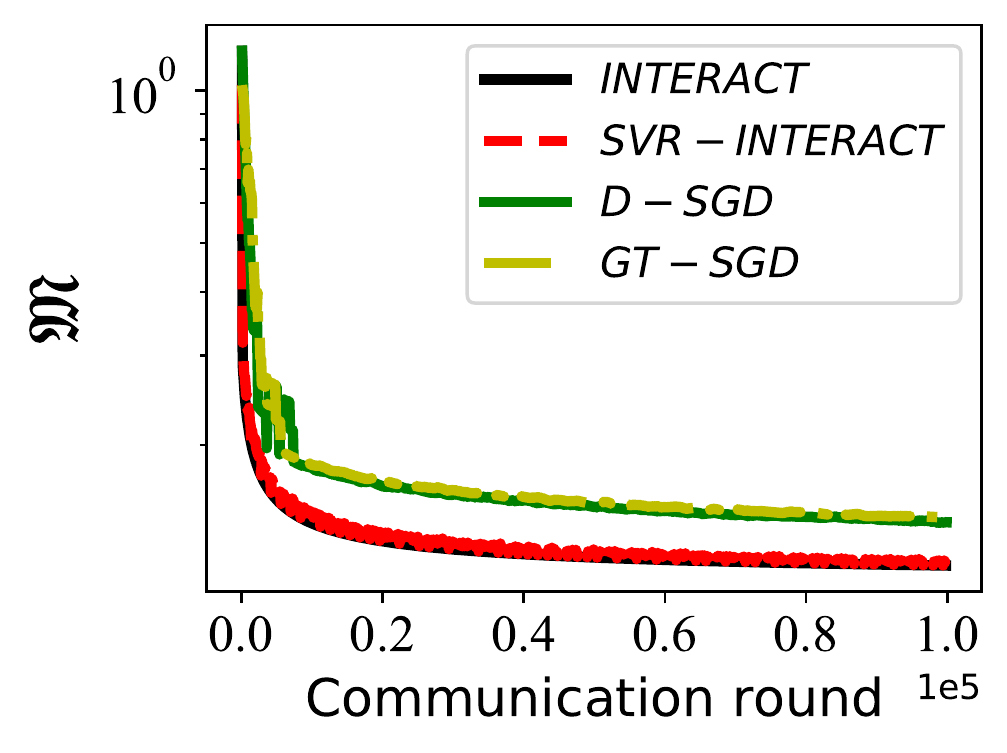}
		\includegraphics[width=0.22\textwidth]{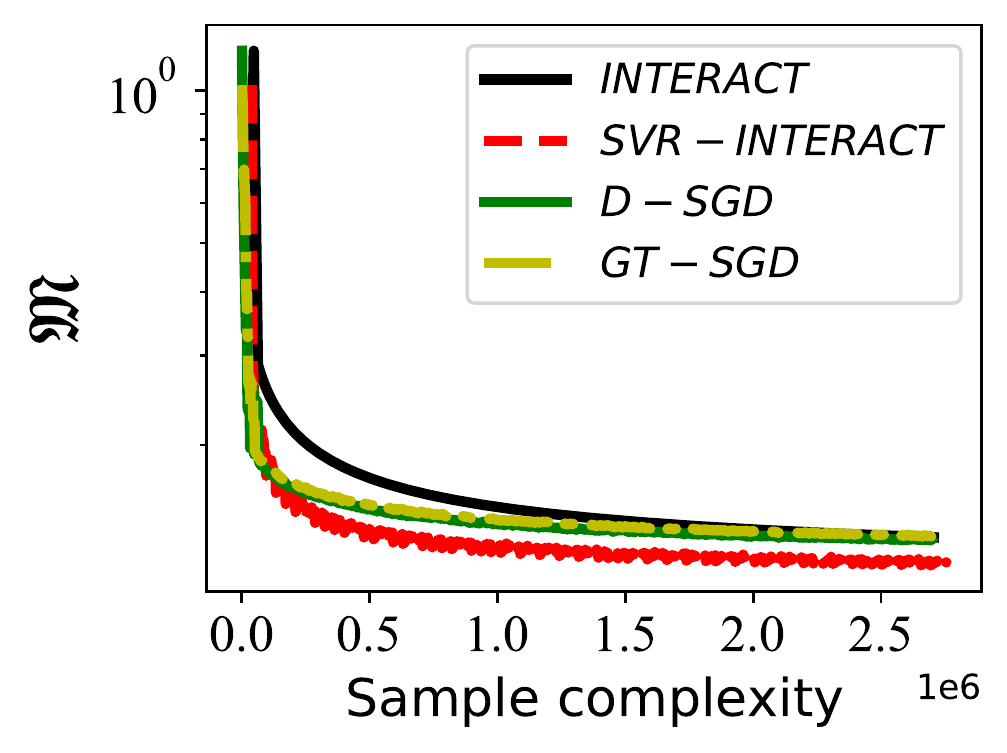}
	}
	\caption{Convergence performance comparisons on the five-agent network.}\label{Fig:performance_5agents}
\end{figure*}

\begin{figure*}[htbp]
	\centering
	\begin{minipage}[t]{0.45\textwidth}
		\centering
		\subfigure[]{
			\includegraphics[width=0.47\textwidth]{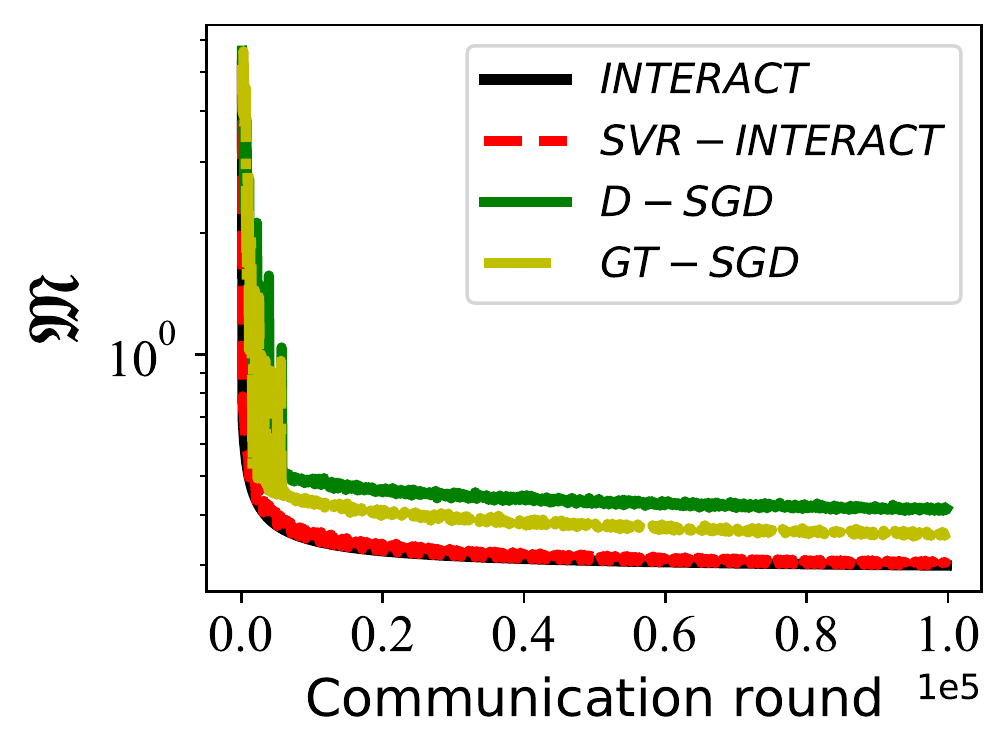}
		}
		\subfigure[]{
			\includegraphics[width=0.48\textwidth]{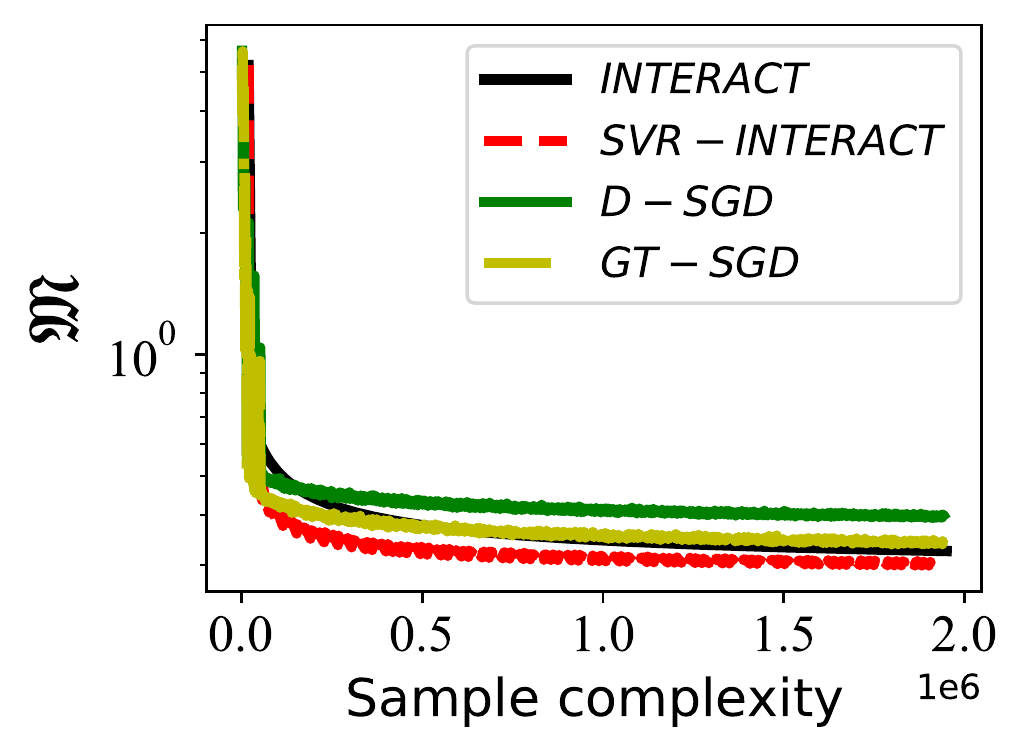}
		}
		\caption{Convergence performance comparisons on the 10-agent network.}\label{Fig:performance_10agents}
	\end{minipage}
	\hspace{.01\textwidth}
	\begin{minipage}[t]{0.45\textwidth}
		\subfigure[The \algname algorithm.]{
			\includegraphics[width=0.47\textwidth]{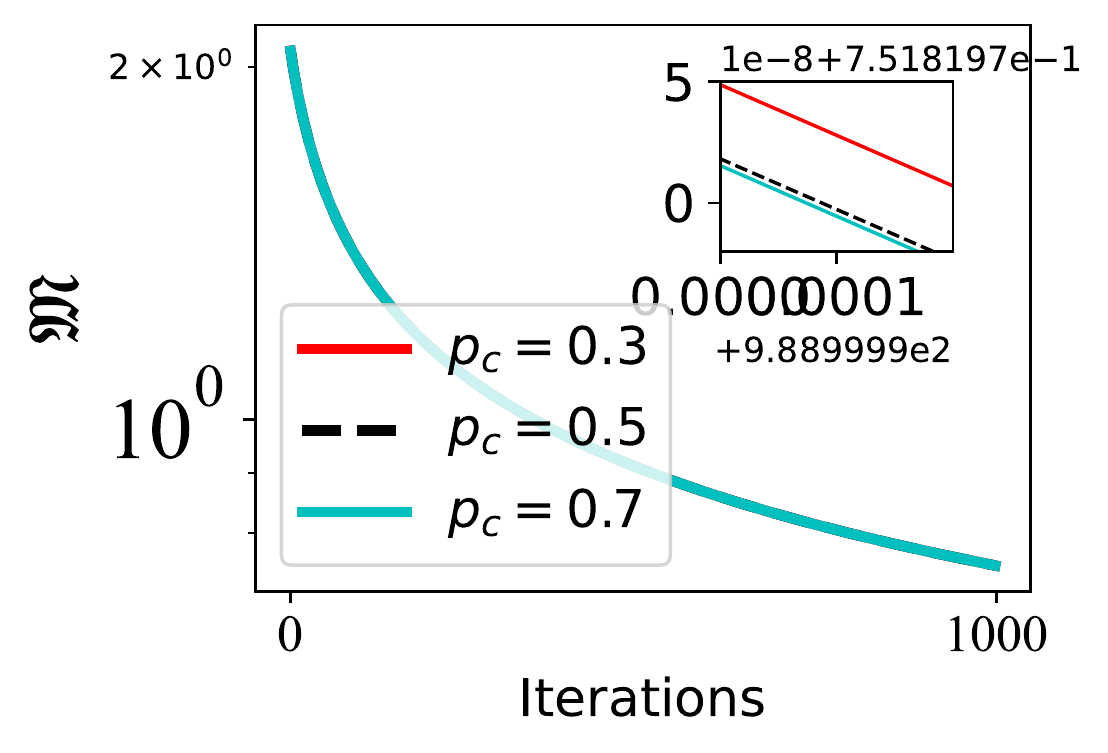}
		}
		\subfigure[The \algtname algorithm.]{
			\includegraphics[width=0.48\textwidth]{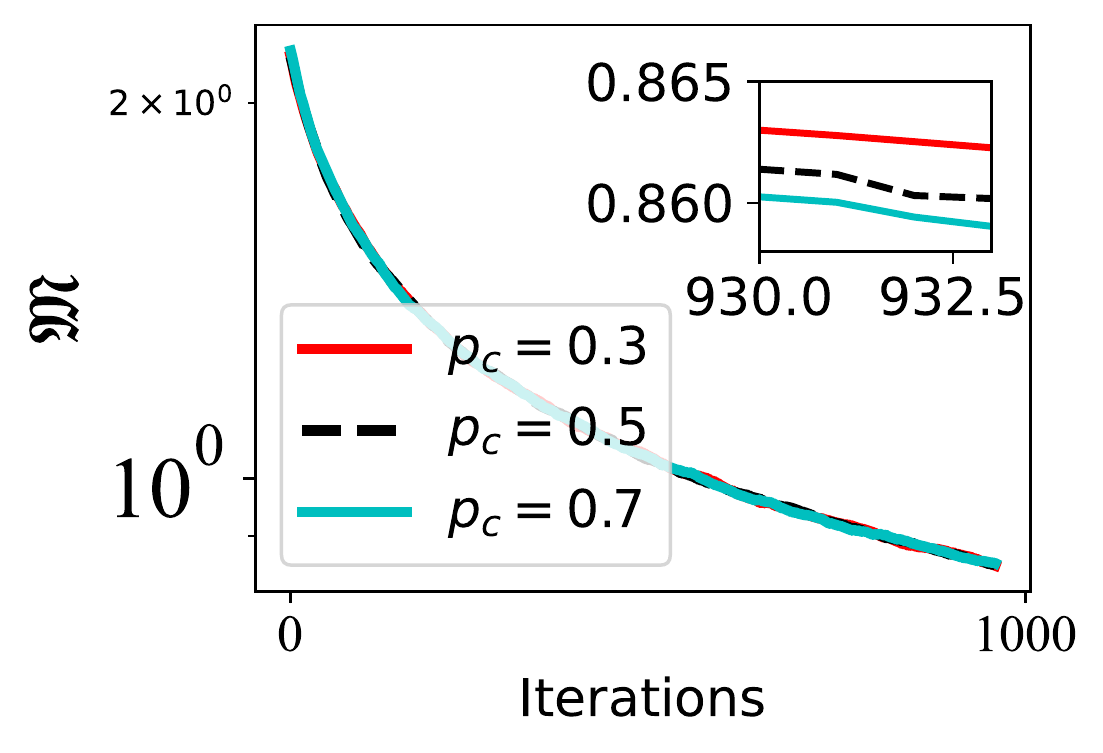}
		}
		\caption{Convergence performance with different edge connectivity probability.}\label{Fig:p_c}
	\end{minipage}
\end{figure*}
\begin{figure*}[t!]
	\centering
	\subfigure[The \algname algorithm.]{
		\includegraphics[width=0.22\textwidth]{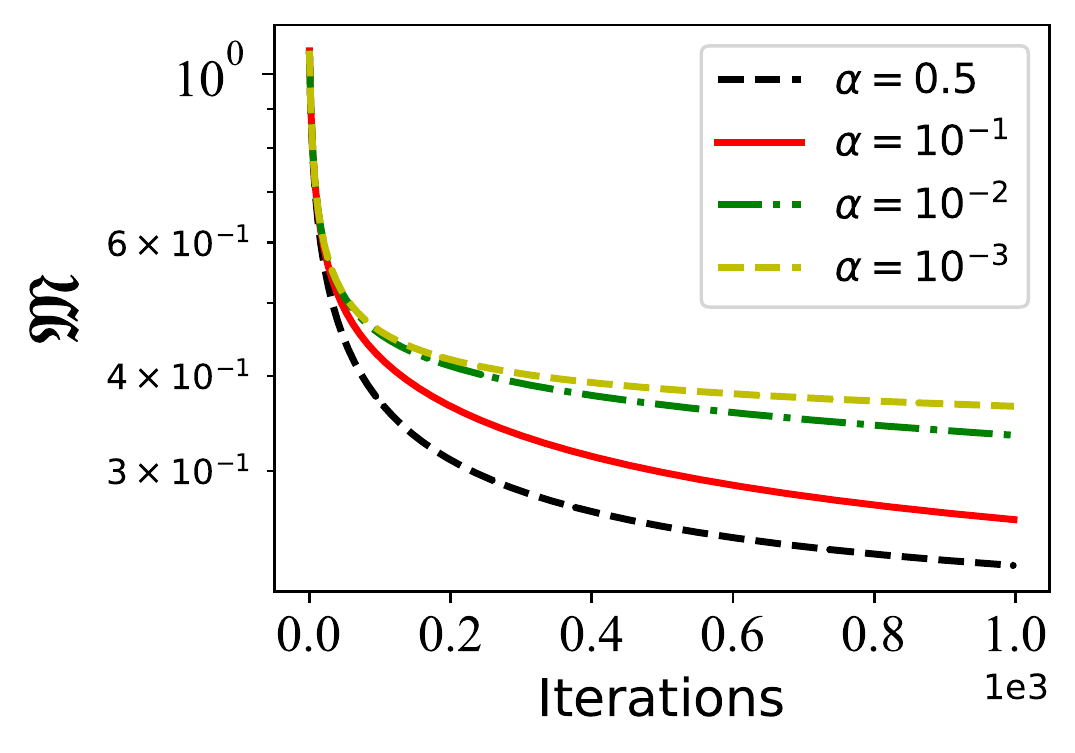}
		\includegraphics[width=0.22\textwidth]{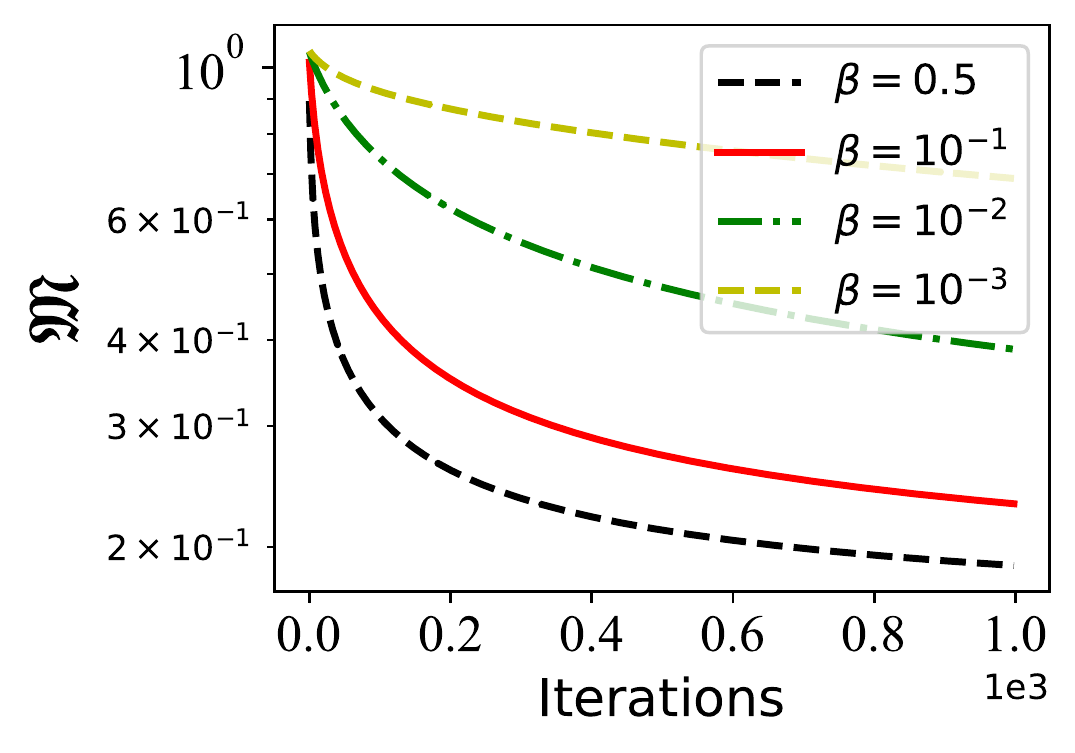}
	}
	\subfigure[The \algtname algorithm.]{
		\includegraphics[width=0.22\textwidth]{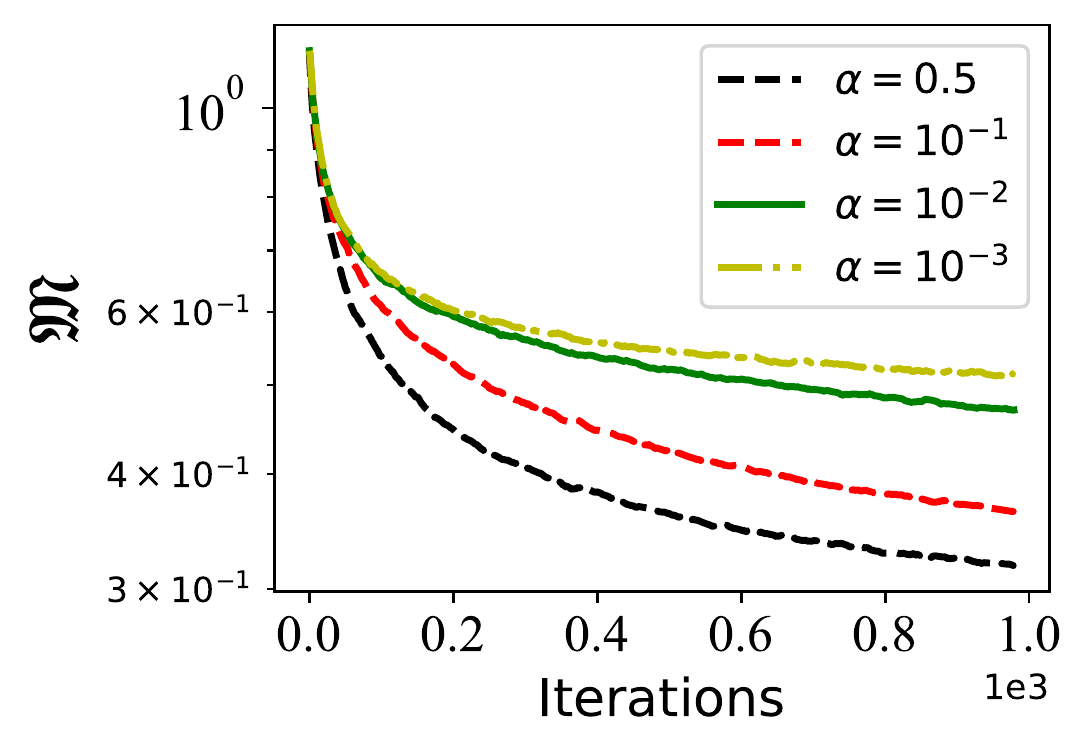}
		\includegraphics[width=0.22\textwidth]{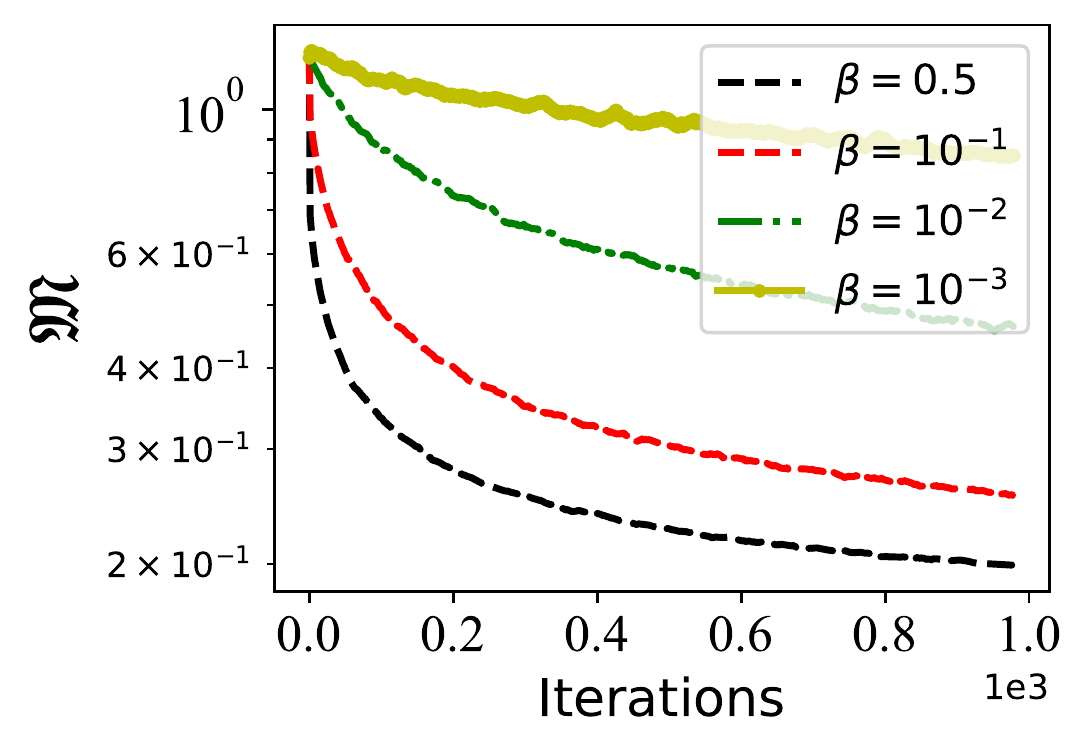}
	}
	\caption{Convergence performance with different learning rate on the MNIST dataset.}\label{Fig:step-size}
\end{figure*}

In this section, we conduct numerical experiments to demonstrate the performance of our \algname and \algtname algorithms on a meta-learning problem.
In particular, we evaluate and compare the performance of the proposed algorithms against two stochastic algorithms discussed below as the baselines:
\begin{list}{\labelitemi}{\leftmargin=1em \itemindent=-0.09em \itemsep=.2em}
\item {\em Decentralized Gradient-Tracking Stochastic Gradient Descent (GT-DSGD):} 
This algorithm can be viewed as a “stripped-down” version of \algname utilizing stochastic gradients instead of full gradients.
It is motivated by the GT-SGD algorithm~\cite{lu2019gnsd}. 
The algorithm performs local updates using gradients $\p_{i}(\x_{i,t},\y_{i,t}) $ and $\d_{i}(\x_{i,t},\y_{i,t})$ evaluated using the following stochastic gradient estimators at each agent:
$\p_{i}(\x_{i,t},\y_{i,t})\! = \!	\frac{1}{ \mathcal{S}}\sum_{i=1}^{\mathcal{S}} \!\big[\bar\nabla f_{i}(\x_{i,t},\y_{i,t}; \bar\xi_i )\big]$, and $\d_{i}(\x_{i,t},\y_{i,t}) \!\!= \!\!		\frac{1}{ \mathcal{S}}\sum_{i=1}^{\mathcal{S}} \big[	\nabla g_{i}(\x_{i,t},\y_{i,t}; \bar{\xi_i} )\big]$.

\item {\em Decentralized Stochastic Gradient Descent (D-SGD):}  This algorithm can be viewed as a simplified version of GT-SGD without utilizing gradient tracking \cite{jiang2017collaborative}. 
For D-SGD, each agent updates its local parameters as: $\x_{i,t} \!=\!\sum_{j \in \mathcal{N}_{i}} [\M]_{ij} \x_{j,t-1} - \alpha \frac{1}{ \mathcal{S}}\sum_{i=1}^{\mathcal{S}} \big[\bar\nabla f_{i}(\x_{i,t},\y_{i,t}; \bar\xi_i )\big]$ and 
$\y_{i,t} = \y_{i,t-1} - \beta 	\frac{1}{\mathcal{S}}\sum_{i=1}^{\mathcal{S}} \big[ \nabla g_{i}(\x_{i,t},\y_{i,t}; \bar{\xi_i} )\big]$. 
\end{list}

{\bf 1) Meta-learning Model and Datasets:} We evaluate the performance of \algname and \algtname across $m$ agents that collectively aim to solve the meta-learning problem with $m$ tasks $\{\mathcal{T}_{i}, i \in [m]\}$. 
We allocate each agent one task with $f_{i}$ defining the loss function of the $i$-th agent. 
For the loss function $f_i \left(\x, \y_{i} \right)$ corresponding to task $\mathcal{T}_{i}$, the parameter $x$ defines the model parameters shared among all agents, while $y_{i}$ denotes task specific parameters.   
The meta-learning problem aims at finding common parameters $\x$ for all tasks, and each task then adapts its own parameters $\y_{i}$ by minimizing local lower level losses. 
The objective function is $\min \frac{1}{m} \sum_{i=1}^{m} f_i\left(\x, \y_{i}^{*} \right)$, subject to $\y_i^{*}={\arg \min }_{\y_i}g_i(\x, \y_i)$. 
In practice, $\y_{i}$ denotes the parameters of the linear layer of a neural network, while $\x$ are the parameters of the remaining layers of the neural network. 
The inner problem includes a strongly convex regularizer ensuring that the inner function $g_i(\x, \y_i)$ is strongly-convex with respect to $\y_i$ and the outer function $f_i (\x, \y_{i}^{*} )$ is nonconvex with respect to $\x$ .

We evaluate the proposed algorithms on the MNIST~\cite{lecun2010mnist} and CIFAR-10~\cite{krizhevsky2009learning} datasets using a two-hidden-layer neural network
with 20 hidden units. 
The network topology $\mathcal{G}$ is generated by the Erd$\ddot{\text{o}}$s-R$\grave{\text{e}}$nyi random graph approach.
The consensus matrix is chosen as $\W = \I - \frac{2\mathbf{L}}{3\lambda_{\text{max}}(\mathbf{L})}$, where $\mathbf{L}$ is the Laplacian matrix of $\mathcal{G}$ and $\lambda_{\text{max}}(\mathbf{L})$ denotes the largest eigenvalue of $\mathbf{L}$. 

\textbf{2) Performance Comparison:} We set the constant learning rates $\alpha=\beta=0.5$, edge connectivity probability $p_c=0.5$, number of agents $m=5$, and each agent has $n=1000$ data samples and mini-batch size $q=\lceil\sqrt{n}\rceil$.
The network topology is shown in Fig.~\ref{Fig:5agent}.
In Fig.~\ref{Fig:performance_5agents}, we compare the performance of \algname and \algtname with the GT-SGD and SGD algorithms using the convergence metric $\mathfrak{M}$ on the MNIST and CIFAR-10 datasets. 
We note that both \algname and \algtname outperform the baseline algoirthms GT-SGD and SGD. 
In Fig.~\ref{Fig:performance_10agents}, we notice a similar behavior when the number of tasks (and agents) are increased to 10.  
More importantly, we note that in all experiment settings, the \algtname algorithm has the lowest sample complexity, which corroborates our theoretical findings. 
%

\textbf{3) Impact of the Edge Connectivity:} We conduct experiments using five-node network systems and compare the performance of the algorithms on three different network topologies. 
For the three settings, we choose the value of the edge connectivity probability from the discrete set $\{0.3, 0.5, 0.7\}$, while the rest of the settings stay the same as previous experiments.
As shown in Fig.~\ref{Fig:p_c}, we can observe that the convergence metric $\mathfrak{M}$ is relatively insensitive to the change of network topologies. 
Fig.~\ref{Fig:p_c} shows that the convergence metric $\mathfrak{M}$ slightly increases as the edge connectivity probability $p_c$ decreases (i.e., the network becomes sparser). 

\textbf{4) Impact of the Learning Rate:} 
In this experiment, we choose the learning rates $\alpha$ and $\beta$ from the discrete set $\{0.5, 0.1, 0.01, 0.001\}$.
We fix the number of agents to five and set $p_c = 0.5$.
The rest of the experiment settings stay the same as in previous experiments. 
As shown in Fig.~\ref{Fig:step-size}, larger values of learning rates $\alpha$ or $\beta$ lead to faster convergence rates for both \algname and \algtnamens algorithms.
 \vspace{-0.1in}
\section{Conclusion}\label{Section: conclusion}
In this paper, we developed two algorithms called \algname and \algtname for solving  decentralized non-convex-strongly-convex bilevel optimization problems. 
We showed that, to achieve an $\epsilon$-stationary point, \algname and \algtname have the communication complexity of $\mathcal{O}(\epsilon^{-1})$ and sample complexities of $\mathcal{O}(n \epsilon^{-1})$ and $\mathcal{O}(\sqrt{n} \epsilon^{-1})$, respectively.
Our numerical studies corroborate the theoretical performance of our proposed algorithms.
We note that our paper is the first to explore decentralized bilevel learning, which opens up several interesting directions for future research. 
For instance, it would be of interest to develop differentially-private algorithms for decentralized bilevel learning over networks.
Also, one can adopt compression techniques to further reduce communication costs, especially for large-scale deep learning models.

\bibliographystyle{acm}
\bibliography{Zhuqing}

\onecolumn
\appendix
\section{Proof}

	We note that 
	\begin{align}
	\overline{\mathbf{x}}_{t}&=\frac{1}{m} \sum_{i=1}^{m} \mathbf{x}_{i, t},\notag\\ \mathbf{x}_{t}&=\left[\mathbf{x}_{1, t}^{\top}, \cdots, \mathbf{x}_{m, t}^{\top}\right]^{\top},\notag\\
\p_t&=\left[\p_1(\x_{1,t},\y_{1,t})^{\top}, \cdots, \p_m(\x_{m,t},\y_{m,t})^{\top}\right]^{\top},\notag\\
\d_t&=\left[\d_1(\x_{1,t},\y_{1,t})^{\top}, \cdots, \d_m(\x_{m,t},\y_{m,t})^{\top}\right]^{\top},\notag\\
\bp_t&=\frac{1}{m} \sum_{i=1}^{m} \p_i(\x_{i,t},\y_{i,t}),\notag\\
\bd_t&=\frac{1}{m} \sum_{i=1}^{m} \d_i(\x_{i,t},\y_{i,t}),
\end{align}

{\em {Step 1:}} 
\begin{lem}[Descending Inequality for upper function]
Under asumptions \ref{assum1}-\ref{assum2}, the following descending inequality holds for both with Algorithm \ref{Algorithm_GD} and Algorithm \ref{Algorithm_VR}:
\begin{align}
\ell({\bx}_{t+1}) - \ell(\bx_{t}) \le &-\frac{\alpha}{2} \|\nabla \ell(\bx_t)\|^2 - (\frac{\alpha}{2} - \frac{L_{\ell}\alpha^2}{2})\|\bu_{t}\|^2  \notag\\&+ \frac{\alpha}{m}\sum_{i=1}^{m} L_{\ell} \| \bx_{t} - \x_{i.t}\|^2 +  \frac{2\alpha}{m} \sum_{i=1}^{m} L_f^2 \| \y_{i,t}^* - \y_{i,t} \|^2+ 2\alpha \| \frac{1}{m} \sum_{i=1}^{m} \bar\nabla f_i (\x_{i,t} ,\y_{i,t} ) - \bar\u_{ t} \|^2 ,
\end{align}
where $\y_{i,t}^* = \arg\max_{\y} \ell(\x_{i,t},\y)$.
\end{lem}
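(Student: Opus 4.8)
The statement is a standard ``descent lemma'' for the averaged outer iterate $\bx_t$, and the natural route is to combine the $L_\ell$-smoothness of $\ell$ (Lemma~\ref{lem:main}) with the averaging structure of the consensus update \eqref{consensus_x_GD}. First I would average \eqref{consensus_x_GD} over $i$: since $\M$ is doubly stochastic, $\frac{1}{m}\sum_i \sum_{j\in\Nc_i}[\M]_{ij}\x_{j,t-1} = \bx_{t-1}$, so $\bx_{t+1} = \bx_t - \alpha \bu_t$, where $\bu_t = \frac{1}{m}\sum_i \u_{i,t}$. A second observation, which makes gradient tracking work, is that $\bu_t = \bp_t = \frac{1}{m}\sum_i \bar\nabla f_i(\x_{i,t},\y_{i,t})$; this follows by induction from \eqref{gradient_tracking_x_GD} and the initialization $\u_{i,0}=\bar\nabla f_i(\x_{i,0},\y_{i,0})$, again using double stochasticity of $\M$ to kill the mixing term when averaged. (Strictly I only need $\bu_t$ on the right-hand side, so I can keep $\bu_t$ symbolic and not even invoke this identity until it is convenient.)

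Next, apply $L_\ell$-smoothness of $\ell$ (the third inequality of Lemma~\ref{lem:main}, which upgrades to global $L_\ell$-smoothness of the full $\ell=\frac1m\sum_i\ell_i$):
\begin{align}
\ell(\bx_{t+1}) - \ell(\bx_t) \le \langle \nabla\ell(\bx_t),\, \bx_{t+1}-\bx_t\rangle + \tfrac{L_\ell}{2}\|\bx_{t+1}-\bx_t\|^2 = -\alpha\langle \nabla\ell(\bx_t),\,\bu_t\rangle + \tfrac{L_\ell\alpha^2}{2}\|\bu_t\|^2. \notag
\end{align}
The crux is then to control the cross term $-\alpha\langle\nabla\ell(\bx_t),\bu_t\rangle$. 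I would write $\bu_t = \nabla\ell(\bx_t) + (\bp_t - \frac1m\sum_i\nabla\ell_i(\x_{i,t})) + (\frac1m\sum_i\nabla\ell_i(\x_{i,t}) - \nabla\ell(\bx_t)) + (\bu_t - \bp_t)$, use the polarization identity $-\langle a,b\rangle = -\tfrac12\|a\|^2 - \tfrac12\|b\|^2 + \tfrac12\|a-b\|^2$ on $-\langle\nabla\ell(\bx_t),\bu_t\rangle$, which produces $-\tfrac{\alpha}{2}\|\nabla\ell(\bx_t)\|^2 - \tfrac{\alpha}{2}\|\bu_t\|^2 + \tfrac{\alpha}{2}\|\bu_t-\nabla\ell(\bx_t)\|^2$, and then bound $\|\bu_t - \nabla\ell(\bx_t)\|^2$ by splitting it: one piece $\|\frac1m\sum_i(\bar\nabla f_i(\x_{i,t},\y_{i,t}) - \nabla\ell_i(\x_{i,t}))\|^2 \le \frac1m\sum_i \|\bar\nabla f_i(\x_{i,t},\y_{i,t}) - \nabla\ell_i(\x_{i,t})\|^2 \le \frac1m\sum_i L_f\|\y_{i,t}^* - \y_{i,t}\|^2$ by Jensen and the first inequality of Lemma~\ref{lem:main}; a second piece $\|\frac1m\sum_i(\nabla\ell_i(\x_{i,t}) - \nabla\ell_i(\bx_t))\|^2 \le \frac1m\sum_i L_\ell\|\x_{i,t}-\bx_t\|^2$ by the third inequality of Lemma~\ref{lem:main}; and a remaining piece $\|\bp_t - \bu_t\|^2 = \|\frac1m\sum_i\bar\nabla f_i(\x_{i,t},\y_{i,t}) - \bu_t\|^2$, which is exactly the last term in the claimed bound. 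Collecting terms and folding constants (the factor $2$ in front of the $L_f^2$ and residual terms comes from not being tight in the Young/polarization splits) yields the stated inequality.

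The main obstacle — really the only nonroutine point — is the bookkeeping in the cross-term decomposition: one has to choose the grouping of the four error components and the Young's-inequality weights so that (a) the coefficient of $\|\bu_t\|^2$ comes out as $-(\frac{\alpha}{2} - \frac{L_\ell\alpha^2}{2})$ with no extra slack, and (b) the $\|\y_{i,t}^*-\y_{i,t}\|^2$, $\|\bx_t - \x_{i,t}\|^2$, and $\|\frac1m\sum_i\bar\nabla f_i - \bu_t\|^2$ terms appear with exactly the stated constants $\frac{2\alpha}{m}L_f^2$, $\frac{\alpha}{m}L_\ell$, and $2\alpha$. A clean way is: apply $-\langle\nabla\ell(\bx_t),\bu_t\rangle = -\tfrac12\|\nabla\ell(\bx_t)\|^2 - \tfrac12\|\bu_t\|^2 + \tfrac12\|\nabla\ell(\bx_t) - \bu_t\|^2$, then bound $\|\nabla\ell(\bx_t)-\bu_t\|^2 \le 4\big(\|\tfrac1m\sum_i(\nabla\ell_i(\bx_t)-\nabla\ell_i(\x_{i,t}))\|^2 + \|\tfrac1m\sum_i(\nabla\ell_i(\x_{i,t})-\bar\nabla f_i(\x_{i,t},\y_{i,t}))\|^2 + \|\bp_t-\bu_t\|^2\big)$ — wait, that is three terms, so the factor is $3$, but the statement has $2$ in front; the discrepancy is absorbed because $\bu_t=\bp_t$ makes the last term vanish, leaving a two-way split with constant $2$. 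I would therefore invoke the identity $\bu_t=\bp_t$ at this point, reducing $\|\nabla\ell(\bx_t)-\bu_t\|^2 \le 2\|\tfrac1m\sum_i(\nabla\ell_i(\bx_t)-\nabla\ell_i(\x_{i,t}))\|^2 + 2\|\tfrac1m\sum_i(\nabla\ell_i(\x_{i,t})-\bar\nabla f_i(\x_{i,t},\y_{i,t}))\|^2$, and then Jensen plus Lemma~\ref{lem:main} on each. The last term $2\alpha\|\frac1m\sum_i\bar\nabla f_i - \bu_t\|^2$ in the statement is then kept as-is (it is genuinely zero for Algorithm~\ref{Algorithm_GD} but nonzero for Algorithm~\ref{Algorithm_VR}, which is why the lemma is phrased to cover both). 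Everything else — the smoothness step, averaging, Jensen — is mechanical.
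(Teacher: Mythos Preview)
Your approach is correct and is essentially the paper's: average the consensus update to get $\bx_{t+1}=\bx_t-\alpha\bu_t$, apply $L_\ell$-smoothness, use the polarization identity $-\alpha\langle\nabla\ell(\bx_t),\bu_t\rangle = -\tfrac{\alpha}{2}\|\nabla\ell(\bx_t)\|^2-\tfrac{\alpha}{2}\|\bu_t\|^2+\tfrac{\alpha}{2}\|\nabla\ell(\bx_t)-\bu_t\|^2$, and then split the last term via Jensen and Lemma~\ref{lem:main}.

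The only wobble is your constant-matching at the end. Your proposed fix---invoking $\bu_t=\bp_t$ to collapse to a two-term split and then ``keeping the last term as-is''---is internally inconsistent: the identity $\bu_t=\tfrac1m\sum_i\bar\nabla f_i(\x_{i,t},\y_{i,t})$ holds for Algorithm~\ref{Algorithm_GD} but \emph{not} for Algorithm~\ref{Algorithm_VR} (there $\bu_t=\bp_t$ is the average of the VR estimates, not of the full approximate gradients), so you cannot both drop the residual term and retain it. The paper instead does a \emph{nested} two-way split: first insert $\tfrac1m\sum_i\nabla\ell_i(\x_{i,t})$ and apply $\|a+b\|^2\le 2\|a\|^2+2\|b\|^2$ to peel off the consensus error; then, in the remaining piece $\|\tfrac1m\sum_i\nabla\ell_i(\x_{i,t})-\bu_t\|^2$, insert $\tfrac1m\sum_i\bar\nabla f_i(\x_{i,t},\y_{i,t})$ and split again. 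After multiplying through by $\tfrac{\alpha}{2}$ this yields exactly the coefficients $\tfrac{\alpha L_\ell}{m}$, $\tfrac{2\alpha}{m}L_f^2$, and $2\alpha$ on the three error terms, uniformly for both algorithms, with no appeal to $\bu_t=\tfrac1m\sum_i\bar\nabla f_i$.
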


\begin{proof}

\begin{align}
&
\ell(\bx_{t+1}) - \ell(\bx_{t}) 
\stackrel{(a)}{\le}
\langle \nabla \ell(\bx_t), \bx_{t+1} - \bx_{t} \rangle + \frac{L_{\ell}}{2}\|\bx_{t+1} - \bx_{t}\|^2 \notag\\
\stackrel{(b)}{=} 
&
-\alpha \langle \nabla \ell(\bx_t), \bu_{t} \rangle + \frac{L_{\ell}\alpha^2}{2}\|\bu_{t}\|^2 \notag\\
= 
&
-\frac{\alpha }{2} \|\nabla \ell(\bx_t)\|^2 - (\frac{\alpha}{2} - \frac{L_{\ell}\alpha^2}{2})\|\bu_{t}\|^2  + \frac{\alpha}{2} \|\nabla \ell(\bx_t) - \bu_{t}\|^2 \notag\\
= 
&
-\frac{\alpha}{2} \|\nabla \ell(\bx_t)\|^2 - (\frac{\alpha}{2} - \frac{L_{\ell}\alpha^2}{2})\|\bu_{t}\|^2 + \frac{\alpha}{2} \|\nabla \ell(\bx_t) - \frac{1}{m}\sum_{i=1}^{m}\nabla l_i (\x_{i,t}) +  \frac{1}{m}\sum_{i=1}^{m}\nabla l_i(\x_{i,t}) - \bu_{t}\|^2 \notag\\
\stackrel{(c)}{\le}
&
-\frac{\alpha}{2} \|\nabla \ell(\bx_t)\|^2 - (\frac{\alpha}{2} - \frac{L_{\ell}\alpha^2}{2})\|\bu_{t}\|^2  + \frac{\alpha}{m}\sum_{i=1}^{m} \|\nabla l_i(\bx_{t}) - \nabla l_i (\x_{i,t})\|^2+  {\alpha}\| \frac{1}{m} \sum_{i=1}^{m}\nabla l_i (\x_{i,t})- \bar\u_{ t}\|^2 \notag\\
=
&
-\frac{\alpha}{2} \|\nabla \ell(\bx_t)\|^2 - (\frac{\alpha}{2} - \frac{L_{\ell}\alpha^2}{2})\|\bu_{t}\|^2  + \frac{\alpha}{m}\sum_{i=1}^{m} L_{\ell} \| \bx_{t} - \x_{i.t}\|^2+  \alpha \| \frac{1}{m} \sum_{i=1}^{m}\bar\nabla f_i (\x_{i,t} ,\y_{i,t}^*  )- \frac{1}{m} \sum_{i=1}^{m}\bar\nabla f_i (\x_{i,t} ,\y_{i,t}  )+\frac{1}{m} \sum_{i=1}^{m}\bar\nabla f_i (\x_{i,t} ,\y_{i,t} ) - \bar\u_{ t} \|^2  \notag\\
\stackrel{(d)}{\le}
&
-\frac{\alpha}{2} \|\nabla \ell(\bx_t)\|^2 - (\frac{\alpha}{2} - \frac{L_{\ell}\alpha^2}{2})\|\bu_{t}\|^2  + \frac{\alpha}{m}\sum_{i=1}^{m} L_{\ell} \| \bx_{t} - \x_{i.t}\|^2 +  \frac{2\alpha}{m} \sum_{i=1}^{m}\|\bar\nabla f_i (\x_{i,t} ,\y_{i,t}^*  )-\bar \nabla f_i (\x_{i,t},\y_{i,t}) \|^2+2 \alpha \| \frac{1}{m} \sum_{i=1}^{m} \bar\nabla f_i (\x_{i,t} ,\y_{i,t} ) - \bar\u_{ t} \|^2 \notag\\
\stackrel{(e)}{\le}&
-\frac{\alpha}{2} \|\nabla \ell(\bx_t)\|^2 - (\frac{\alpha}{2} - \frac{L_{\ell}\alpha^2}{2})\|\bu_{t}\|^2  + \frac{\alpha}{m}\sum_{i=1}^{m} L_{\ell} \| \bx_{t} - \x_{i.t}\|^2 +  \frac{2\alpha}{m} \sum_{i=1}^{m} L_f^2 \| \y_{i,t}^* - \y_{i,t} \|^2 +2 \alpha \| \frac{1}{m} \sum_{i=1}^{m} \bar\nabla f_i (\x_{i,t} ,\y_{i,t} ) - \bar\u_{ t} \|^2,
\end{align}
where (a) is because of Lipschitz continuous gradients of $\ell$, see Lemma \ref{lem:main}. (b) is because of the updating rules. (c), (d) is from the triangle inequality. (e) is from Lemma\ref{lem:main}.
\end{proof}

{\em {Step 2:}} 
\begin{lem}[Error Bound on $\y^*(\x)$]
Under assumptions \ref{assum1}-\ref{assum2}, the following inequality holds for both with Algorithm \ref{Algorithm_GD} and Algorithm \ref{Algorithm_VR}:
\begin{align}
\|\y_{i, t+1} - \y_{i, t+1}^*\|^2  
\le & (1+r)^2 (1-2\beta \frac{\mu_g L_g}{\mu_g+L_g }) \|\y_{i, t} - \y_{i, t}^*\|^2 +(\delta -1) (1+r)^2 (2\beta \frac{1}{\mu_g+L_g } -\beta^2)\| \v_{i,t} \|^2\notag\\&+(\frac{1}{\delta}-1 ) (1+r)^2 (2\beta \frac{1}{\mu_g+L_g } -\beta^2)\|  \nabla_{\y} g_{i}(\x_{i,t},\y_{i,t}) -\v_{i,t} \|^2\notag\\&+ (1+\frac{1}{r}) L_y^2 \|\x_{i, t+1}-\x_{i, t}\|^2
+ (1+r)(1+1/r) \beta^2 \| \v_{i,t} - \nabla_{\y} g_{i}(\x_{i,t},\y_{i,t})\|^2
\end{align}
\end{lem}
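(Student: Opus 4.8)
The plan is to peel the gap $\y_{i,t+1}-\y_{i,t+1}^{*}$ apart one layer at a time with Young's inequality, using the free parameters $r$ and $\delta$, until the recursion reduces to the textbook one-step contraction of gradient descent on the strongly convex inner objective $g_i(\x_{i,t},\cdot)$. Throughout, $\y_{i,t}^{*}$ denotes the minimizer of $g_i(\x_{i,t},\cdot)$, so that $\nabla_{\y}g_i(\x_{i,t},\y_{i,t}^{*})=0$.

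\emph{Splitting off the moving target and the gradient error.} First I would insert the previous target $\y_{i,t}^{*}$ and apply Young's inequality,
\[
\|\y_{i,t+1}-\y_{i,t+1}^{*}\|^2 \le (1+r)\,\|\y_{i,t+1}-\y_{i,t}^{*}\|^2 + \big(1+\tfrac1r\big)\,\|\y_{i,t}^{*}-\y_{i,t+1}^{*}\|^2 ,
\]
and bound $\|\y_{i,t}^{*}-\y_{i,t+1}^{*}\|^2=\|\y_i^{*}(\x_{i,t})-\y_i^{*}(\x_{i,t+1})\|^2$ by the Lipschitz continuity of the implicit solution map from Lemma~\ref{lem:main}; this yields the $(1+\tfrac1r)L_y^{2}\|\x_{i,t+1}-\x_{i,t}\|^2$ term. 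Then, using the inner update $\y_{i,t+1}=\y_{i,t}-\beta\v_{i,t}$, I would write $\v_{i,t}=\nabla_{\y}g_i(\x_{i,t},\y_{i,t})+\big(\v_{i,t}-\nabla_{\y}g_i(\x_{i,t},\y_{i,t})\big)$ and apply Young's inequality again, with the same parameter $r$, to $\|\y_{i,t+1}-\y_{i,t}^{*}\|^2$, separating the exact gradient step $\y_{i,t}-\beta\nabla_{\y}g_i(\x_{i,t},\y_{i,t})-\y_{i,t}^{*}$ from the estimation error $\beta(\v_{i,t}-\nabla_{\y}g_i(\x_{i,t},\y_{i,t}))$. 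Multiplying by the $(1+r)$ carried from the first split produces the $(1+r)^{2}$ prefactor on the ``exact-step'' term and the $(1+r)(1+\tfrac1r)\beta^{2}\|\v_{i,t}-\nabla_{\y}g_i(\x_{i,t},\y_{i,t})\|^2$ term. For \algnamens{} this last quantity is identically zero (there $\v_{i,t}=\nabla_{\y}g_i(\x_{i,t},\y_{i,t})$), while for \algtnamens{} it is the variance-reduction error controlled later, so a single lemma covers both algorithms.

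\emph{The core contraction.} It remains to bound $\|\y_{i,t}-\beta\nabla_{\y}g_i(\x_{i,t},\y_{i,t})-\y_{i,t}^{*}\|^2$. Since $g_i(\x_{i,t},\cdot)$ is $\mu_g$-strongly convex and $L_g$-smooth (Assumption~\ref{assum1}), the co-coercivity estimate $\langle\nabla_{\y}g_i(\x_{i,t},\y_{i,t}),\,\y_{i,t}-\y_{i,t}^{*}\rangle \ge \tfrac{\mu_g L_g}{\mu_g+L_g}\|\y_{i,t}-\y_{i,t}^{*}\|^2+\tfrac{1}{\mu_g+L_g}\|\nabla_{\y}g_i(\x_{i,t},\y_{i,t})\|^2$ holds; expanding the square then gives
\[
\|\y_{i,t}-\beta\nabla_{\y}g_i(\x_{i,t},\y_{i,t})-\y_{i,t}^{*}\|^2 \le \Big(1-\tfrac{2\beta\mu_g L_g}{\mu_g+L_g}\Big)\|\y_{i,t}-\y_{i,t}^{*}\|^2 - \Big(\tfrac{2\beta}{\mu_g+L_g}-\beta^{2}\Big)\|\nabla_{\y}g_i(\x_{i,t},\y_{i,t})\|^2 ,
\]
where the step-size requirement $\beta\le\tfrac{1}{\mu_g+L_g}$ of Theorem~\ref{thm1} keeps $\tfrac{2\beta}{\mu_g+L_g}-\beta^{2}\ge0$. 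Finally I would trade $-\|\nabla_{\y}g_i(\x_{i,t},\y_{i,t})\|^2$ for a $\v_{i,t}$-term via Young's inequality with a second parameter $\delta\in(0,1)$, using $\|\nabla_{\y}g_i(\x_{i,t},\y_{i,t})\|^2 \ge (1-\delta)\|\v_{i,t}\|^2+(1-\tfrac1\delta)\|\nabla_{\y}g_i(\x_{i,t},\y_{i,t})-\v_{i,t}\|^2$, multiply the resulting pieces by $(1+r)^{2}(\tfrac{2\beta}{\mu_g+L_g}-\beta^{2})$, and add the contribution of the first split; this reproduces exactly the five terms in the claimed inequality.

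\emph{Expected obstacle.} No individual step is hard, so the only real care is to keep $r$ and $\delta$ symbolic rather than fixing them here: they are pinned down downstream in the potential-function step (Step~4 of the proof of Theorem~\ref{thm1}), where $r=\tfrac13\beta\tfrac{\mu_g L_g}{\mu_g+L_g}$ must make $(1+r)^{2}(1-2\beta\tfrac{\mu_g L_g}{\mu_g+L_g})$ a genuine contraction factor strictly below $1$, and $\delta$ must make the $\|\v_{i,t}\|^2$ coefficient negative while leaving the two error-term coefficients small enough to be absorbed under the step-size conditions. Everything else is bookkeeping with Young's inequality, Lemma~\ref{lem:main}, and the standard contraction of gradient descent on a strongly convex smooth function.
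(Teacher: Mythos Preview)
Your proposal is correct and follows essentially the same route as the paper's proof: two applications of Young's inequality with parameter $r$ (first to split off $\y_{i,t}^{*}-\y_{i,t+1}^{*}$, then to separate the exact gradient step from the estimation error), the Lipschitz bound from Lemma~\ref{lem:main} for the moving target, the standard co-coercivity contraction for the exact step, and a final Young split with parameter $\delta$ to trade $\|\nabla_{\y}g_i\|^2$ for $\|\v_{i,t}\|^2$ plus an error term. The only cosmetic difference is the order in which the Lipschitz bound on $\y^{*}$ is invoked, which is immaterial.
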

\begin{proof}

\begin{align}
&\|\y_{i, t+1} - \y_{i, t+1}^*\|^2  
= \|\y_{i, t+1} - \y_{i, t}^*+ \y_{i, t}^*- \y_{i, t+1}^*\|^2 \notag\\
\stackrel{(a)}{\le} &  (1+r)  \|\y_{i, t+1} - \y_{i, t}^*\|^2 + (1+\frac{1}{r}) \|\y_{i, t}^*- \y_{i, t+1}^*\|^2\notag\\
\stackrel{(b)}{=}  & (1+r)  \|\y_{i, t} -\beta\v_{i,t} +\beta \nabla_{\y} g_{i}(\x_{i,t},\y_{i,t}) -\beta \nabla_{\y} g_{i}(\x_{i,t},\y_{i,t}) - \y_{i, t}^*\|^2 + (1+\frac{1}{r}) \|\y_{i, t}^*- \y_{i, t+1}^*\|^2\notag\\
\stackrel{(c)}{=}  & (1+r) (1+r)  \|\y_{i, t}  -\beta \nabla_{\y} g_{i}(\x_{i,t},\y_{i,t}) - \y_{i, t}^*\|^2+ (1+1/r) (1+r)\beta^2 \| \v_{i,t} - \nabla_{\y} g_{i}(\x_{i,t},\y_{i,t})\|^2+ (1+\frac{1}{r}) \|\y_{i, t}^*- \y_{i, t+1}^*\|^2\notag\\
\stackrel{(d)}{\leq}   &  (1+r)^2 (1-2\beta \frac{\mu_g L_g}{\mu_g+L_g }) \|\y_{i, t} - \y_{i, t}^*\|^2 -  (1+r)^2 (2\beta \frac{1}{\mu_g+L_g } -\beta^2)\| \v_{i,t}+ \nabla_{\y} g_{i}(\x_{i,t},\y_{i,t}) -\v_{i,t} \|^2+ (1+\frac{1}{r}) \|\y_{i, t}^*- \y_{i, t+1}^*\|^2\notag\\& + (1+r)(1+1/r) \beta^2 \| \v_{i,t} - \nabla_{\y} g_{i}(\x_{i,t},\y_{i,t})\|^2\notag\\
\stackrel{(e)}{\leq}  & (1+r)^2 (1-2\beta \frac{\mu_g L_g}{\mu_g+L_g }) \|\y_{i, t} - \y_{i, t}^*\|^2 - (1+r)^2 (2\beta \frac{1}{\mu_g+L_g } -\beta^2)\| \v_{i,t}+ \nabla_{\y} g_{i}(\x_{i,t},\y_{i,t}) -\v_{i,t} \|^2+ (1+\frac{1}{r}) L_y^2 \|\x_{i, t+1}- \x_{i, t}\|^2\notag\\&+ (1+1/r) (1+r)\beta^2 \| \v_{i,t} - \nabla_{\y} g_{i}(\x_{i,t},\y_{i,t})\|^2\notag\\
\stackrel{(f)}{=} & (1+r)^2(1-2\beta \frac{\mu_g L_g}{\mu_g+L_g }) \|\y_{i, t} - \y_{i, t}^*\|^2 - (1+r)^2 (2\beta \frac{1}{\mu_g+L_g } -\beta^2)\| \v_{i,t}+ \nabla_{\y} g_{i}(\x_{i,t},\y_{i,t}) -\v_{i,t} \|^2\notag\\&+ (1+\frac{1}{r}) L_y^2 \|\x_{i, t+1}-\x_{i, t}\|^2+ (1+r) (1+1/r)\beta^2 \| \v_{i,t} - \nabla_{\y} g_{i}(\x_{i,t},\y_{i,t})\|^2\notag\\
\stackrel{(g)}{\leq} & (1+r)^2 (1-2\beta \frac{\mu_g L_g}{\mu_g+L_g }) \|\y_{i, t} - \y_{i, t}^*\|^2 - (1+r)^2 (2\beta \frac{1}{\mu_g+L_g } -\beta^2)\| \v_{i,t}+ \nabla_{\y} g_{i}(\x_{i,t},\y_{i,t}) -\v_{i,t} \|^2+ (1+\frac{1}{r}) L_y^2  \|\x_{i, t+1}-\x_{i, t}\|^2 \notag\\&
+ (1+r)(1+1/r) \beta^2 \| \v_{i,t} - \nabla_{\y} g_{i}(\x_{i,t},\y_{i,t})\|^2\notag\\
\stackrel{(h)}{\leq} & (1+r)^2 (1-2\beta \frac{\mu_g L_g}{\mu_g+L_g }) \|\y_{i, t} - \y_{i, t}^*\|^2 +(\delta -1) (1+r)^2 (2\beta \frac{1}{\mu_g+L_g } -\beta^2)\| \v_{i,t} \|^2+(\frac{1}{\delta}-1 ) (1+r)^2 (2\beta \frac{1}{\mu_g+L_g } -\beta^2)\|  \nabla_{\y} g_{i} \x_{i,t},\y_{i,t}) -\v_{i,t} \|^2\notag\\&+ (1+\frac{1}{r}) L_y^2 \|\x_{i, t+1}-\x_{i, t}\|^2
+ (1+r)(1+1/r) \beta^2 \| \v_{i,t} - \nabla_{\y} g_{i}(\x_{i,t},\y_{i,t})\|^2, 
\end{align}
where (a) and (h) are because of Young's inequality.
(b) and (f) are because of the updating rules of $\y_{i,t}$ and $\x_{i,t}$. (d) is due to the coercive property of lower level function $g_i(\x_i,\y_i)$ and $g_i(\x_i,\y_i^*)=0$. (e) is from Lemma \ref{lem:main}. (c) and (g) are because of the triangle inequality.

\end{proof}

{\em{Step 3:}}
\begin{lem}[Iterates Contraction]
The following contraction properties of the iterates hold:
\begin{align}
\|\x_t-\ot\bx_t\|^2 \le & (1+c_1)\lambda^2\|\x_{t-1} -\ot\bx_{t-1} \|^2 + (1+\frac{1}{c_1}) \alpha^2\|\u_{t-1}-\ot \bu_{t-1}\|^2, \\
\|\u_t-\ot\bu_t\|^2 \le& 
(1+ c_2)\lambda^2\|\u_{t-1} - \ot \bu_{t-1}\|^2 + (1 + \frac{1}{ c_2})\|\p_t-\p_{t-1} \|^2,
\end{align}
where $c_1$ and $c_2$ are arbitrary positive constants.
Additionally, we have 
\begin{align}\label{eqs30}
\|\x_t-\x_{t-1}\|^2 
& \le
8\|(\x_{t-1} - \ot\bx_{t-1}) \|^2 + 4\alpha^2 \|\u_{t-1} - \ot\bu_{t-1}\|^2 + 4\alpha^2m\|\bu_{t-1}\|^2.\notag\\
\|\y_t-\y_{t-1}\|^2 &
\le		 \beta^2\|\v_{t-1}\|^2,
\end{align}
\end{lem}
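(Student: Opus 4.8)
The plan is to exploit the doubly-stochastic structure of $\M$ and the fact that the consensus-error vectors live in the subspace orthogonal to $\1$. First I would introduce $\Mt = \M \otimes \I_{d_1}$ (resp. $\I_{d_2}$) and record the two algebraic facts that drive everything: (i) $\Mt(\1\otimes\bv) = \1\otimes\bv$ for any $\bv$, since $\M\1 = \1$; and (ii) for any vector $\z$ with $\z \perp (\1\otimes \I)$ in the block sense, $\|\Mt\z\|^2 \le \lambda^2\|\z\|^2$, because the eigenvalues of $\M$ restricted to $\1^\perp$ are bounded by $\lambda = \max\{|\lambda_2|,|\lambda_m|\}$ in magnitude. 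A key subroutine I would state up front is that $\Mt(\z - \1\otimes\bar\z) = \Mt\z - \1\otimes\bar\z$ and that $\Mt\z - \1\otimes\bar\z$ is again orthogonal to $\1$, so that $\|\Mt\z - \1\otimes\bar\z\|^2 \le \lambda^2\|\z - \1\otimes\bar\z\|^2$.

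For the first inequality, I would start from the update $\x_t = \Mt\x_{t-1} - \alpha\u_{t-1}$, take the network average to get $\bx_t = \bx_{t-1} - \alpha\bu_{t-1}$ (using double stochasticity), and subtract $\1\otimes$ of the latter from the former to obtain
\begin{align*}
\x_t - \1\otimes\bx_t = \Mt(\x_{t-1} - \1\otimes\bx_{t-1}) - \alpha(\u_{t-1} - \1\otimes\bu_{t-1}).
\end{align*}
Applying Young's inequality with parameter $c_1>0$, then the spectral bound on the first term, gives
\begin{align*}
\|\x_t - \1\otimes\bx_t\|^2 \le (1+c_1)\lambda^2\|\x_{t-1} - \1\otimes\bx_{t-1}\|^2 + \Big(1+\tfrac1{c_1}\Big)\alpha^2\|\u_{t-1} - \1\otimes\bu_{t-1}\|^2,
\end{align*}
which is exactly the claim. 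The second inequality is identical in spirit: from $\u_t = \Mt\u_{t-1} + \p_t - \p_{t-1}$, averaging yields $\bu_t = \bu_{t-1} + \bp_t - \bp_{t-1}$, and subtracting gives $\u_t - \1\otimes\bu_t = \Mt(\u_{t-1}-\1\otimes\bu_{t-1}) + (\p_t - \p_{t-1}) - \1\otimes(\bp_t - \bp_{t-1})$; projecting the last difference off $\1$ only decreases its norm, so Young's inequality with $c_2>0$ plus the spectral bound yields the stated contraction.

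For the auxiliary bounds \eqref{eqs30}: writing $\x_t - \x_{t-1} = (\x_t - \1\otimes\bx_t) - (\x_{t-1} - \1\otimes\bx_{t-1}) + \1\otimes(\bx_t - \bx_{t-1})$ and using $\bx_t - \bx_{t-1} = -\alpha\bu_{t-1}$, I would apply the inequality $\|a+b+c\|^2 \le 4\|a\|^2 + 4\|b\|^2 + 2\|c\|^2$ (or a looser constant) together with the first contraction bound for the term $\|\x_t - \1\otimes\bx_t\|^2$ to fold everything into the three listed terms $8\|\x_{t-1}-\1\otimes\bx_{t-1}\|^2 + 4\alpha^2\|\u_{t-1}-\1\otimes\bu_{t-1}\|^2 + 4\alpha^2 m\|\bu_{t-1}\|^2$ (the factor $m$ appears because $\|\1\otimes\bu_{t-1}\|^2 = m\|\bu_{t-1}\|^2$). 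The $\y$-bound is immediate from $\y_t = \y_{t-1} - \beta\v_{t-1}$. I do not expect any genuine obstacle here; the only point requiring a little care is bookkeeping the constants when substituting the contraction bound into the expansion of $\|\x_t - \x_{t-1}\|^2$, and making sure the orthogonality-to-$\1$ claim is invoked correctly so the $\lambda^2$ factor is legitimate.
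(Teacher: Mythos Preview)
Your argument for the two contraction inequalities and for the $\y$-bound is essentially identical to the paper's: subtract the averaged update, use that $\Mt$ contracts on $\1^\perp$ by the factor $\lambda$, apply Young's inequality, and for the $\u$-inequality note that the centering projection $\I - \tfrac1m(\1\1^\top)\otimes\I$ has operator norm at most $1$.

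The only genuine difference is your handling of $\|\x_t-\x_{t-1}\|^2$. You decompose into $(\x_t-\ot\bx_t)-(\x_{t-1}-\ot\bx_{t-1})+\ot(\bx_t-\bx_{t-1})$ and then feed the first contraction bound back in. The paper instead writes
\[
\x_t-\x_{t-1}=(\Mt-\I)\x_{t-1}-\alpha\u_{t-1}=(\Mt-\I)(\x_{t-1}-\ot\bx_{t-1})-\alpha\u_{t-1},
\]
using $(\Mt-\I)(\ot\bx_{t-1})=0$, and then applies $\|\Mt-\I\|\le 2$ together with the split $\|\u_{t-1}\|^2\le 2\|\u_{t-1}-\ot\bu_{t-1}\|^2+2m\|\bu_{t-1}\|^2$. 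This yields the constants $8,\,4,\,4$ in one step with no free parameter to tune. Your route is perfectly sound, but as written---substituting the $c_1$-dependent contraction into a $4,4,2$-type expansion---does not simultaneously produce $8$ on the consensus term and $4$ on the $\u$-deviation term for any choice of $c_1$ (you get, e.g., $8\lambda^2+4$ and $8$ with $c_1=1$). You can recover constants at least as good as the paper's by exploiting that the consensus-error part and the $\ot\bu_{t-1}$ part are orthogonal (Pythagoras instead of Young), but the paper's $(\Mt-\I)$ trick is the cleaner way to land exactly on $8,4,4$ without any bookkeeping.
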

\begin{proof}
 Define $\Mt = \M \otimes \I_m$. First for the iterates $\x_t$, we have the following contraction:
\begin{align}\label{eqs32}
\|\Mt\x_{t} -\ot\bx_{t} \|^2 = \|\Mt(\x_{t} -\ot\bx_{t}) \|^2 \le \lambda^2\|\x_{t} -\ot\bx_{t}\|^2.
\end{align}
This is because $\x_{t} -\ot\x_{t}$ is orthogonal $\1,$ which is the eigenvector corresponding to the largest eigenvalue of $\Mt,$ and $\lambda = \max\{|\lambda_2|,|\lambda_m|\}.$
Recall that $\bx_t = \bx_{t-1} - \alpha\bu_{t-1},$ hence,
\begin{align}
&
\|\x_t-\ot\bx_t\|^2
=
\|\Mt\x_{t-1} -\alpha\u_{t-1}-\ot(\bx_{t-1} -\alpha\bu_{t-1})\|^2 \notag\\
&
\stackrel{(a)}{\leq}
(1+c_1)\|\Mt\x_{t-1} -\ot\bx_{t-1} \|^2 + (1+\frac{1}{c_1}) \alpha^2\|\u_{t-1}-\ot \bu_{t-1}\|^2 \notag\\
&
\stackrel{(b)}{\leq}
(1+c_1)\lambda^2\|\x_{t-1} -\ot\bx_{t-1} \|^2 + (1+\frac{1}{c_1}) \alpha^2\|\u_{t-1}-\ot \bu_{t-1}\|^2, 
\end{align}
where (a) is becase of triangle inequality and (b) is from eqs.(\ref{eqs32}).

For $\u_t$, we have
\begin{align}
&\|\u_t-\ot\bu_t\|^2 
\notag\\=&
\|\Mt \u_{t-1} + \p_t-\p_{t-1}  - \ot \big(\bu_{t-1} +\bp_t-\bp_{t-1} \big)\|^2 \notag\\
\le&
(1+ c_2)\lambda^2\|\u_{t-1} - \ot \u_{t-1}\|^2+ (1 + \frac{1}{ c_2})\|\p_t-\p_{t-1}-\ot\big(\bp_t-\bp_{t-1} \big)\|^2\notag\\
\le&
(1+ c_2)\lambda^2\|\u_{t-1} - \ot \bu_{t-1}\|^2 + (1 + \frac{1}{ c_2})\|\big(\I - \frac{1}{n}(\1\1^\top)\otimes\I\big)\big(\p_t-\p_{t-1}\big)\|^2\notag\\
\stackrel{(a)}{\le}&
(1+ c_2)\lambda^2\|\u_{t-1} - \ot \bu_{t-1}\|^2 + (1 + \frac{1}{ c_2})\|\p_t-\p_{t-1} \|^2\notag\\
\end{align}
where (a) is due to $\|\I-\frac{1}{m}(\1\1^\top)\otimes \I \|\le 1.$ 

According to the updating
\begin{align}
&
\|\x_t-\x_{t-1}\|^2 
= 
\|\Mt \x_{t-1} -\alpha \u_{t-1} - \x_{t-1}\|^2 \notag\\
=
&
\|(\Mt -\I)\x_{t-1} - \alpha\u_{t-1}\|^2
\le 2\|(\Mt -\I)\x_{t-1} \|^2 + 2\alpha^2 \|\u_{t-1}\|^2 \notag\\
=
&
2\|(\Mt -\I)(\x_{t-1} - \ot\bx_{t-1}) \|^2 + 2\alpha^2 \|\u_{t-1}\|^2 \notag\\
\stackrel{}{\le}
&
8\|(\x_{t-1} - \ot\bx_{t-1}) \|^2 + 4\alpha^2 \|\u_{t-1} - \ot\bu_{t-1}\|^2 + 4\alpha^2m\|\bu_{t-1}\|^2 
\end{align}
And also, 
\begin{align}
	&
	\|\y_t-\y_{t-1}\|^2 
\le \beta^2\|\v_{t-1}\|^2
.
\end{align}

\end{proof}

{\em{Step 4:}}
With the results from Step 1, we have
\begin{align}\label{l1}
\ell({\bx}_{t+1}) - \ell(\bx_{t}) \le &-\frac{\alpha}{2} \|\nabla \ell(\bx_t)\|^2 - (\frac{\alpha}{2} - \frac{L_{\ell}\alpha^2}{2})\|\bu_{t}\|^2  \notag\\&+ \frac{\alpha}{m}\sum_{i=1}^{m} L_{\ell} \| \bx_{t} - \x_{i.t}\|^2 +  \frac{2\alpha}{m} \sum_{i=1}^{m} L_f^2 \| \y_{i,t}^* - \y_{i,t} \|^2+ 2\alpha \| \frac{1}{m} \sum_{i=1}^{m} \bar\nabla f_i (\x_{i,t} ,\y_{i,t} ) - \bar\u_{ t} \|^2\notag\\
\le &-\frac{\alpha}{2} \|\nabla \ell(\bx_t)\|^2 - (\frac{\alpha}{2} - \frac{L_{\ell}\alpha^2}{2})\|\bu_{t}\|^2  \notag\\&+ \frac{\alpha L_{\ell}  }{m}\| \x_{t} - \ot\bx_{t}  \|^2 +  \frac{2\alpha}{m}  L_f^2 \| \y_{t}^* - \y_{t} \|^2+ 2\alpha \| \frac{1}{m} \sum_{i=1}^{m} \bar\nabla f_i (\x_{i,t} ,\y_{i,t} ) - \bar\u_{ t} \|^2
\end{align}

With the results from Step 2, we have
\begin{align}\label{l2}
	\|\y_{ t+1} &- \y_{  t+1}^*\|^2  -\|\y_{ t} - \y_{ t}^*\|^2  
	\le -[1-(1+r)^2 (1-2\beta \frac{\mu_g L_g}{\mu_g+L_g }) ]\|\y_{t} - \y_{t}^*\|^2 +(\frac{1}{\delta}-1 ) (1+r)^2 (2\beta \frac{1}{\mu_g+L_g } -\beta^2)\|  \nabla_{\y} g_{i}(\x_{i,t},\y_{i,t}) -\v_{i,t} \|^2\notag\\&+(\delta-1)(1+r)^2 (2\beta \frac{1}{\mu_g+L_g } -\beta^2)\| \v_{t} \|^2+  (1+\frac{1}{r}) L_y^2 \|\x_{ t+1}-\x_{t}\|^2+(1+r)(1+1/r) \beta^2\sum_{i=1}^m \| \v_{i,t} - \nabla_{\y} g_{i}(\x_{i,t},\y_{i,t})\|^2
\end{align}

Combing \ref{l1} and \ref{l2} and telescoping the inequality and with $r\in(0,1]$, we have
\begin{align}\label{eqs5}
&\ell(\bx_{T+1}) - \ell(\bx_{0}) + \frac{1-\lambda}{32(1+1/r)L_y^2 } \big[\|\y_{T+1} - \y_{T+1}^*\|^2  - \|\y_0^* - \y_0\|^2\big] \notag\\
	\le
&-\frac{\alpha}{2} \sum_{t=0}^{T} \|\nabla \ell(\bx_t)\|^2 - (\frac{\alpha}{2} - \frac{L_{\ell}\alpha^2}{2})\sum_{t=0}^{T} \|\bu_{t}\|^2  + \frac{\alpha L_{\ell}  }{m}\sum_{t=0}^{T} \| \x_{t} - \ot\bx_{t}  \|^2 +2 \alpha \sum_{t=0}^{T} \| \frac{1}{m} \sum_{i=1}^{m} \bar\nabla f_i (\x_{i,t} ,\y_{i,t} ) - \bar\u_{ t} \|^2\notag\\&+  \frac{2\alpha}{m}  L_f^2 \sum_{t=0}^{T} \| \y_{t}^* - \y_{t} \|^2
  - \frac{1-\lambda}{32(1+1/r)L_y^2 } [1-(1+3r)(1-2\beta \frac{\mu_g L_g}{\mu_g+L_g }) ] \sum_{t=0}^{T} \|\y_{t} - \y_{t}^*\|^2 \notag\\&+ \frac{1-\lambda}{32(1+1/r)L_y^2 }(\delta-1)(1+r)^2 (2\beta \frac{1}{\mu_g+L_g } -\beta^2)\sum_{t=0}^{T} \| \v_{t} \|^2\notag\\
  &+  \frac{1-\lambda}{32(1+1/r)L_y^2 }(1+\frac{1}{r}) L_y^2\sum_{t=0}^T \|\x_{ t+1}-\x_{t}\|^2 \notag\\
  &+ \frac{1-\lambda}{32(1+1/r)L_y^2 }(1+r)(1+1/r) \beta^2\sum_{i=1}^m \sum_{t=0}^{T}\| \v_{i,t} - \nabla_{\y} g_{i}(\x_{i,t},\y_{i,t})\|^2\notag\\
  & + \frac{1-\lambda}{32(1+1/r)L_y^2 }(\frac{1}{\delta}-1 ) (1+r)^2 (2\beta \frac{1}{\mu_g+L_g } -\beta^2)\sum_{t=0}^{T} \|  \nabla_{\y} g_{i}(\x_{i,t},\y_{i,t}) -\v_{i,t} \|^2\notag\\
\overset{(a)}{\leq}
&-\frac{\alpha}{2} \sum_{t=0}^{T} \|\nabla \ell(\bx_t)\|^2 - (\frac{\alpha}{2} - \frac{L_{\ell}\alpha^2}{2})\sum_{t=0}^{T} \|\bu_{t}\|^2  + \frac{\alpha L_{\ell}  }{m}\sum_{t=0}^{T} \| \x_{t} - \ot\bx_{t}  \|^2 +2 \alpha \sum_{t=0}^{T} \| \frac{1}{m} \sum_{i=1}^{m} \bar\nabla f_i (\x_{i,t} ,\y_{i,t} ) - \bar\u_{ t} \|^2\notag\\&+  \frac{2\alpha}{m}  L_f^2 \sum_{t=0}^{T} \| \y_{t}^* - \y_{t} \|^2
- \frac{1-\lambda}{32(1+1/r)L_y^2 } [1-(1+3r)(1-2\beta \frac{\mu_g L_g}{\mu_g+L_g }) ] \sum_{t=0}^{T} \|\y_{t} - \y_{t}^*\|^2 \notag\\&+ \frac{1-\lambda}{32(1+1/r)L_y^2 }(\delta-1)(1+r)^2 (2\beta \frac{1}{\mu_g+L_g } -\beta^2)\sum_{t=0}^{T} \| \v_{t} \|^2\notag\\
&+  \frac{1-\lambda}{32(1+1/r)L_y^2 }(1+\frac{1}{r}) L_y^2\sum_{t=0}^T(8\|(\x_{t} - \ot\bx_{t}) \|^2 + 4\alpha^2 \|\u_{t} - \ot\bu_{t}\|^2 + 4\alpha^2m\|\bu_{t}\|^2 ) \notag\\
&+ \frac{1-\lambda}{32(1+1/r)L_y^2 }(1+r)(1+1/r) \beta^2\sum_{i=1}^m \sum_{t=0}^{T}\| \v_{i,t} - \nabla_{\y} g_{i}(\x_{i,t},\y_{i,t})\|^2\notag\\
& + \frac{1-\lambda}{32(1+1/r)L_y^2 }(\frac{1}{\delta}-1 ) (1+r)^2 (2\beta \frac{1}{\mu_g+L_g } -\beta^2)\sum_{t=0}^{T} \|  \nabla_{\y} g_{i}(\x_{i,t},\y_{i,t}) -\v_{i,t} \|^2\notag\\
 =&-\frac{\alpha}{2} \sum_{t=0}^{T} \|\nabla \ell(\bx_t)\|^2 - (\frac{\alpha}{2} - \frac{L_{\ell}\alpha^2}{2}- 4(1+\frac{1}{r}) L_y^2\alpha^2m \frac{1-\lambda}{32(1+1/r)L_y^2 } )\sum_{t=0}^{T} \|\bu_{t}\|^2  \notag\\&+ (\frac{\alpha L_{\ell}  }{m}  +(1+\frac{1}{r}) L_y^2 \frac{1-\lambda}{4(1+1/r)L_y^2 } )\sum_{t=0}^{T} \| \x_{t} - \ot\bx_{t}  \|^2 \notag\\
 &+  [\frac{2\alpha}{m}  L_f^2  +   \frac{1-\lambda}{32(1+1/r)L_y^2 } 3r -  \frac{1-\lambda}{32(1+1/r)L_y^2 }(1+3r)( 2\beta \frac{\mu_g L_g}{\mu_g+L_g }) ]\sum_{t=0}^{T}\|\y_{t} - \y_{t}^*\|^2\notag\\&+ 2\alpha \sum_{t=0}^{T} \| \frac{1}{m} \sum_{i=1}^{m} \bar\nabla f_i (\x_{i,t} ,\y_{i,t} ) - \bar\u_{ t} \|^2 \notag\\
 &+ \frac{1-\lambda}{32(1+1/r)L_y^2 }(\delta-1)(1+r)^2 (2\beta \frac{1}{\mu_g+L_g } -\beta^2)\sum_{t=0}^{T}\| \v_{t} \|^2+ 4(1+\frac{1}{r}) L_y^2\alpha^2 \frac{1-\lambda}{32(1+1/r)L_y^2 } (\sum_{t=0}^{T}\|\u_{t}- \ot\bu_{t}\|^2 )\notag\\
& + \frac{1-\lambda}{32(1+1/r)L_y^2 }[(1+1/r)(1+r)\beta^2+(\frac{1}{\delta}-1 ) (1+r)^2 (2\beta \frac{1}{\mu_g+L_g } -\beta^2)]\sum_{i=1}^m\sum_{t=0}^{T} \| \v_{i,t} - \nabla_{\y} g_{i}(\x_{i,t},\y_{i,t})\|^2,
\end{align}
where (a) follows from Eqs.\eqref{eqs30}.

\textbf{Proof of Theorem \ref{thm1}:}
For Algortihm \ref{Algorithm_GD}, we have $ \| \frac{1}{m} \sum_{i=1}^{m} \bar\nabla f_i (\x_{i,t} ,\y_{i,t} ) - \bar\u_{ t} \|^2 =0 $, $ \| \v_{i,t} - \nabla_{\y} g_{i}(\x_{i,t},\y_{i,t})\|^2=0$, and 
\begin{align}
	\|\p_t-\p_{t-1} \|^2 {\le}&
	L_K^2  (\|\x_{t}- \x_{t-1} \|^2+\beta^2 \|\v_{t-1} \|^2).
\end{align}

With the results from Step 3, we have
\begin{align}	\|\x_t-\ot\bx_t\|^2& \le  (1+c_1)\lambda^2\|\x_{t-1} -\ot\bx_{t-1} \|^2 + (1+\frac{1}{c_1}) \alpha^2\|\u_{t-1}-\ot \bu_{t-1}\|^2,\notag\\
	\|\u_t-\ot\bu_t\|^2 & \le((1+c_2)\lambda^2 )\|\u_{t-1} -\ot\bu_{t-1} \|^2 \notag\\&+ (1 + \frac{1}{c_2})L_K^2  (\|\x_{t}- \x_{t-1} \|^2+\beta^2 \|\v_{t-1} \|^2). 
\end{align}

Then, we have
\begin{align}\label{l3}
	&	\|\x_{T+1}-\ot\bx_{T+1}\|^2-	\|\x_{0}-\ot\bx_{0}\|^2 \notag\\ \le & ((1+c_1)\lambda^2-1) \sum_{t=1}^{T+1} \|\x_{t-1} -\ot\bx_{t-1} \|^2 + (1+\frac{1}{c_1}) \alpha^2\sum_{t=1}^{T+1}\|\u_{t-1}-\ot \bu_{t-1}\|^2.
\end{align}
\begin{align}\label{l4}
&	\|\u_{T+1}-\ot\bu_{T+1}\|^2  -	\|\u_{0}-\ot\bu_{0}\|^2 
\notag\\
\le& ((1+c_2)\lambda^2 -1)\sum_{t=1}^{T+1}\|\u_{t-1} -\ot\bu_{t-1} \|^2 +(1 + \frac{1}{c_2})L_K^2 \sum_{t=1}^{T+1} (\|\x_{t}- \x_{t-1} \|^2+\beta^2 \|\v_{t-1} \|^2). 
\end{align}

Combing \eqref{l3},\eqref{l4} and \eqref{eqs5}, with $\delta=r$, $ \| \frac{1}{m} \sum_{i=1}^{m} \bar\nabla f_i (\x_{i,t} ,\y_{i,t} ) - \bar\u_{ t} \|^2 =0 $, $ \| \v_{i,t} - \nabla_{\y} g_{i}(\x_{i,t},\y_{i,t})\|^2=0$, we have
\begin{align}
	&
	\ell(\bx_{T+1}) - \ell(\bx_{0}) +  \frac{1-\lambda}{32(1+1/r)L_y^2 } \big[\|\y_{T+1} - \y_{T+1}^*\|^2  - \|\y_0^* - \y_0\|^2\big] \notag\\&+	[\|\x_{T+1}-\ot\bx_{T+1}\|^2-	\|\x_{0}-\ot\bx_{0}\|^2 ]+
	\alpha [	\|\u_{T+1}-\ot\bu_{T+1}\|^2  -	\|\u_{0}-\ot\bu_{0}\|^2 ]
	\notag\\
	\leq 
	&-\frac{\alpha}{2} \sum_{t=0}^{T} \|\nabla \ell(\bx_t)\|^2 - (\frac{\alpha}{2} - \frac{L_{\ell}\alpha^2}{2}- 4(1+\frac{1}{r}) L_y^2\alpha^2m \frac{1-\lambda}{32(1+1/r)L_y^2 } )\sum_{t=0}^{T} \|\bu_{t}\|^2  \notag\\&+ (\frac{\alpha L_{\ell}  }{m}  +(1+\frac{1}{r}) L_y^2 \frac{1-\lambda}{4(1+1/r)L_y^2 } )\sum_{t=0}^{T} \| \x_{t} - \ot\bx_{t}  \|^2 \notag\\
	&+  [\frac{2\alpha}{m}  L_f^2  +   \frac{1-\lambda}{32(1+1/r)L_y^2 } 3r -  \frac{1-\lambda}{32(1+1/r)L_y^2 }(1+3r)( 2\beta \frac{\mu_g L_g}{\mu_g+L_g }) ]\sum_{t=0}^{T}\|\y_{t} - \y_{t}^*\|^2\notag\\
	&+ \frac{1-\lambda}{32(1+1/r)L_y^2 }(r-1)(1+r)^2 (2\beta \frac{1}{\mu_g+L_g } -\beta^2)\sum_{t=0}^{T}\| \v_{t} \|^2+ 4(1+\frac{1}{r}) L_y^2\alpha^2 \frac{1-\lambda}{32(1+1/r)L_y^2 } (\sum_{t=0}^{T}\|\u_{t}- \ot\bu_{t}\|^2 )
	\notag\\
&	+((1+c_1)\lambda^2-1) \sum_{t=1}^{T+1} \|\x_{t-1} -\ot\bx_{t-1} \|^2 + (1+\frac{1}{c_1}) \alpha^2\sum_{t=1}^{T+1}\|\u_{t-1}-\ot \bu_{t-1}\|^2\notag\\
&+\alpha((1+c_2)\lambda^2 -1)\sum_{t=1}^{T+1}\|\u_{t-1} -\ot\bu_{t-1} \|^2 +\alpha(1 + \frac{1}{c_2})L_K^2 \sum_{t=1}^{T+1} (\|\x_{t}- \x_{t-1} \|^2+\beta^2 \|\v_{t-1} \|^2)\notag\\
\overset{(a)}{\leq} &	-\frac{\alpha}{2} \sum_{t=0}^{T} \|\nabla \ell(\bx_t)\|^2 - (\frac{\alpha}{2} - \frac{L_{\ell}\alpha^2}{2}- 4(1+\frac{1}{r}) L_y^2\alpha^2m \frac{1-\lambda}{32(1+1/r)L_y^2 } )\sum_{t=0}^{T} \|\bu_{t}\|^2  \notag\\&+ (\frac{\alpha L_{\ell}  }{m}  +(1+\frac{1}{r}) L_y^2 \frac{1-\lambda}{4(1+1/r)L_y^2 } )\sum_{t=0}^{T} \| \x_{t} - \ot\bx_{t}  \|^2 \notag\\
&+  [\frac{2\alpha}{m}  L_f^2  +   \frac{1-\lambda}{32(1+1/r)L_y^2 } 3r -  \frac{1-\lambda}{32(1+1/r)L_y^2 }(1+3r)( 2\beta \frac{\mu_g L_g}{\mu_g+L_g }) ]\sum_{t=0}^{T}\|\y_{t} - \y_{t}^*\|^2\notag\\
&+ \frac{1-\lambda}{32(1+1/r)L_y^2 }(r-1)(1+r)^2 (2\beta \frac{1}{\mu_g+L_g } -\beta^2)\sum_{t=0}^{T}\| \v_{t} \|^2+ 4(1+\frac{1}{r}) L_y^2\alpha^2 \frac{1-\lambda}{32(1+1/r)L_y^2 } (\sum_{t=0}^{T}\|\u_{t}- \ot\bu_{t}\|^2 )
\notag\\
&	+((1+c_1)\lambda^2-1) \sum_{t=1}^{T+1} \|\x_{t-1} -\ot\bx_{t-1} \|^2 + (1+\frac{1}{c_1}) \alpha^2\sum_{t=1}^{T+1}\|\u_{t-1}-\ot \bu_{t-1}\|^2\notag\\
&+\alpha((1+c_2)\lambda^2 -1)\sum_{t=1}^{T+1}\|\u_{t-1} -\ot\bu_{t-1} \|^2 \notag\\
&+\alpha(1 + \frac{1}{c_2})L_K^2 \sum_{t=1}^{T+1} (8\|(\x_{t-1} - \ot\bx_{t-1}) \|^2 + 4\alpha^2 \|\u_{t-1} - \ot\bu_{t-1}\|^2 + 4\alpha^2m\|\bu_{t-1}\|^2+\beta^2 \|\v_{t-1} \|^2)\notag\\
= &	-\frac{\alpha}{2} \sum_{t=0}^{T} \|\nabla \ell(\bx_t)\|^2 - (\frac{\alpha}{2} - \frac{L_{\ell}\alpha^2}{2}- 4(1+\frac{1}{r}) L_y^2\alpha^2m \frac{1-\lambda}{32(1+1/r)L_y^2 }-\alpha(1 + \frac{1}{c_2})L_K^2 4 \alpha^2 m )\sum_{t=0}^{T} \|\bu_{t}\|^2  \notag\\&+ (8\alpha(1 + \frac{1}{c_2})L_K^2+((1+c_1)\lambda^2-1)+ \frac{\alpha L_{\ell}  }{m}  +(1+\frac{1}{r}) L_y^2 \frac{1-\lambda}{4(1+1/r)L_y^2 } )\sum_{t=0}^{T} \| \x_{t} - \ot\bx_{t}  \|^2 \notag\\
&+  [ \frac{2\alpha}{m}  L_f^2  +   \frac{1-\lambda}{32(1+1/r)L_y^2 } 3r -  \frac{1-\lambda}{32(1+1/r)L_y^2 }(1+3r)( 2\beta \frac{\mu_g L_g}{\mu_g+L_g }) ]\sum_{t=0}^{T}\|\y_{t} - \y_{t}^*\|^2\notag\\
&+[ \frac{1-\lambda}{32(1+1/r)L_y^2 }(r-1)(1+r)^2 (2\beta \frac{1}{\mu_g+L_g } -\beta^2)+\alpha(1 + \frac{1}{c_2})L_K^2\beta^2 ] \sum_{t=0}^{T}\| \v_{t} \|^2\notag\\
&+[ 4(1+\frac{1}{r}) L_y^2\alpha^2 \frac{1-\lambda}{32(1+1/r)L_y^2 }+4\alpha^2 \alpha(1 + \frac{1}{c_2})L_K^2+ \alpha((1+c_2)\lambda^2 -1) +(1+\frac{1}{c_1}) \alpha^2] (\sum_{t=0}^{T}\|\u_{t}- \ot\bu_{t}\|^2 ),
\end{align}
where (a) is from eqs.(\ref{eqs30}). Next, choosing $c_1=c_2=\frac{1}{\lambda}-1$, we have
\begin{align}\label{l5}
	&
	\ell(\bx_{T+1}) - \ell(\bx_{0}) +\frac{1-\lambda}{32(1+1/r)L_y^2 } \big[\|\y_{T+1} - \y_{T+1}^*\|^2  - \|\y_0^* - \y_0\|^2\big] \notag\\&+	[\|\x_{T+1}-\ot\bx_{T+1}\|^2-	\|\x_{0}-\ot\bx_{0}\|^2 ]+
	\alpha [	\|\u_{T+1}-\ot\bu_{T+1}\|^2  -	\|\u_{0}-\ot\bu_{0}\|^2 ]
	\notag\\
	\le
	&-\frac{\alpha}{2} \sum_{t=0}^{T} \|\nabla \ell(\bx_t)\|^2 + C_1\sum_{t=0}^{T} \|\bu_{t}\|^2  
	+C_2\sum_{t=0}^{T} \| \x_{t} - \ot\bx_{t}  \|^2 +  C_3 \sum_{t=0}^{T}\|\y_{t} - \y_{t}^*\|^2 \notag\\
	&+C_4\sum_{t=0}^{T}\| \v_{t} \|^2+ C_5\sum_{t=0}^{T}\|\u_{t}- \ot\bu_{t}\|^2,
\end{align}
where the constants are
\begin{align}
&C_1=-(\frac{\alpha}{2} -4\alpha^2m \frac{1}{1-\lambda} L_K^2	\alpha - \frac{L_{\ell}\alpha^2}{2}- 4(1+\frac{1}{r}) L_y^2\alpha^2m  \frac{1-\lambda}{32(1+1/r)L_y^2 } ),\\
&C_2=\big[8\frac{1}{1-\lambda} L_K^2	\alpha +\lambda-1+ \frac{\alpha L_{\ell}  }{m} +(1+\frac{1}{r}) L_y^2 \frac{1-\lambda}{4(1+1/r)L_y^2 }\big],\\
&C_3=[\frac{2\alpha}{m}  L_f^2  +   \frac{1-\lambda}{32(1+1/r)L_y^2 }3r -  \frac{1-\lambda}{32(1+1/r)L_y^2 }(1+3r)( 2\beta \frac{\mu_g L_g}{\mu_g+L_g }) ],\\
&C_4=\big[\frac{1}{1-\lambda} L_K^2	\alpha \beta^2+(r-1)(1+r)^2 (2\beta \frac{1}{\mu_g+L_g } -\beta^2)  \frac{1-\lambda}{32(1+1/r)L_y^2 } \big],\\
&C_5=\big[4\alpha^2 \frac{1}{1-\lambda}L_K^2 	\alpha +4(1+\frac{1}{r}) L_y^2\alpha^2 \frac{1-\lambda}{32(1+1/r)L_y^2 } +	\alpha(\lambda -1)+ \frac{1}{1-\lambda} \alpha^2\big].
\end{align}	

To ensure $C_1\leq0$, we have

\begin{align}
C_1=&	-(\frac{\alpha}{2} -4\alpha^2m \frac{1}{1-\lambda} L_K^2	\alpha - \frac{L_{\ell}\alpha^2}{2}- 4(1+\frac{1}{r}) L_y^2\alpha^2m  \frac{1-\lambda}{32(1+1/r)L_y^2 } )\notag\\
=&-	\frac{\alpha}{2} + \frac{4\alpha^3 L_K^2 m}{1-\lambda} + \frac{L_{\ell}\alpha^2}{2}+  \alpha^2m  \frac{1-\lambda}{8 } \notag\\
\overset{(a)}{\leq}&-\frac{\alpha}{2} +\frac{\alpha}{8}+\frac{\alpha}{8}+\frac{\alpha}{8} < 0,
\end{align}	
where $(a)$ follows from $\alpha \leq \min\{ \frac{1}{4  L_K} \sqrt{\frac{1-\lambda}{2m}}, \frac{1}{4L_{\ell}},    \frac{1}{m (1-\lambda)}\} $.

To ensure $C_2\leq-\frac{1-\lambda}{2}$, we have
\begin{align}
C_2=\big[8\frac{1}{1-\lambda} L_K^2	\alpha m+\lambda-1+ \frac{\alpha L_{\ell}  }{m} +(1+\frac{1}{r}) L_y^2 \frac{1-\lambda}{4(1+1/r)L_y^2 }\big]\overset{(a)}{\leq} \frac{1-\lambda}{4} -(1-\lambda)+ \frac{1-\lambda}{4}+ \frac{1-\lambda}{4}\leq -\frac{1-\lambda}{4},
\end{align}	
where $(a)$ follows from $\alpha \leq \min\{ \frac{(1-\lambda)^2}{32L_K^2 } , \frac{  m(1-\lambda)}{4L_{\ell}}\} $.

To ensure $C_3\leq- \frac{3  r^2(1-\lambda)}{32(1+r)L_y^2 }$, we have
\begin{align}
	C_3&=[\frac{2\alpha}{m}  L_f^2  +   \frac{1-\lambda}{32(1+1/r)L_y^2 }3r -  \frac{1-\lambda}{32(1+1/r)L_y^2 }(1+3r)( 2\beta \frac{\mu_g L_g}{\mu_g+L_g }) ]\notag\\&\overset{(a)}{\leq} 3r \frac{1-\lambda}{32(1+1/r)L_y^2 }( 2\beta \frac{\mu_g L_g}{\mu_g+L_g })+\frac{1-\lambda}{32(1+1/r)L_y^2 }( \beta \frac{\mu_g L_g}{\mu_g+L_g })-\frac{1-\lambda}{32(1+1/r)L_y^2 }(1+3r)( 2\beta \frac{\mu_g L_g}{\mu_g+L_g })\notag\\ \leq& -\frac{1-\lambda}{32(1+r)L_y^2 }\beta r \frac{\mu_g L_g}{\mu_g+L_g } =-\frac{3  r^2(1-\lambda)}{32(1+r)L_y^2 } ,
\end{align}	
where $(a)$ follows from $\alpha \leq \frac{(1-\lambda) }{32L_y^2(1+1/r)} \frac{3rm}{ L_f^2} ( \beta \frac{\mu_g L_g}{\mu_g+L_g })=  \frac{(1-\lambda) }{32L_y^2(1+1/r)} \frac{9r^2m}{ L_f^2} , \beta\leq \frac{3(\mu_g+L_g)}{ \mu_g L_g}, r= \frac{1}{3}\beta \frac{\mu_g L_g}{\mu_g+L_g } $.

To ensure $C_4\leq0$, we have
\begin{align}
	C_4&=\big[\frac{1}{1-\lambda} L_K^2	\alpha \beta^2+(r-1)(1+r)^2 (2\beta \frac{1}{\mu_g+L_g } -\beta^2)  \frac{1-\lambda}{32(1+1/r)L_y^2 } \big]\notag\\&\overset{(a)}{\leq}   \frac{1-\lambda}{32(1+1/r)L_y^2 }(1-r) (1+r)^2\beta \frac{1}{\mu_g+L_g } -  \frac{1-\lambda}{32(1+1/r)L_y^2 }(1-r) (1+r)^2 2\beta \frac{1}{\mu_g+L_g } +  \frac{1-\lambda}{32(1+1/r)L_y^2 }(1-r) (1+r)^2\beta \frac{1}{\mu_g+L_g } =0,
\end{align}	
where $(a)$ follows from $\alpha \leq  \frac{(1-r) (1+r) r (1-\lambda)^2}{32 L_y^2  (\mu_g+L_g) L_K^2 \beta }, \beta\leq \frac{1}{\mu_g+L_g}$.

To ensure $C_5\leq0$, we have
\begin{align}
C_5&=\big[4\alpha^2 \frac{1}{1-\lambda}L_K^2 	\alpha +4(1+\frac{1}{r}) L_y^2\alpha^2 \frac{1-\lambda}{32(1+1/r)L_y^2 } +	\alpha(\lambda -1)+ \frac{1}{1-\lambda} \alpha^2\big]
\notag\\&=\big[4\alpha^2 \frac{1}{1-\lambda}L_K^2 	\alpha +\alpha^2 \frac{1-\lambda}{8} +	\alpha(\lambda -1)+ \frac{1}{1-\lambda} \alpha^2\big]
\notag\\&\overset{(a)}{\leq} 
\frac{\alpha(1-\lambda )}{4}+\frac{\alpha(1-\lambda )}{8}- \alpha(1-\lambda )+\frac{\alpha(1-\lambda )}{8}< 0
\end{align}	
where $(a)$ follows from $ \alpha \leq \min\{ \frac{1-\lambda}{4 L_K} ,     1, \frac{(1-\lambda)^2}{8} \}
  $.

  With the above conditions, we have
  \begin{align}
  	&
  	\ell(\bx_{T+1}) - \ell(\bx_{0}) + \frac{1-\lambda}{32(1+1/r)L_y^2 }  \big[\|\y_{T+1} - \y_{T+1}^*\|^2  - \|\y_0^* - \y_0\|^2\big] \notag\\&+	[\|\x_{T+1}-\ot\bx_{T+1}\|^2-	\|\x_{0}-\ot\bx_{0}\|^2 ]+
  	\alpha [	\|\u_{T+1}-\ot\bu_{T+1}\|^2  -	\|\u_{0}-\ot\bu_{0}\|^2 ]
  	\notag\\
  	\le
  	&-\frac{\alpha}{2} \sum_{t=0}^{T} \|\nabla \ell(\bx_t)\|^2-\frac{1-\lambda}{4}\sum_{t=0}^{T} \| \x_{t} - \ot\bx_{t}  \|^2 - \frac{3  r^2(1-\lambda)}{32(1+r)L_y^2 } \sum_{t=0}^{T}\|\y_{t} - \y_{t}^*\|^2.
  \end{align}

Thus,  we have the following convergence results:
  \begin{align}
  \frac{1}{T+1}\sum_{t=0}^{T} \| \mathfrak{M}_t\|^2 \leq & 	\frac{
  1}{(T+1)\min\{\frac{1-\lambda}{4},\frac{3  r^2(1-\lambda)}{32(1+r)L_y^2 },\frac{\alpha}{2}\}}[ \mathfrak{B}_0- \mathfrak{B}_{T+1}]\notag\\
& \leq  	\frac{
	1}{(T+1)\min\{\frac{1-\lambda}{4}, \frac{3  r^2(1-\lambda)}{32(1+r)L_y^2 },\frac{\alpha}{2}\}}[ \mathfrak{B}_0- \ell^*],
\end{align}
where $ \mathfrak{B}_t=	\ell(\bx_{t}) +  \frac{1-\lambda}{32(1+1/r)L_y^2 }  \|\by_{t} - \y_{t}^*\|^2  +	\|\x_{t}-\ot\bx_{t}\|^2 +
\alpha 	\|\u_{t}-\ot\bu_{t}\|^2  $.

\textbf{Proof of Theorem \ref{thm2}:}
For Algortihm \ref{Algorithm_VR}, we have

\begin{align} 
	\| \frac{1}{m} \sum_{i=1}^{m} \bar\nabla f_i (\x_{i,t} ,\y_{i,t} ) - \bar\u_{ t} \|^2
	=	\| \frac{1}{m} \sum_{i=1}^{m} \bar\nabla f_i (\x_{i,t} ,\y_{i,t} ) - \bar\p(\x_{i,t},\y_{i,t} ) \|^2 \leq  \frac{1}{m} \sum_{i=1}^{m} \|\bar\nabla f_i (\x_{i,t} ,\y_{i,t} ) - \p_i(\x_{i,t},\y_{i,t} ) \|^2
\end{align}

From the algortihm update, we have
\begin{align} 
	& \mathbb{E}_t \|\bar\nabla f_i (\x_{i,t} ,\y_{i,t} ) - \p_i(\x_{i,t},\y_{i,t} ) \|^2
	\notag\\
	=	& \mathbb{E}_t \|\bar\nabla f_i (\x_{i,t} ,\y_{i,t} ) - \p_i(\x_{i,t},\y_{i,t} )+\mathbb{E}_{\bar\xi_{i,t}}[ \p_i(\x_{i,t},\y_{i,t} )]-\mathbb{E}_{\bar\xi_{i,t}}[ \p_i(\x_{i,t},\y_{i,t} )] \|^2
	\notag\\
= &\mathbb{E}_t \| \p_i(\x_{i,t},\y_{i,t} )-\mathbb{E}_{\bar\xi_{i,t}}[ \p_i(\x_{i,t},\y_{i,t} )] \|^2+  \mathbb{E}_t \|\bar\nabla f_i (\x_{i,t} ,\y_{i,t} ) -\mathbb{E}_{\bar\xi_{i,t}}[ \p_i(\x_{i,t},\y_{i,t} )] \|^2\notag\\
\leq& \mathbb{E}_t \| \p_i(\x_{i,t},\y_{i,t} )-\mathbb{E}_{\bar\xi_{i,t}}[ \p_i(\x_{i,t},\y_{i,t} )] \|^2+  (\frac{C_{g_{x y}} C_{f_{y}}}{\mu_{g}}\left(1-\frac{\mu_{g}}{L_{g}}\right)^{K})^2,
\end{align}
where the last in equality os 
Moreover, with $t \in\left(\left(n_{t}-1\right) q, n_{t} q-1\right] \cap \mathbb{Z}$, we have
\begin{align} 
&	\mathbb{E}_t \| \p_i(\x_{i,t},\y_{i,t} )-\mathbb{E}_{\bar\xi_{i,t}}[ \p_i(\x_{i,t},\y_{i,t} )] \|^2\notag\\
=& \mathbb{E}_t \| \p_{i}(\x_{i,t-1},\y_{i,t-1})+	\frac{1}{|\mathcal{S}|} \sum_{i=1}^{\mathcal{S}} \big[\bar\nabla f_{i}(\x_{i,k},\y_{i,k}; {\bar\xi_{i,t}} )-\bar\nabla f_{i}(\x_{i,k-1},\y_{i,k-1};  {\bar\xi_{i,t}} )\big]-\mathbb{E}_{\bar\xi_{i,t}}[ \p_i(\x_{i,t},\y_{i,t} )]\notag\\&+\mathbb{E}_{\bar\xi_{i,t}}[ \p_i(\x_{i,t-1},\y_{i,t-1} )] -\mathbb{E}_{\bar\xi_{i,t}}[ \p_i(\x_{i,t-1},\y_{i,t-1} )]  \|^2\notag\\
=& \mathbb{E}_t \| \p_{i}(\x_{i,t-1},\y_{i,t-1}) -\mathbb{E}_{\bar\xi_{i,t}}[ \p_i(\x_{i,t-1},\y_{i,t-1} )] \|^2+\|	\frac{1}{|\mathcal{S}|} \sum_{i=1}^{\mathcal{S}} \big[\bar\nabla f_{i}(\x_{i,k},\y_{i,k}; {\bar\xi_{i,t}} )-\bar\nabla f_{i}(\x_{i,k-1},\y_{i,k-1};  {\bar\xi_{i,t}} )\big]\notag\\&-\mathbb{E}_{\bar\xi_{i,t}}[ \p_i(\x_{i,t},\y_{i,t} )]+\mathbb{E}_{\bar\xi_{i,t}}[ \p_i(\x_{i,t-1},\y_{i,t-1} )] \|^2\notag\\
\overset{(b}{\leq})& \mathbb{E}_t \| \p_{i}(\x_{i,t-1},\y_{i,t-1}) -\mathbb{E}_{\bar\xi_{i,t}}[ \p_i(\x_{i,t-1},\y_{i,t-1} )] \|^2+\frac{1}{|\mathcal{S}|}  L_K^2 \mathbb{E}_t(\| \x_{i,t}-\x_{i,t-1} \|^2+\| \y_{i,t}-\y_{i,t-1} \|^2),
\end{align}
where the last inequality use the mean variance theorem.

Telescoping over $t$ from ($(n_t-1)q+1$ to $t$, where $t\leq n_tq-1$, we obtain that
\begin{align} \label{eqs70}
	&	\mathbb{E}_t \| \p_i(\x_{i,t},\y_{i,t} )-\mathbb{E}_{\bar\xi_{i,t}}[ \p_i(\x_{i,t},\y_{i,t} )] \|^2\notag\\
	\leq& \mathbb{E}_t \| \p_{i}(\x_{i,(n_t-1)q},\y_{i,(n_t-1)q}) -\mathbb{E}_{\bar\xi_{i,t}}[ \p_i(\x_{i,(n_t-1)q},\y_{i,(n_t-1)q} )] \|^2+\frac{1}{|\mathcal{S}|}  L_K^2\sum_{t=(n_t-1)q}^{t-1} \mathbb{E}_t(\| \x_{i,t}-\x_{i,t-1} \|^2+\| \y_{i,t}-\y_{i,t-1} \|^2)\notag\\
\end{align}

Since $ \mathbb{E}_t \| \p_{i}(\x_{i,(n_t-1)q},\y_{i,(n_t-1)q}) -\mathbb{E}_{\bar\xi_{i,t}}[ \p_i(\x_{i,(n_t-1)q},\y_{i,(n_t-1)q} )] \|^2=0, |\mathcal{S}|=q$, we can conclude that 

\begin{align} \label{73}
&\sum_{t=0}^T	\| \frac{1}{m} \sum_{i=1}^{m} \bar\nabla f_i (\x_{i,t} ,\y_{i,t} ) - \bar\u_{ t} \|^2\notag\\
\leq& L_K^2\sum_{t=0}^T (\| \x_{t+1}-\x_{t} \|^2+\| \y_{t+1}-\y_{t} \|^2) + ( \frac{C_{g_{x y}} C_{f_{y}}}{\mu_{g}}\left(1-\frac{\mu_{g}}{L_{g}}\right)^{K})^2\cdot (T+1).
\end{align}

Also, we have 
	
\begin{align} \label{74}
	&\sum_{i=1}^m \sum_{t=0}^T 	\|  \bar\nabla_{\y} g_i (\x_{i,t} ,\y_{i,t} ) - \v_{ i, t} \|^2
	\leq L_K^2\sum_{t=0}^T (\| \x_{t+1}-\x_{t} \|^2+\| \y_{t+1}-\y_{t} \|^2).
\end{align}

Next, for $	\|\p_t-\p_{t-1} \|^2$, we have\textsf{}
Case 1: $t \in\left(\left(n_{t}-1\right) q, n_{t} q-1\right] \cap \mathbb{Z}$: 
\begin{align}\label{72}
	&\mathbb{E}\|\p_t-\p_{t-1} \|^2=\sum_{i=1}^{m} \mathbb{E}\left\|\frac{1}{\left|\mathcal{S}_{i, t}\right|} \sum_{j \in \mathcal{S}_{i, t}} \nabla_{\boldsymbol{\x}} f_{i}\left(\boldsymbol{\x}_{i, t}, \boldsymbol{\y}_{i, t}; {\bar\xi_{j,t}} \right)-\nabla_{\boldsymbol{\x}} f_{i}\left(\boldsymbol{\x}_{i, t-1}, \boldsymbol{\y}_{i, t-1}; {\bar\xi_{j,t}} \right)\right\|^{2} \\
		&\leq \frac{1}{\left|\mathcal{S}_{i, t}\right|^{2}} \sum_{i=1}^{m} \sum_{j \in \mathcal{S}_{i, t}} \mathbb{E}\left\|\nabla_{\boldsymbol{\x}} f_{i}\left(\boldsymbol{\x}_{i, t}, \boldsymbol{\y}_{i, t}; {\bar\xi_{j,t}}\right)-\nabla_{\boldsymbol{\x}} f_{i}\left(\boldsymbol{\x}_{i, t-1}, \boldsymbol{\y}_{i, t-1};{\bar\xi_{j,t}} \right) \right\|^{2} \\
		&\leq L_{K}^{2} \sum_{i=1}^{m} \mathbb{E}\left\|\boldsymbol{\x}_{i, t-1}-\boldsymbol{\x}_{i, t}\right\|^{2}+L_{K}^{2} \sum_{i=1}^{m} \mathbb{E}\left\|\boldsymbol{\y}_{i, t-1}-\boldsymbol{\y}_{i, t}\right\|^{2}\notag\\
&	 {\le}
	L_K^2  (\|\x_{t}- \x_{t-1} \|^2+\beta^2 \|\v_{t-1} \|^2).
\end{align}

Case 2: $t= n_{t} q $: 


\begin{align}
	&	\mathbb{E}\|\p_t-\p_{t-1} \|^2\notag\\
	=&	\mathbb{E}\|\p_t-\p_{t-1}-\mathbb{E}_{\bar\xi_{i,t}}\p_i(\x_{i,t},\y_{i,t} )+\mathbb{E}_{\bar\xi_{i,t}}\p_i(\x_{i,t},\y_{i,t} )-\mathbb{E}_{\bar\xi_{i,t}}[ \p_i(\x_{i,t-1},\y_{i,t-1} )+\mathbb{E}_{\bar\xi_{i,t}} \p_i(\x_{i,t-1},\y_{i,t-1} ) \|^2\notag\\
	\le&	3\mathbb{E}\|\p_t-\mathbb{E}_{\bar\xi_{i,t}}[ \p_i(\x_{i,t},\y_{i,t} ) \|^2+3\mathbb{E}\|\p_{t-1}-\mathbb{E}_{\bar\xi_{i,t}}[ \p_i(\x_{i,t-1},\y_{i,t-1} )\|^2+3L_K^2\mathbb{E}(\|\x_{t}- \x_{t-1} \|^2+\beta^2 \|\v_{t-1} \|^2)\notag\\
	\overset{(a)}{\le}&
	3 \mathbb{E}\left\|\p_{n_tq}-\mathbb{E}_{\bar\xi_{i,t}}\p_i(\x_{i,n_tq},\y_{i,n_tq} ) \right\|^{2}+3 \mathbb{E}\left\|\p(\x_{i,(n_t-1)q},\y_{i,(n_t-1)q}-\mathbb{E}_{\bar\xi_{i,t}}\p_i(\x_{i,(n_t-1)q},\y_{i,(n_t-1)q} )\right\|^{2}  \notag\\
	&+\sum_{r'=\left(n_{t}-1\right) q+1}^{n_{t} q-1} \frac{3 L_{K}^{2}}{|\mathcal{S}|} \mathbb{E}\left(\left\|\x_{r'}-\x_{r'-1}\right\|^{2}+\left\|\y_{r'}-\y_{r'-1}\right\|^{2}\right) +3 L_{K}^{2} \mathbb{E}\left\|\x_{n_{t} q-1}-\x_{n_{t} q}\right\|^{2}+3 L_{K}^{2} \beta^2 \mathbb{E}\left\|\v_{n_{t} q-1}\right\|^{2},
\end{align}
where (a) is from (\ref{eqs70}) and set $t=n_tq$.

Telescoping from $r=\left(n_{t}-1\right) q+1$  to $n_{t} q $ and set $|\mathcal{S}|=q$, we have
\begin{align}
&	\sum_{r=\left(n_{t}-1\right) q+1}^{n_{t} q}\mathbb{E}\|\p_r-\p_{r-1} \|^2\notag\\
	\leq &	3(q+1) \mathbb{E}\left\|\p_{n_tq}-\mathbb{E}_{\bar\xi_{i,t}}\p_i(\x_{i,n_tq},\y_{i,n_tq})\right\|^{2}+3 (q+1) \mathbb{E}\left\|\p_{(n_t-1)q}-\mathbb{E}_{\bar\xi_{i,t}}\p_i(\x_{i,(n_t-1)q},\y_{i,(n_t-1)q} )\right\|^{2} \notag\\
	&+\sum_{r=\left(n_{t}-1\right) q+1}^{n_{t} q} \frac{4 L_{K}^{2}}{q} \mathbb{E}\left(\left\|\x_{r}-\x_{r-1}\right\|^{2}+\left\|\y_{r}-\y_{r-1}\right\|^{2}\right) \notag\\
	=
	&\sum_{r=\left(n_{t}-1\right) q+1}^{n_{t} q} \frac{4 L_{K}^{2}}{q} \mathbb{E}\left(\left\|\x_{r}-\x_{r-1}\right\|^{2}+\left\|\y_{r}-\y_{r-1}\right\|^{2}\right) 
\end{align}

Since $\mathbb{E}\left\|\p_{n_tq}-\mathbb{E}_{\bar\xi_{i,t}}\p_i(\x_{i,n_tq},\y_{i,n_tq})\right\|^{2}=\mathbb{E}\left\|\p_{(n_t-1)q}-\mathbb{E}_{\bar\xi_{i,t}}\p_i(\x_{i,(n_t-1)q},\y_{i,(n_t-1)q} )\right\|^{2} =0 $, and with eqs.(\ref{72}),we have

\begin{align}\label{75}
\sum_{t=1}^T	\|\p_t-\p_{t-1} \|^2
\leq \sum_{t=1}^T\big[ 4L_K^2 \mathbb{E} \left\|\x_{t}-\x_{t-1}\right\|^{2}+4L_K^2 \left\|\y_{t}-\y_{t-1}\right\|^{2}  \big]
\end{align}
With the results from Step 3, we have
\begin{align}	\|\x_t-\ot\bx_t\|^2& \le  (1+c_1)\lambda^2\|\x_{t-1} -\ot\bx_{t-1} \|^2 + (1+\frac{1}{c_1}) \alpha^2\|\u_{t-1}-\ot \bu_{t-1}\|^2,\notag\\
	\|\u_t-\ot\bu_t\|^2 & \le	\|\u_{t-1}-\ot\bu_{t-1}\|^2 + ((1+c_2)\lambda^2 -1)\|\u_{t-1} -\ot\bu_{t-1} \|^2 \notag\\&+ (1 + \frac{1}{c_2})4L_K^2  (\|\x_{t}- \x_{t-1} \|^2+\beta^2 \|\v_{t-1} \|^2). 
\end{align}

Then, we have
\begin{align}\label{l3'}
	&	\|\x_{T+1}-\ot\bx_{T+1}\|^2-	\|\x_{0}-\ot\bx_{0}\|^2 \notag\\ \le & ((1+c_1)\lambda^2-1) \sum_{t=1}^{T+1} \|\x_{t-1} -\ot\bx_{t-1} \|^2 + (1+\frac{1}{c_1}) \alpha^2\sum_{t=1}^{T+1}\|\u_{t-1}-\ot \bu_{t-1}\|^2.
\end{align}
\begin{align}\label{l4'}
	&	\|\u_{T+1}-\ot\bu_{T+1}\|^2  -	\|\u_{0}-\ot\bu_{0}\|^2 
	\notag\\
	\le& ((1+c_2)\lambda^2 -1)\sum_{t=1}^{T+1}\|\u_{t-1} -\ot\bu_{t-1} \|^2 +(1 + \frac{1}{c_2})4L_K^2 \sum_{t=1}^{T+1} (\|\x_{t}- \x_{t-1} \|^2+\beta^2 \|\v_{t-1} \|^2). 
\end{align}

Combing \eqref{eqs5}, \eqref{l3'} and \eqref{l4'}, we have
\begin{align}
	&
	\ell(\bx_{T+1}) - \ell(\bx_{0}) +  \frac{1-\lambda}{32(1+1/r)L_y^2 } \big[\|\by_{T+1} - \y_{T+1}^*\|^2  - \|\y_0^* - \by_0\|^2\big] \notag\\&+	[\|\x_{T+1}-\ot\bx_{T+1}\|^2-	\|\x_{0}-\ot\bx_{0}\|^2 ]+
	\alpha [	\|\u_{T+1}-\ot\bu_{T+1}\|^2  -	\|\u_{0}-\ot\bu_{0}\|^2 ]
	\notag\\
	\le
&-\frac{\alpha}{2} \sum_{t=0}^{T} \|\nabla \ell(\bx_t)\|^2 - (\frac{\alpha}{2} - \frac{L_{\ell}\alpha^2}{2}- 4(1+\frac{1}{r}) L_y^2\alpha^2m \frac{1-\lambda}{32(1+1/r)L_y^2 } )\sum_{t=0}^{T} \|\bu_{t}\|^2  \notag\\&+ (\frac{\alpha L_{\ell}  }{m}  +(1+\frac{1}{r}) L_y^2 \frac{1-\lambda}{4(1+1/r)L_y^2 } )\sum_{t=0}^{T} \| \x_{t} - \ot\bx_{t}  \|^2 \notag\\
&+  [\frac{2\alpha}{m}  L_f^2  +   \frac{1-\lambda}{32(1+1/r)L_y^2 } 3r -  \frac{1-\lambda}{32(1+1/r)L_y^2 }(1+3r)( 2\beta \frac{\mu_g L_g}{\mu_g+L_g }) ]\sum_{t=0}^{T}\|\y_{t} - \y_{t}^*\|^2\notag\\
&+ 2\alpha( \sum_{t=0}^{T} \| \frac{1}{m} \sum_{i=1}^{m} \bar\nabla f_i (\x_{i,t} ,\y_{i,t} ) - \bar\u_{ t} \|^2) \notag\\
&+ \frac{1-\lambda}{32(1+1/r)L_y^2 }(\delta-1)(1+r)^2 (2\beta \frac{1}{\mu_g+L_g } -\beta^2)\sum_{t=0}^{T}\| \v_{t} \|^2+ 4(1+\frac{1}{r}) L_y^2\alpha^2 \frac{1-\lambda}{32(1+1/r)L_y^2 } (\sum_{t=0}^{T}\|\u_{t}- \ot\bu_{t}\|^2 )\notag\\
& + \frac{1-\lambda}{32(1+1/r)L_y^2 }[(1+1/r)(1+r)\beta^2+(\frac{1}{\delta}-1 ) (1+r)^2 (2\beta \frac{1}{\mu_g+L_g } -\beta^2)]\sum_{i=1}^m\sum_{t=0}^{T} \| \v_{i,t} - \nabla_{\y} g_{i}(\x_{i,t},\y_{i,t})\|^2\notag\\
&+((1+c_1)\lambda^2-1) \sum_{t=1}^{T+1} \|\x_{t-1} -\ot\bx_{t-1} \|^2 + (1+\frac{1}{c_1}) \alpha^2\sum_{t=1}^{T+1}\|\u_{t-1}-\ot \bu_{t-1}\|^2\notag\\
&+\alpha((1+c_2)\lambda^2 -1)\sum_{t=1}^{T+1}\|\u_{t-1} -\ot\bu_{t-1} \|^2 +\alpha(1 + \frac{1}{c_2})4L_K^2 \sum_{t=1}^{T+1} (\|\x_{t}- \x_{t-1} \|^2+\beta^2 \|\v_{t-1} \|^2).
\end{align}	
	
Next, by plugging the Eqs.\eqref{73}, Eqs.\eqref{74}, we have
\begin{align}
		&
	\ell(\bx_{T+1}) - \ell(\bx_{0}) + \frac{1-\lambda}{32(1+1/r)L_y^2 }  \big[\|\by_{T+1} - \y_{T+1}^*\|^2  - \|\y_0^* - \by_0\|^2\big] \notag\\&+	[\|\x_{T+1}-\ot\bx_{T+1}\|^2-	\|\x_{0}-\ot\bx_{0}\|^2 ]+
	\alpha [	\|\u_{T+1}-\ot\bu_{T+1}\|^2  -	\|\u_{0}-\ot\bu_{0}\|^2 ]
	\notag\\
\leq
&-\frac{\alpha}{2} \sum_{t=0}^{T} \|\nabla \ell(\bx_t)\|^2 - (\frac{\alpha}{2} - \frac{L_{\ell}\alpha^2}{2}- 4(1+\frac{1}{r}) L_y^2\alpha^2m \frac{1-\lambda}{32(1+1/r)L_y^2 } )\sum_{t=0}^{T} \|\bu_{t}\|^2  \notag\\&+ (\frac{\alpha L_{\ell}  }{m}  +(1+\frac{1}{r}) L_y^2 \frac{1-\lambda}{4(1+1/r)L_y^2 } )\sum_{t=0}^{T} \| \x_{t} - \ot\bx_{t}  \|^2 \notag\\
&+  [\frac{2\alpha}{m}  L_f^2  +   \frac{1-\lambda}{32(1+1/r)L_y^2 } 3r -  \frac{1-\lambda}{32(1+1/r)L_y^2 }(1+3r)( 2\beta \frac{\mu_g L_g}{\mu_g+L_g }) ]\sum_{t=0}^{T}\|\y_{t} - \y_{t}^*\|^2\notag\\
&+ 2\alpha (L_K^2\sum_{t=0}^T (\| \x_{t+1}-\x_{t} \|^2+\| \y_{t+1}-\y_{t} \|^2) + ( \frac{C_{g_{x y}} C_{f_{y}}}{\mu_{g}}\left(1-\frac{\mu_{g}}{L_{g}}\right)^{K})^2\cdot (T+1)) \notag\\
&+ \frac{1-\lambda}{32(1+1/r)L_y^2 }(\delta-1)(1+r)^2 (2\beta \frac{1}{\mu_g+L_g } -\beta^2)\sum_{t=0}^{T}\| \v_{t} \|^2+ 4(1+\frac{1}{r}) L_y^2\alpha^2 \frac{1-\lambda}{32(1+1/r)L_y^2 } (\sum_{t=0}^{T}\|\u_{t}- \ot\bu_{t}\|^2 )\notag\\
& + \frac{1-\lambda}{32(1+1/r)L_y^2 }[(1+1/r)(1+r)\beta^2+(\frac{1}{\delta}-1 ) (1+r)^2 (2\beta \frac{1}{\mu_g+L_g } -\beta^2)][L_K^2\sum_{t=0}^T (\| \x_{t+1}-\x_{t} \|^2+\| \y_{t+1}-\y_{t} \|^2)]\notag\\
&+((1+c_1)\lambda^2-1) \sum_{t=1}^{T+1} \|\x_{t-1} -\ot\bx_{t-1} \|^2 + (1+\frac{1}{c_1}) \alpha^2\sum_{t=1}^{T+1}\|\u_{t-1}-\ot \bu_{t-1}\|^2\notag\\
&+\alpha((1+c_2)\lambda^2 -1)\sum_{t=1}^{T+1}\|\u_{t-1} -\ot\bu_{t-1} \|^2 +\alpha(1 + \frac{1}{c_2})4L_K^2 \sum_{t=1}^{T+1} (\|\x_{t}- \x_{t-1} \|^2+\beta^2 \|\v_{t-1} \|^2),
\end{align}

Next, with $r\in(0,1]$, we have 
\begin{align}\label{87}
&L_K^2[(1+1/r)(1+r)\beta^2+(\frac{1}{\delta}-1 ) (1+r)^2 (2\beta \frac{1}{\mu_g+L_g } -\beta^2)]\notag\\
&\leq  L_K^2 [\frac{(1+r)^2}{r}\beta^2+(\frac{1}{\delta}) (1+r)^2 (2\beta \frac{1}{\mu_g+L_g } )] \notag\\
&\leq  L_K^2  [\frac{4\beta^2}{r}+(\frac{1}{\delta}) (1+r)^2 (2\beta \frac{1}{\mu_g+L_g } )] \notag\\
&	\overset{(a)}{=} L_K^2  [\frac{12\beta (\mu_g+L_g) }{ \mu_g L_g}+(\frac{1}{\delta}) (1+r)^2 (2\beta \frac{1}{\mu_g+L_g } )].
\end{align}
where   (a) follows from $r=\frac{1}{3} \beta \frac{\mu_g L_g}{\mu_g+L_g }, \beta\leq \frac{3 (\mu_g+L_g) }{\mu_gL_g}$.

Thus, we can conclude that 
\begin{align}
	&
	\ell(\bx_{T+1}) - \ell(\bx_{0}) +  \frac{1-\lambda}{32(1+1/r)L_y^2 }  \big[\|\by_{T+1} - \y_{T+1}^*\|^2  - \|\y_0^* - \by_0\|^2\big] \notag\\&+	[\|\x_{T+1}-\ot\bx_{T+1}\|^2-	\|\x_{0}-\ot\bx_{0}\|^2 ]+
	\alpha [	\|\u_{T+1}-\ot\bu_{T+1}\|^2  -	\|\u_{0}-\ot\bu_{0}\|^2 ]
	\notag\\
	\leq
&-\frac{\alpha}{2} \sum_{t=0}^{T} \|\nabla \ell(\bx_t)\|^2 - (\frac{\alpha}{2} - \frac{L_{\ell}\alpha^2}{2}- 4(1+\frac{1}{r}) L_y^2\alpha^2m \frac{1-\lambda}{32(1+1/r)L_y^2 } )\sum_{t=0}^{T} \|\bu_{t}\|^2  \notag\\&+ (\frac{\alpha L_{\ell}  }{m}  +(1+\frac{1}{r}) L_y^2 \frac{1-\lambda}{4(1+1/r)L_y^2 } )\sum_{t=0}^{T} \| \x_{t} - \ot\bx_{t}  \|^2 \notag\\
&+  [\frac{2\alpha}{m}  L_f^2  +   \frac{1-\lambda}{32(1+1/r)L_y^2 } 3r -  \frac{1-\lambda}{32(1+1/r)L_y^2 }(1+3r)( 2\beta \frac{\mu_g L_g}{\mu_g+L_g }) ]\sum_{t=0}^{T}\|\y_{t} - \y_{t}^*\|^2\notag\\
&+ 2\alpha (L_K^2\sum_{t=0}^T (\| \x_{t+1}-\x_{t} \|^2+\| \y_{t+1}-\y_{t} \|^2) + ( \frac{C_{g_{x y}} C_{f_{y}}}{\mu_{g}}\left(1-\frac{\mu_{g}}{L_{g}}\right)^{K})^2\cdot (T+1)) \notag\\
&+ \frac{1-\lambda}{32(1+1/r)L_y^2 }(\delta-1)(1+r)^2 (2\beta \frac{1}{\mu_g+L_g } -\beta^2)\sum_{t=0}^{T}\| \v_{t} \|^2+ 4(1+\frac{1}{r}) L_y^2\alpha^2 \frac{1-\lambda}{32(1+1/r)L_y^2 } (\sum_{t=0}^{T}\|\u_{t}- \ot\bu_{t}\|^2 )\notag\\
& + L_K^2\frac{1-\lambda}{32(1+1/r)L_y^2 }[\frac{12\beta (\mu_g+L_g) }{ \mu_g L_g}+(\frac{1}{\delta}) (1+r)^2 (2\beta \frac{1}{\mu_g+L_g } )]\sum_{t=0}^T (\| \x_{t+1}-\x_{t} \|^2+\| \y_{t+1}-\y_{t} \|^2)]\notag\\
&+((1+c_1)\lambda^2-1) \sum_{t=1}^{T+1} \|\x_{t-1} -\ot\bx_{t-1} \|^2 + (1+\frac{1}{c_1}) \alpha^2\sum_{t=1}^{T+1}\|\u_{t-1}-\ot \bu_{t-1}\|^2\notag\\
&+\alpha((1+c_2)\lambda^2 -1)\sum_{t=1}^{T+1}\|\u_{t-1} -\ot\bu_{t-1} \|^2 +\alpha(1 + \frac{1}{c_2})4L_K^2 \sum_{t=1}^{T+1} (\|\x_{t}- \x_{t-1} \|^2+\beta^2 \|\v_{t-1} \|^2)
	\notag\\
\leq
&-\frac{\alpha}{2} \sum_{t=0}^{T} \|\nabla \ell(\bx_t)\|^2 - (\frac{\alpha}{2} - \frac{L_{\ell}\alpha^2}{2}- 4(1+\frac{1}{r}) L_y^2\alpha^2m \frac{1-\lambda}{32(1+1/r)L_y^2 } )\sum_{t=0}^{T} \|\bu_{t}\|^2  \notag\\&+ (\frac{\alpha L_{\ell}  }{m}  +(1+\frac{1}{r}) L_y^2 \frac{1-\lambda}{4(1+1/r)L_y^2 } )\sum_{t=0}^{T} \| \x_{t} - \ot\bx_{t}  \|^2 \notag\\
&+  [\frac{2\alpha}{m}  L_f^2  +   \frac{1-\lambda}{32(1+1/r)L_y^2 } 3r -  \frac{1-\lambda}{32(1+1/r)L_y^2 }(1+3r)( 2\beta \frac{\mu_g L_g}{\mu_g+L_g }) ]\sum_{t=0}^{T}\|\y_{t} - \y_{t}^*\|^2\notag\\
&+ 2\alpha (L_K^2\sum_{t=0}^T (8\|(\x_{t} - \ot\bx_{t}) \|^2 + 4\alpha^2 \|\u_{t} - \ot\bu_{t}\|^2 + 4\alpha^2m\|\bu_{t}\|^2 +\beta^2\| \v_{t} \|^2) + ( \frac{C_{g_{x y}} C_{f_{y}}}{\mu_{g}}\left(1-\frac{\mu_{g}}{L_{g}}\right)^{K})^2\cdot (T+1)) \notag\\
&+ \frac{1-\lambda}{32(1+1/r)L_y^2 }(\delta-1)(1+r)^2 (2\beta \frac{1}{\mu_g+L_g } -\beta^2)\sum_{t=0}^{T}\| \v_{t} \|^2+ 4(1+\frac{1}{r}) L_y^2\alpha^2 \frac{1-\lambda}{32(1+1/r)L_y^2 } (\sum_{t=0}^{T}\|\u_{t}- \ot\bu_{t}\|^2 )\notag\\
& +L_K^2 \frac{1-\lambda}{32(1+1/r)L_y^2 }[\frac{12\beta (\mu_g+L_g) }{ \mu_g L_g}+(\frac{1}{\delta}) (1+r)^2 (2\beta \frac{1}{\mu_g+L_g } )]\sum_{t=0}^T (8\|(\x_{t} - \ot\bx_{t}) \|^2 + 4\alpha^2 \|\u_{t} - \ot\bu_{t}\|^2 + 4\alpha^2m\|\bu_{t}\|^2+\beta^2 \| \v_{t} \|^2)]\notag\\
&+((1+c_1)\lambda^2-1) \sum_{t=0}^{T} \|\x_{t} -\ot\bx_{t} \|^2 + (1+\frac{1}{c_1}) \alpha^2\sum_{t=0}^{T}\|\u_{t}-\ot \bu_{t}\|^2\notag\\
&+\alpha((1+c_2)\lambda^2 -1)\sum_{t=0}^{T}\|\u_{t} -\ot\bu_{t} \|^2 +\alpha(1 + \frac{1}{c_2})4L_K^2 \sum_{t=0}^{T} (8\|(\x_{t} - \ot\bx_{t}) \|^2 + 4\alpha^2 \|\u_{t} - \ot\bu_{t}\|^2 + 4\alpha^2m\|\bu_{t}\|^2+\beta^2 \|\v_{t} \|^2)
\end{align}

where the last inequality is from eqs.(\ref{eqs30}). Choosing $c_1=c_2=\frac{1}{\lambda}-1$, we have
\begin{align}\label{l5'}
	&
	\ell(\bx_{T+1}) - \ell(\bx_{0}) + \frac{1-\lambda}{32(1+1/r)L_y^2 }  \big[\|\by_{T+1} - \y_{T+1}^*\|^2  - \|\y_0^* - \by_0\|^2\big] \notag\\&+	[\|\x_{T+1}-\ot\bx_{T+1}\|^2-	\|\x_{0}-\ot\bx_{0}\|^2 ]+
	\alpha [	\|\u_{T+1}-\ot\bu_{T+1}\|^2  -	\|\u_{0}-\ot\bu_{0}\|^2 ]
	\notag\\
	\le
	&-\frac{\alpha}{2} \sum_{t=0}^{T} \|\nabla \ell(\bx_t)\|^2 + C_1'\sum_{t=0}^{T} \|\bu_{t}\|^2  
	+C_2'\sum_{t=0}^{T} \| \x_{t} - \ot\bx_{t}  \|^2 +  C_3' \sum_{t=0}^{T}\|\y_{t} - \y_{t}^*\|^2 \notag\\
	&+C_4'\sum_{t=0}^{T}\| \v_{t} \|^2+ C_5'\sum_{t=0}^{T}\|\u_{t}- \ot\bu_{t}\|^2+2\alpha  ( \frac{C_{g_{x y}} C_{f_{y}}}{\mu_{g}}\left(1-\frac{\mu_{g}}{L_{g}}\right)^{K})^2\cdot (T+1),
\end{align}
where the constants are

\begin{align}
	&C_1'=-(\frac{\alpha}{2} -4\alpha^2m\frac{1}{1-\lambda} 4L_K^2	\alpha - \frac{L_{\ell}\alpha^2}{2}- 4(1+\frac{1}{r}) L_y^2\alpha^2m  \frac{1-\lambda}{32(1+1/r)L_y^2 } -8\alpha^3 L_K^2m\notag\\&\qquad-L_K^2[\frac{12\beta (\mu_g+L_g) }{ \mu_g L_g}+(\frac{1}{\delta}) (1+r)^2 (2\beta \frac{1}{\mu_g+L_g } )] 4\alpha^2m  \frac{1-\lambda}{32(1+1/r)L_y^2 } ),\\
	&C_2'=\big[\frac{1}{1-\lambda} 32L_K^2	\alpha +16\alpha L_K^2+\lambda-1+ \frac{\alpha L_{\ell}  }{m} +\frac{(1-\lambda)}{4} + \frac{1-\lambda}{4(1+1/r)L_y^2 }[\frac{12\beta (\mu_g+L_g) }{ \mu_g L_g}+(\frac{1}{\delta}) (1+r)^2 (2\beta \frac{1}{\mu_g+L_g } )]L_K^2 \big],\\
	&C_3'=[\frac{2\alpha}{m}  L_f^2  +  3r \frac{1-\lambda}{32(1+1/r)L_y^2 } - (1+3r)( 2\beta \frac{\mu_g L_g}{\mu_g+L_g })  \frac{1-\lambda}{32(1+1/r)L_y^2 }],\\
	&C_4'=\big[2\alpha L_K^2\beta^2+\frac{1}{1-\lambda}4L_K^2	\alpha \beta^2+(\delta-1)(1+r)^2 (2\beta \frac{1}{\mu_g+L_g } -\beta^2) \frac{1-\lambda}{32(1+1/r)L_y^2 }\notag\\&\qquad +\beta^2\cdot \frac{1-\lambda}{32(1+1/r)L_y^2 } L_K^2  [\frac{12\beta (\mu_g+L_g) }{ \mu_g L_g}+(\frac{1}{\delta}) (1+r)^2 (2\beta \frac{1}{\mu_g+L_g } )] \big] ,\\
	&C_5'=\big[4\alpha^2 \frac{1}{1-\lambda}4L_K^2 	\alpha + \frac{1}{1-\lambda} \alpha^2+ 4(1+\frac{1}{r}) L_y^2\alpha^2  \frac{1-\lambda}{32(1+1/r)L_y^2 }  +	\alpha(\lambda +8\alpha^2 L_K^2-1)\notag\\&\qquad+4\alpha^2 \cdot  \frac{1-\lambda}{32(1+1/r)L_y^2 }  L_K^2  [\frac{12\beta (\mu_g+L_g) }{ \mu_g L_g}+(\frac{1}{\delta}) (1+r)^2 (2\beta \frac{1}{\mu_g+L_g } )]\big].
\end{align}	

To ensure $C_1'\leq0$, we have

\begin{align}
	C_1'=&	-(\frac{\alpha}{2} -4\alpha^2m\frac{1}{1-\lambda} 4L_K^2	\alpha - \frac{L_{\ell}\alpha^2}{2}- 4(1+\frac{1}{r}) L_y^2\alpha^2m  \frac{1-\lambda}{32(1+1/r)L_y^2 } -8\alpha^3 L_K^2m\notag\\&\qquad-L_K^2[\frac{12\beta (\mu_g+L_g) }{ \mu_g L_g}+(\frac{1}{\delta}) (1+r)^2 (2\beta \frac{1}{\mu_g+L_g } )] 4\alpha^2m  \frac{1-\lambda}{32(1+1/r)L_y^2 } ),\notag\\
		\overset{(a)}{\leq}&	-(\frac{\alpha}{2} -4\alpha^2m\frac{1}{1-\lambda} 4L_K^2	\alpha - \frac{L_{\ell}\alpha^2}{2}- 4(1+\frac{1}{r}) L_y^2\alpha^2m  \frac{1-\lambda}{32(1+1/r)L_y^2 } -8\alpha^3 L_K^2m\notag\\&\qquad-(\frac{(1-\lambda)}{64}+\frac{(1-\lambda)}{64})4\alpha^2m  \frac{1-\lambda}{32(1+1/r)L_y^2 } ),\notag\\
	\overset{(b)}{\leq}&-\frac{\alpha}{2} +\frac{\alpha}{16}+\frac{\alpha}{16}+\frac{\alpha}{8} +\frac{\alpha}{8}+\frac{\alpha}{8}= 0,
\end{align}	

where   (a) follows from $\beta \leq \min\{\frac{3 (\mu_g+L_g) }{\mu_gL_g},  \frac{(1-\lambda)\mu_g L_g}{768 L_K^2 (\mu_g+L_g)} , \frac{(1-\lambda) (\mu_g+L_g)}{4096 L_K^2 }  \},  \delta =1/8,  r=\frac{1}{3} \beta \frac{\mu_g L_g}{\mu_g+L_g }$.
(b) follows from $\alpha \leq \min\{ \frac{1}{16 L_K} \sqrt{\frac{1-\lambda}{m}}$,

$ \frac{1}{8L_{\ell}},    \frac{r}{16m L_y^2(r+1)},\frac{1}{8L_K\sqrt{m}} ,\frac{1}{m(1-\lambda)}\}$.

To ensure $C_2'\leq-\frac{1-\lambda}{2}$, we have
\begin{align}
	C_2'&=\big[\frac{1}{1-\lambda} 32L_K^2	\alpha +\lambda-1+16\alpha L_K^2+ \frac{\alpha L_{\ell}  }{m} +\frac{(1-\lambda)}{4} + \frac{1-\lambda}{4(1+1/r)L_y^2 }[\frac{12\beta (\mu_g+L_g) }{ \mu_g L_g}+(\frac{1}{\delta}) (1+r)^2 (2\beta \frac{1}{\mu_g+L_g } )]L_K^2 \big]\notag\\
&	\leq \big[\frac{1}{1-\lambda} 32L_K^2	\alpha +\lambda-1+16\alpha L_K^2+ \frac{\alpha L_{\ell}  }{m} +\frac{(1-\lambda)}{4} + \frac{1-\lambda}{4L_y^2 }[\frac{12\beta (\mu_g+L_g) }{ \mu_g L_g}+(\frac{1}{\delta}) 4 (2\beta \frac{1}{\mu_g+L_g } )]L_K^2 \big]\notag\\
	&\overset{(a)}{\leq} \frac{1-\lambda}{4} -(1-\lambda)+ \frac{1-\lambda}{4}+ \frac{1-\lambda}{8}+ \frac{1-\lambda}{8}\leq -\frac{1-\lambda}{4},
\end{align}	
where $(a)$ follows from $\alpha \leq \min\{ \frac{(1-\lambda)^2}{128L_K^2 } , \frac{  (1-\lambda)}{4}(\frac{m}{L_{\ell}+16L_K^2m})\}$, $\beta\leq \min\{\frac{ L_y^2 \mu_g L_g}{24 L_K^2(\mu_g+L_g) }, \frac{(1-\lambda)(\mu_g+L_g)}{512L_K^2}  \}$.

To ensure $C_3'\leq- \frac{3  r^2(1-\lambda)}{32(1+r)L_y^2 }$, we have
\begin{align}
	C_3'&=[\frac{2\alpha}{m}  L_f^2  +   \frac{1-\lambda}{32(1+1/r)L_y^2 }3r -  \frac{1-\lambda}{32(1+1/r)L_y^2 }(1+3r)( 2\beta \frac{\mu_g L_g}{\mu_g+L_g }) ]\notag\\&\overset{(a)}{\leq} 3r \frac{1-\lambda}{32(1+1/r)L_y^2 }( 2\beta \frac{\mu_g L_g}{\mu_g+L_g })+\frac{1-\lambda}{32(1+1/r)L_y^2 }( \beta \frac{\mu_g L_g}{\mu_g+L_g })-\frac{1-\lambda}{32(1+1/r)L_y^2 }(1+3r)( 2\beta \frac{\mu_g L_g}{\mu_g+L_g })\notag\\ \leq& -\frac{1-\lambda}{32(1+r)L_y^2 }\beta r \frac{\mu_g L_g}{\mu_g+L_g } =-\frac{3  r^2(1-\lambda)}{32(1+r)L_y^2 } ,
\end{align}	
where $(a)$ follows from $\alpha \leq \frac{32(1+1/r)L_y^2 }{1-\lambda} \frac{3rm}{ L_f^2} ( \beta \frac{\mu_g L_g}{\mu_g+L_g })= \frac{288 r (1+r) m L_y^2 }{(1-\lambda)L_f^2} , \beta\leq \frac{3(\mu_g+L_g)}{ \mu_g L_g}, r= \frac{1}{3}\beta \frac{\mu_g L_g}{\mu_g+L_g } $.

To ensure $C_4'\leq0$, we have
\begin{align}
	C_4'&=\big[2\alpha L_K^2\beta^2+\frac{1}{1-\lambda}4L_K^2	\alpha \beta^2+(\delta-1)(1+r)^2 (2\beta \frac{1}{\mu_g+L_g } -\beta^2) \frac{1-\lambda}{32(1+1/r)L_y^2 }\notag\\&\qquad +\beta^2\cdot \frac{1-\lambda}{32(1+1/r)L_y^2 } L_K^2  [\frac{12\beta (\mu_g+L_g) }{ \mu_g L_g}+(\frac{1}{\delta}) (1+r)^2 (2\beta \frac{1}{\mu_g+L_g } )] \big]\notag\\
	&\overset{(a)}{\leq}  \big[2\alpha L_K^2\beta^2+\frac{1}{1-\lambda}4L_K^2	\alpha \beta^2+(\delta-1)(1+r)^2 (2\beta \frac{1}{\mu_g+L_g } -\beta^2)\frac{1-\lambda}{32(1+1/r)L_y^2 }+\beta^2\cdot \frac{(1-\lambda)}{32}\frac{1-\lambda}{32(1+1/r)L_y^2 }  \big] \notag\\
&	\leq\big[2\alpha L_K^2\beta^2+\frac{1}{1-\lambda}4L_K^2	\alpha \beta^2+(\delta)(1+r)^2 (2\beta \frac{1}{\mu_g+L_g } -\beta^2)\frac{1-\lambda}{32(1+1/r)L_y^2 }\notag\\
&\qquad-(1+r)^2 (2\beta \frac{1}{\mu_g+L_g } )\frac{1-\lambda}{32(1+1/r)L_y^2 }+(1+r)^2 (\beta^2)\frac{1-\lambda}{32(1+1/r)L_y^2 }+\beta^2\cdot \frac{(1-\lambda)}{32}  \frac{1-\lambda}{32(1+1/r)L_y^2 } \big]\notag\\
&\overset{(b)}{\leq} \frac{1-\lambda}{32(1+1/r)L_y^2 } [(1+r)^2\beta \frac{1}{4(\mu_g+L_g) }+(1+r)^2\beta \frac{1}{4(\mu_g+L_g) } +(1+r)^2\beta \frac{1}{4(\mu_g+L_g) } \notag\\&\qquad - (1+r)^2 2\beta \frac{1}{\mu_g+L_g } + (1+r)^2\beta \frac{1}{2(\mu_g+L_g) } + \beta \frac{1}{2(\mu_g+L_g) } ] <0,
\end{align}	
where (a) follows from $\beta \leq \min\{\frac{3 (\mu_g+L_g) }{\mu_gL_g},  \frac{(1-\lambda)\mu_g L_g}{768 L_K^2 (\mu_g+L_g)} , \frac{(1-\lambda) (\mu_g+L_g)}{4096 L_K^2 }  \},  \delta =1/8,  r=\frac{1}{3} \beta \frac{\mu_g L_g}{\mu_g+L_g }$, $(b)$ follows from $\alpha \leq\min\{  {\frac{r (1+r) (1-\lambda) }{256L_y^2 (\mu_g+L_g) L_K^2 \beta }}$,

$\frac{r(1+r)(1-\lambda)^2}{512L_y^2 (\mu_g+L_g)(L_K^2\beta)} \}, \beta\leq\min\{  \frac{1}{2(\mu_g+L_g)},  \frac{16}{(1-\lambda)(\mu_g+L_g)} \}, \delta =1/8$.

To ensure $C_5'\leq0$, we have
\begin{align}
	C_5'&=\big[4\alpha^2 \frac{1}{1-\lambda}4L_K^2 	\alpha + \frac{1}{1-\lambda} \alpha^2+ 4(1+\frac{1}{r}) L_y^2\alpha^2  \frac{1-\lambda}{32(1+1/r)L_y^2 }  +	\alpha(\lambda +8\alpha^2 L_K^2-1)\notag\\&\qquad+4\alpha^2 \cdot  \frac{1-\lambda}{32(1+1/r)L_y^2 }  L_K^2  [\frac{12\beta (\mu_g+L_g) }{ \mu_g L_g}+(\frac{1}{\delta}) (1+r)^2 (2\beta \frac{1}{\mu_g+L_g } )]\big]
	\notag\\
	&\overset{(a)}{\leq} \big[4\alpha^2 \frac{1}{1-\lambda}4L_K^2 	\alpha + \frac{1}{1-\lambda} \alpha^2+ 4(1+\frac{1}{r}) L_y^2\alpha^2  \frac{1-\lambda}{32(1+1/r)L_y^2 }+	\alpha(\lambda -1)+8\alpha^3 L_K^2+4\alpha^2 \frac{1-\lambda}{32(1+1/r)L_y^2 } \cdot \frac{(1-\lambda)}{32}\big]
	\notag\\
		&< \big[4\alpha^2 \frac{1}{1-\lambda}4L_K^2 	\alpha + \frac{1}{1-\lambda} \alpha^2+ 4(1+\frac{1}{r}) L_y^2\alpha^2  \frac{1-\lambda}{32L_y^2 }+	\alpha(\lambda -1)+8\alpha^3 L_K^2+4\alpha^2 \frac{1-\lambda}{32L_y^2 } \cdot \frac{(1-\lambda)}{32}\big]
	\notag\\
	&\overset{(b)}{\leq} 
	\frac{\alpha(1-\lambda )}{4}+\frac{\alpha(1-\lambda )}{4}+\frac{\alpha(1-\lambda )}{4}- \alpha(1-\lambda )+\frac{\alpha(1-\lambda )}{8}+\frac{\alpha(1-\lambda )}{8}= 0
\end{align}	
where (a) follows from $\beta \leq \min\{\frac{3 (\mu_g+L_g) }{\mu_gL_g},  \frac{(1-\lambda)\mu_g L_g}{768 L_K^2 (\mu_g+L_g)} , \frac{(1-\lambda) (\mu_g+L_g)}{4096 L_K^2 }  \},  \delta =1/8,  r=\frac{1}{3} \beta \frac{\mu_g L_g}{\mu_g+L_g }$, $(b)$ follows from  $ \alpha \leq \min\{ \frac{\sqrt{ 1-\lambda}}{8L_K},     \frac{(1-\lambda)^2}{4}$,

$ \frac{32L_y^2 }{16(1+\frac{1}{r})L_y^2},\sqrt{\frac{1-\lambda}{64L_K^2} }, \frac{32L_y^2}{(1-\lambda)}\}
$.

With the above conditions, we have
\begin{align}
	&
	\ell(\bx_{T+1}) - \ell(\bx_{0}) + \big[\|\by_{T+1} - \y_{T+1}^*\|^2  - \|\y_0^* - \by_0\|^2\big] \notag\\&+	[\|\x_{T+1}-\ot\bx_{T+1}\|^2-	\|\x_{0}-\ot\bx_{0}\|^2 ]+
	\alpha [	\|\u_{T+1}-\ot\bu_{T+1}\|^2  -	\|\u_{0}-\ot\bu_{0}\|^2 ]
	\notag\\
	\le
	&-\frac{\alpha}{2} \sum_{t=0}^{T} \|\nabla \ell(\bx_t)\|^2-\frac{1-\lambda}{4}\sum_{t=0}^{T} \| \x_{t} - \ot\bx_{t}  \|^2 - \frac{3  r^2(1-\lambda)}{32(1+r)L_y^2 } \sum_{t=0}^{T}\|\y_{t} - \y_{t}^*\|^2 + 2\alpha  ( \frac{C_{g_{x y}} C_{f_{y}}}{\mu_{g}}\left(1-\frac{\mu_{g}}{L_{g}}\right)^{K})^2 (T+1).
\end{align}

Thus,  we have the following convergence results:
\begin{align}
	\frac{1}{T+1}\sum_{t=0}^{T} \|\mathfrak{M}_t\|^2 \leq  	\frac{1}{(T+1)\min\{\frac{1-\lambda}{4},\frac{3  r^2(1-\lambda)}{32(1+r)L_y^2 },\frac{\alpha}{2}\}}[ \mathfrak{B}_0- \ell^*]+ C_{bias},
\end{align}
where $ \mathfrak{B}_t=	\ell(\bx_{t}) + \|\by_{t} - \y_{t}^*\|^2  +	\|\x_{t}-\ot\bx_{t}\|^2 +
\alpha 	\|\u_{t}-\ot\bu_{t}\|^2  , C_{bias}=\frac{2\alpha  ( \frac{C_{g_{x y}} C_{f_{y}}}{\mu_{g}}\left(1-\frac{\mu_{g}}{L_{g}}\right)^{K})^2}{\min\{\frac{1-\lambda}{4}, \frac{3  r^2(1-\lambda)}{32(1+r)L_y^2 },\frac{\alpha}{2}\}}$.

\textbf{Proof of Lemma \ref{lem:deterministic}:}

\begin{align} 
	& \left\|{\nabla} f\left( \ \x_{1}, \y_1 \right)-{\nabla} f\left( \ \x_{2}, \y_2 \right)\right\|^{2} \notag\\
\leq& 2\left\|\nabla_{\x} f\left( \ \x_{1}, \y_1 \right)-\nabla_{\x} f\left( \x_{2}, \y_2 \right)\right\|^{2} \notag\\& + 2\left\|\nabla_{\x\y}^2 g_{i}(\x_{1}, \y_1  ) \times 
	[ \nabla_{\y\y}^2 g_{i}(\x_{1}, \y_1  ) ]^{-1}   \nabla_{\y} f_{i}(\x_{1}, \y_1 )-
	\nabla_{\x\y}^2 g_{i}(\x_{2}, \y_2 ) \times 
	[ \nabla_{\y\y}^2 g_{i}(\x_{2}, \y_2) ]^{-1}   \nabla_{\y} f_{i}(\x_{2}, \y_2)\right\|^{2} \notag\\
\leq &2 L_{f_{x}}^{2}( \left\| \x_{1}- \x_{2}\right\|^{2} +\left\| \y_{1}- \y_{2}\right\|^{2}) +6C_{g y}^{2} \frac{1}{\mu_g^2} L_{f_{y}}^{2}( \left\| \x_{1}- \x_{2}\right\|^{2} +\left\| \y_{1}- \y_{2}\right\|^{2})+6 C_{f_{y}}^{2}\frac{1}{\mu_g^2}  L_{g_{x y}}^{2}( \left\| \x_{1}- \x_{2}\right\|^{2} +\left\| \y_{1}- \y_{2}\right\|^{2}) \notag\\&+6 C_{g_{x y}}^{2} C_{f_{y}}^{2}  \| 	[ \nabla_{\y\y}^2 g_{i}(\x_{1}, \y_1) ]^{-1}      \|^2 \cdot \| 	[ \nabla_{\y\y}^2 g_{i}(\x_{2}, \y_2) ] -	[ \nabla_{\y\y}^2 g_{i}(\x_{1}, \y_1) ]    \|^2 \cdot \| 	[ \nabla_{\y\y}^2 g_{i}(\x_{2}, \y_2) ]^{-1}      \|^2 \notag\\
\leq &2 L_{f_{x}}^{2}( \left\| \x_{1}- \x_{2}\right\|^{2} +\left\| \y_{1}- \y_{2}\right\|^{2}) +6C_{g y}^{2} \frac{1}{\mu_g^2} L_{f_{y}}^{2}( \left\| \x_{1}- \x_{2}\right\|^{2} +\left\| \y_{1}- \y_{2}\right\|^{2})+6 C_{f_{y}}^{2}\frac{1}{\mu_g^2}  L_{g_{x y}}^{2}( \left\| \x_{1}- \x_{2}\right\|^{2} +\left\| \y_{1}- \y_{2}\right\|^{2}) \notag\\&+6 C_{g_{x y}}^{2} C_{f_{y}}^{2}  \frac{1}{\mu_g^4} \cdot \| 	[ \nabla_{\y\y}^2 g_{i}(\x_{2}, \y_2) ] -	[ \nabla_{\y\y}^2 g_{i}(\x_{1}, \y_1) ]    \|^2 \notag\\
\leq &2 L_{f_{x}}^{2}( \left\| \x_{1}- \x_{2}\right\|^{2} +\left\| \y_{1}- \y_{2}\right\|^{2}) +6C_{g y}^{2} \frac{1}{\mu_g^2} L_{f_{y}}^{2}( \left\| \x_{1}- \x_{2}\right\|^{2} +\left\| \y_{1}- \y_{2}\right\|^{2})+6 C_{f_{y}}^{2}\frac{1}{\mu_g^2}  L_{g_{x y}}^{2}( \left\| \x_{1}- \x_{2}\right\|^{2} +\left\| \y_{1}- \y_{2}\right\|^{2}) \notag\\&+6 C_{g_{x y}}^{2} C_{f_{y}}^{2} L_{g_{y y}}^{2}\frac{1}{\mu_g^4} ( \left\| \x_{1}- \x_{2}\right\|^{2} +\left\| \y_{1}- \y_{2}\right\|^{2}).
\end{align}
Thus, we have 
\begin{align}
	L_{K_d}^{2}:=&2 L_{f_{x}}^{2}+6C_{g y}^{2} \frac{1}{\mu_g^2} L_{f_{y}}^{2} +6 C_{f_{y}}^{2}\frac{1}{\mu_g^2}  L_{g_{x y}}^{2} +6 C_{g_{x y}}^{2} C_{f_{y}}^{2} L_{g_{y y}}^{2}\frac{1}{\mu_g^4} .
\end{align}

\textbf{Proof of Lemma \ref{lem:stocahstic}:}
\begin{align} \label{88}
	&\mathbb{E}_k\left\|{\nabla} f\left( \ \x_{1}, \y_1 ; \bar{\xi}\right)-{\nabla} f\left( \ \x_{2}, \y_2 ; \bar{\xi}\right)\right\|^{2} \notag\\
	\stackrel{(a)}{\leq}& 2\left\|\nabla_{\x} f\left( \ \x_{1}, \y_1 ;\xi_i^0\right)-\nabla_{\x} f\left( \x_{2}, \y_2 ;\xi_i^0\right)\right\|^{2} +2 \| \nabla_{\x \y}^{2} g\left( \x_{1}, \y_1 ; \zeta_i^0\right)\left[\frac{1}{L_{g}}{K \prod_{p=1}^{k(K)}\left(\mathbf{I}-\frac{\nabla_{\mathbf{y y}}^{2} g\left(\mathbf{x}_{i, t}, \mathbf{y}_{i, t} ; \zeta_{i}^{p}\right)}{L_{g}}\right)}\right] \nabla_{\y} f\left( \x_{1}, \y_1 ;\xi_i^0\right) \notag\\
	&\quad-\nabla_{\x \y}^{2} g\left( \x_{2}, \y_2 ; \zeta_i^0\right)\left[\frac{1}{L_{g}} {K \prod_{p=1}^{k(K)}\left(\mathbf{I}-\frac{\nabla_{\mathbf{y y}}^{2} g\left(\mathbf{x}_{i, t}, \mathbf{y}_{i, t} ; \zeta_{i}^{p}\right)}{L_{g}}\right)} \right] \nabla_{\y} f\left( \x_{2}, \y_2 ;\xi_i^0\right) \|^{2} \notag\\
	\stackrel{(b)}{\leq} &2 L_{f_{x}}^{2}( \left\| \x_{1}- \x_{2}\right\|^{2} +\left\| \y_{1}- \y_{2}\right\|^{2}) +2 \| \nabla_{\x \y}^{2} g\left( \x_{1}, \y_1 ; \zeta_i^0\right)\left[\frac{1}{L_{g}} {K \prod_{p=1}^{k(K)}\left(\mathbf{I}-\frac{\nabla_{\mathbf{y y}}^{2} g\left(\mathbf{x}_{i, t}, \mathbf{y}_{i, t} ; \zeta_{i}^{p}\right)}{L_{g}}\right)}\right] \nabla_{\y} f\left( \x_{1}, \y_1 ;\xi_i^0\right) \notag\\
	&-\nabla_{\x \y}^{2} g\left( \x_{2}, \y_2 ; \zeta_i^0\right)\left[\frac{1}{L_{g}} {K \prod_{p=1}^{k(K)}\left(\mathbf{I}-\frac{\nabla_{\mathbf{y y}}^{2} g\left(\mathbf{x}_{i, t}, \mathbf{y}_{i, t} ; \zeta_{i}^{p}\right)}{L_{g}}\right)}\right] \nabla_{\y} f\left( \x_{2}, \y_2 ;\xi_i^0\right)\|^{2},\notag\\
\stackrel{(c)}{\leq} &2 L_{f_{x}}^{2}( \left\| \x_{1}- \x_{2}\right\|^{2} +\left\| \y_{1}- \y_{2}\right\|^{2}) +2( 3 C_{g y}^{2} \frac{K^{2}}{L_{g}^{2}}  \frac{1}{K}\left(\frac{L_{g}^{2}}{2 \mu_{g} L_{g}-\mu_{g}^{2}}\right) L_{f_{y}}^{2}( \left\| \x_{1}- \x_{2}\right\|^{2} +\left\| \y_{1}- \y_{2}\right\|^{2})\notag\\&+3 C_{f_{y}}^{2}\frac{K^{2}}{L_{g}^{2}} \frac{1}{K}\left(\frac{L_{g}^{2}}{2 \mu_{g} L_{g}-\mu_{g}^{2}}\right) L_{g_{x y}}^{2}( \left\| \x_{1}- \x_{2}\right\|^{2} +\left\| \y_{1}- \y_{2}\right\|^{2}) \notag\\
	&+3 C_{g_{x y}}^{2} C_{f_{y}}^{2}\frac{K^2}{L_g^2}\left\|\prod_{p=1}^{{k}(K)}\left(I-\frac{1}{L_{g}} \nabla_{\y \y}^{2} g_i\left( \x_{1}, \y_1 ; \zeta_i^p\right)\right)-\prod_{p=1}^{{k}(K)}\left(I-\frac{1}{L_{g}} \nabla_{\y \y}^{2} g_i\left( \x_{1}, \y_1 ; \zeta_i^p\right)\right)\right\|^{2})\notag\\
\stackrel{(d)}{\leq}  &2 L_{f_{x}}^{2}( \left\| \x_{1}- \x_{2}\right\|^{2} +\left\| \y_{1}- \y_{2}\right\|^{2}) +6 C_{g y}^{2} \frac{K^{2}}{L_{g}^{2}}  \frac{1}{K}\left(\frac{L_{g}^{2}}{2 \mu_{g} L_{g}-\mu_{g}^{2}}\right) L_{f_{y}}^{2}( \left\| \x_{1}- \x_{2}\right\|^{2} +\left\| \y_{1}- \y_{2}\right\|^{2})\notag\\&+6 C_{f_{y}}^{2}\frac{K^{2}}{L_{g}^{2}} \frac{1}{K}\left(\frac{L_{g}^{2}}{2 \mu_{g} L_{g}-\mu_{g}^{2}}\right) L_{g_{x y}}^{2}( \left\| \x_{1}- \x_{2}\right\|^{2} +\left\| \y_{1}- \y_{2}\right\|^{2}) \notag\\
	&+6 C_{g_{x y}}^{2} C_{f_{y}}^{2} \frac{K^2}{L_{g}^{2}} k(K) \sum_{p=1}^{k(K)} \left(1-\frac{\mu_{g}}{L_{g}}\right)^{2 (k(K)-1)}  \frac{1}{L_g^2}\left\|\nabla_{\y \y}^{2} g_i\left(\x_{1}, \y_1 ; \zeta_i^p\right)-\nabla_{\y \y}^{2} g_i\left(x_{2}, \y ; \zeta_i^p\right)\right\|^{2}\notag\\
\stackrel{(e)}{\leq}  &2 L_{f_{x}}^{2}( \left\| \x_{1}- \x_{2}\right\|^{2} +\left\| \y_{1}- \y_{2}\right\|^{2}) +6 C_{g y}^{2} \frac{K^{2}}{L_{g}^{2}}  \frac{1}{K}\left(\frac{L_{g}^{2}}{2 \mu_{g} L_{g}-\mu_{g}^{2}}\right) L_{f_{y}}^{2}( \left\| \x_{1}- \x_{2}\right\|^{2} +\left\| \y_{1}- \y_{2}\right\|^{2})\notag\\&+6 C_{f_{y}}^{2}\frac{K^{2}}{L_{g}^{2}} \frac{1}{K}\left(\frac{L_{g}^{2}}{2 \mu_{g} L_{g}-\mu_{g}^{2}}\right) L_{g_{x y}}^{2}( \left\| \x_{1}- \x_{2}\right\|^{2} +\left\| \y_{1}- \y_{2}\right\|^{2}) \notag\\
	&+6 C_{g_{x y}}^{2} C_{f_{y}}^{2} \frac{K^2}{L_{g}^{2}} k(K)^2 \left(1-\frac{\mu_{g}}{L_{g}}\right)^{2 (k(K)-1)}  \frac{1}{L_g^2}L_{g_{yy}}^{2}( \left\| \x_{1}- \x_{2}\right\|^{2} +\left\| \y_{1}- \y_{2}\right\|^{2}),
\end{align}
where (a) follows from triangle inequality and the definition of ${\nabla} f\left( \ \x, \y ; \bar{\xi}\right)$, (b) and (e) are follow from the gradient Liptichz assumption, (c) and (d) are follow from the triangle inequality and the Lemma A.1 in \cite{khanduri2021near}. 
Since we are aiming at finding constant $L_{K}$ which satisfied the Liptichz inequality for all $k$, eqs.\eqref{88} need to be hold with the  maximum value of $k(K)^2 \left(1-\frac{\mu_{g}}{L_{g}}\right)^{2 (k(K)-1)}  $. 

Thus, we have 
\begin{align}
	L_{K_s}^{2}:=&2 L_{f_{x}}^{2}+6 C_{g_{x y}}^{2} L_{f_{y}}^{2}\left(\frac{K}{2 \mu_{g} L_{g}-\mu_{g}^{2}}\right)+6 C_{f_{y}}^{2} L_{g_{x y}}^{2}\left(\frac{K}{2 \mu_{g} L_{g}-\mu_{g}^{2}}\right)\notag\\&+ 6 C_{g_{x y}}^{2} C_{f_{y}}^{2} \frac{K^2}{L_{g}^{2}} \max_{k(K)}\{k(K)^2 \left(1-\frac{\mu_{g}}{L_{g}}\right)^{2 (k(K)-1)} \} \frac{1}{L_g^2}L_{g_{yy}}^{2}\notag\\
	\leq& 2 L_{f_{x}}^{2}++6 C_{g_{x y}}^{2} L_{f_{y}}^{2}\left(\frac{K}{2 \mu_{g} L_{g}-\mu_{g}^{2}}\right)+6 C_{f_{y}}^{2} L_{g_{x y}}^{2}\left(\frac{K}{2 \mu_{g} L_{g}-\mu_{g}^{2}}\right)\notag\\&+ 6 C_{g_{x y}}^{2} C_{f_{y}}^{2} \frac{K^4}{L_{g}^{2}} \frac{1}{L_g^2}L_{g_{yy}}^{2}.
\end{align}

\end{document}